\begin{document}

\title{Black-box Generalization of Machine Teaching}

\author{\name Xiaofeng Cao\thanks{Preliminary work was done when Xiaofeng Cao was a Research Assistant at AAII, UTS.} \email xiaofeng.cao.uts@gmail.com \\
\addr    School of Artificial Intelligence  \\ Jilin University,  Changchun, 130012, China \\
\name Yaming Guo  \email guoym21@mails.jlu.edu.cn \\
\addr    School of Artificial Intelligence,\\
Jilin University, Changchun, 130012, China \\        
\name Ivor W. Tsang  \email  ivor\_tsang@ihpc.a-star.edu.sg \\
\addr Centre for Frontier AI Research (CFAIR),\\
and \\
Institute of High-Performance Computing (IHPC), \\
Agency for Science, Technology and Research (A*STAR), Singapore \\
\name  James T.~Kwok \email jamesk@cse.ust.hk\\
\addr  Department of Computer Science and Engineering\\
The Hong Kong University of Science and Technology (HKUST)
}
\editor{xxxx}
 
\maketitle

\begin{abstract}
Hypothesis-pruning maximizes the hypothesis updates for active learning to find those desired unlabeled data. An inherent assumption is that this learning manner can derive those updates into the optimal hypothesis. However, its  convergence may not be guaranteed well  if those  incremental updates are  negative and disordered. In this paper, we introduce a   black-box teaching hypothesis $h^\mathcal{T}$    
employing a tighter slack term  $\left(1+\mathcal{F}^{\mathcal{T}}(\widehat{h}_t)\right)\Delta_t$ to replace the typical $2\Delta_t$ for pruning.  Theoretically, we prove that, under the guidance of this teaching hypothesis, the learner can converge into a tighter generalization error and label complexity bound than those non-educated learners who do not receive any guidance from a teacher:1)   the generalization error upper bound can be reduced from $R(h^*)+4\Delta_{T-1}$ to approximately $R(h^{\mathcal{T}})+2\Delta_{T-1}$, and 2)  the label complexity upper bound can be decreased from $4 \theta\left(TR(h^{*})+2O(\sqrt{T})\right)$ to approximately $2\theta\left(2TR(h^{\mathcal{T}})+3 O(\sqrt{T})\right)$. To be strict with our assumption, self-improvement of teaching is firstly proposed when $h^\mathcal{T}$  loosely approximates  $h^*$.
Against learning, we further consider two teaching scenarios: teaching a white-box and black-box learner. Experiments verify this idea  and show better generalization performance
 than the fundamental active learning strategies, such as IWAL \citep{beygelzimer2009importance}, IWAL-D \citep{cortes2019active}, etc.

\end{abstract}
\begin{keywords}
  Hypothesis Pruning, Black-box Teaching,  Active Learning, Error Disagreement,    Label  Complexity. 
\end{keywords}

\section{Introduction}
Hypothesis-pruning \citep{kaariainen2004selective}  interactively prunes a pre-specified hypothesis class $\mathcal{H}$ to find one desired output, which improves the convergence of any learning algorithm using as few labels as possible, such as active learning \citep{settles2009active}. The setting is that the learner has access to a pool of unlabeled data and can query labels from human annotators for those unlabeled data, where the hypotheses are generated from a functional assumption, e.g., MLP, CNN, etc; if the hypotheses do not rely on any functional assumption, it becomes an agnostic scenario \citep{balcan2009agnostic} that studies the theoretical performance of achieving a parameterized error by controlling the label complexity bound \citep{hanneke2007bound}. In theory aspect,  a set of hypothesis update methods and bound analyses of label complexity  were presented, e.g., \citep{hanneke2012activized} \citep{beygelzimer2009importance}; in practical applications,  active learning  already benefited the image annotation \citep{beluch2018power},   semantic segmentation \citep{siddiqui2020viewal}, etc.

There is one common assumption in active learning, whether in its theoretical explorations or practical applications: the infinite hypothesis class exists the optimal hypothesis that may be incrementally updated from one passive initialization. With this assumption, Hanneke \emph{et al.} proposed an error disagreement  coefficient \citep{hanneke2014theory} to control the hypothesis updates. The policy is that any disagreement generated from the candidate hypothesis larger than the pre-defined coefficient are feasible and positive updates \citep{cao2021distribution}. Otherwise, it is an insignificant update. To minimize the label complexity   of the updating costs, Zhang \emph{et al.} presented a tighter bound  using a new term called confidence rate \citep{zhang2014beyond}. To handle noisy samples, Golovin \emph{et al.} \citep{golovin2010near} proposed  a near-optimal Bayesian policy that invokes adaptive submodularity, generalizing  submodular set functions to adaptive policies. Under homogeneous halfspace learning,  Yan \emph{et al.} \citep{yan2017revisiting} presented  the near-optimal label complexity bounds  for the bounded and  adversarial noise conditions. Due to the unknown of the final convergence of  hypothesis, each round of the hypothesis update of active learning may be   negative, which may lead the learner to be disordered when updating towards a subsequent hypothesis. Therefore, the existing theoretical results may not guarantee well the  convergence of those incremental updates in hypothesis class, that is, the optimal hypothesis $h^*$ may not be easily obtained from these updates without explicit guidance and information from $h^*$.


In this paper, we introduce a black-box teacher \citep{dasgupta2019teaching, liu2018towards} who can provide guidance for the learner but does not disclose any its cue, such as the parameter distribution and convergence condition of the learner, etc. In this way, the teacher gives a black-box hypothesis $h^\mathcal{T}$, which maintains a fair teaching scenario compared to those non-educated learners who do not receive any guidance from a teacher. With $h^\mathcal{T}$, an active learner can easily  replace the infeasible $h^*$ and select those unlabeled data which  maximize the disagreement of the feedback between teacher and learner, not maximizing the disagreement of the current and subsequent hypotheses as typical active learning. Our contributions are summarized as follows.
\begin{itemize}
\item We propose a new perspective of introducing black-box machine teaching to guide an active learner, which guarantees a desired convergence to an approximated teaching hypothesis, not the typical infeasible optimal hypothesis. 


\item We theoretically prove that, under the guidance of the teaching hypothesis, the learner can converge into  tighter generalization error and label complexity bounds than those non-educated learners without teacher guidance. To further improve its generalization, we then consider two  scenarios: teaching a white-box and black-box learner, where the self-improvement of teaching is firstly proposed to improve the initial teaching hypothesis. 

\item We present a black-box teaching-based active learning (BTAL) algorithm, which spends fewer annotations to converge, yielding more effective performance than those typical active learning strategies. 
\end{itemize}

\textbf{Organization.} Section~\ref{sec:Related Work} presents the related work. Section~\ref{sec:Error Disagreement-based Active Learning} elaborates the error disagreement-based active learning. Section~\ref{sec:Black-box Teaching} explains our black-box teaching idea. Section~\ref{sec:Black-box Teaching-based Active Learning} employs this idea to guide an active learner. Experiments are presented in Section~\ref{sec:Experiments}. We conclude this work in Section~\ref{sec:Conclusion}. 

\textbf{Notation} We introduce the set of notations  used throughout the paper. We denote by $\mathcal{X}$ the input dataset and by $\mathcal{Y}$ the output label set. Let $\mathcal{D}$ be an unkonwn distribution over $\mathcal{X}\times\mathcal{Y}$, and $\mathcal{D}_{\mathcal{X}}$ be the marginal distribution of $\mathcal{D}$ over $\mathcal{X}$. We consider the on-line active learning scenario: for each time $t \in [T]=\{1,...,T\}$, the learner receives an input sample $x_t$ drawn i.i.d.  according to $\mathcal{D}_{\mathcal{X}}$ and has to decide whether to query its label.

We denote by $\mathcal{H}=\{h:\mathcal{X}\to\mathcal{Z}\}$ the hypothesis class, where $\mathcal{Z}$ is a prediction space. Let $\ell(h(x),y)$ denotes the loss function which operates $\mathcal{Z}\times\mathcal{Y}\to [0,1]$.  For any hypothesis $h$, we denote $R(h)$ to be the generalization error: $R(h)=\mathop{\mathbb{E} }\limits_{(x,y) \sim  \mathcal{D}}\left[\ell(h(x),y)\right]$, and denote  $h^{*}=\underset{h \in \mathcal{H}}{\operatorname{argmin}}\ R(h)$ to be the optimal hypothesis in $\mathcal{H}$. We also denote by $L_t(h)$ the importance-weighted empirical error of $h$,  defined by the weighted loss of query samples w.r.t. Eq.~(\ref{equ:importance-weighted empirical error}). Let $H_t$ denote the candidate hypothesis set of the learner at $t$-time, where $H_1=\mathcal{H}$. At $t$-time, we define the current empirical optimal  hypothesis $\widehat{h}_t=\underset{h \in H_t}{\operatorname{argmin}}\ L_{t}(h)$, which has the minimum importance-weighted empirical error in $H_t$. 

We use $h^{\mathcal{T}}$ to denote the teaching hypothesis w.r.t. Assumption~\ref{ass:T-AL}, which replaces the infeasible $h^*$ to guide the active learner.  Then, we use $\mathcal{H}^{\mathcal{T}}$ to denote the teaching-hypothesis-class w.r.t. Definition~\ref{def:teacher}, which is an efficient approximation to $\mathcal{H}$. To avoid any confusion, we denote by $H^{\mathcal{T}}_t$   the candidate hypothesis set at $t$-time with respect to the teaching hypothesis $h^{\mathcal{T}}$. With a slight abuse of notation, we also use $\widehat{h}_t=\underset{h \in H^{\mathcal{T}}_t}{\operatorname{argmin}}\ L_{t}(h)$ to denote  the current empirical optimal hypothesis at $t$-time in $H^{\mathcal{T}}_t$.

\section{Main Theoretical Results}
\textbf{Main Progress} Theoretically, we present the teaching-based hypothesis pruning and its improvements to defend our teaching idea. In detail, 1) we observe whether the teaching-based hypothesis pruning strategy can prune the candidate hypothesis set faster than the error disagreement-based active learning; 2) we observe whether the optimal hypothesis can be usually maintained in the candidate hypothesis set; 3) we also present the generalization error and label complexity bounds of teaching an active learner.

\textbf{Main Assumption}
For any hypothesis class $\mathcal{H}$, assume that there exists a teaching hypothesis  $h^{\mathcal{T}}$ which  tolerates  an error bias $\epsilon$:
\begin{equation*}
\begin{split}
\mathcal{L}\left(h^*,h^{\mathcal{T}}\right)&=\mathop{\mathbb{E} }\limits_{x \sim  \mathcal{D}_{\mathcal{X}}}\left[\max_{y}\big|\ell(h^*(x),y)- \ell(h^{\mathcal{T}}(x),y)\big|\right] < \epsilon,
\end{split}
\end{equation*}
where $h^*$ is the optimal hypothesis in $\mathcal{H}$, and the disagreement of hypothesis invokes Eq.~(\ref{equ:disagreement}).

 \textbf{Main Technique}
 We still follow the pruning manner of IWAL w.r.t. Eq.~(\ref{equ:IWAL_hpy_pruning}) to supervise the updates of the candidate hypothesis set, where the main difference is that we introduce a teaching hypothesis $h^{\mathcal{T}}$ to control the slack  constraint of hypothesis pruning. Specifically, the slack  constraint $2\Delta_t$ is tightened as $\left(1+\mathcal{F}^{\mathcal{T}}(\widehat{h}_t)\right)\Delta_t$ by invoking the guidance of a teacher, where $\mathcal{F}^{\mathcal{T}}(\widehat{h}_t)$ denotes disagreement feedback with the teacher w.r.t. current empirical optimal hypothesis $\widehat{h}_t$. With such operation, the candidate hypothesis set  $ H^{\mathcal{T}}_{t+1}$ at $t+1$-time is updated by  
\begin{equation*}\label{equ:hyp-pruning}
H^{\mathcal{T}}_{t+1}\!=\! \left\{h\in H^{\mathcal{T}}_{t}:L_t(h)\leq L_t(\widehat{h}_t)+\left(1+\mathcal{F}^{\mathcal{T}}(\widehat{h}_t)\right)\Delta_t\right\},
\end{equation*}
where $H^{\mathcal{T}}_{1}=\mathcal{H}^{\mathcal{T}}$, and $\Delta_{t}=\sqrt{(2/t)\log(2t(t+1)|\mathcal{H}^{\mathcal{T}}|^{2}/ \delta)}$ for some fixed confidence parameter $\delta>0$. Therefore teaching-based hypothesis pruning is more aggressive in shrinking the candidate hypothesis set, resulting in better learning guarantees.

\textbf{Main Theorem 0.1}\emph{ For any teaching-hypothesis-class $\mathcal{H}^{\mathcal{T}}$, teaching an active learner runs on $\mathcal{H}^{\mathcal{T}}$. Given any $\delta > 0$, with a probability at least $1-\delta$, for any $T\in\mathbb{N}^{+}$, the following holds: }

1) \emph{the generalization error holds} \[R(\widehat{h}_T)\le R(h^{*})+\left(2+\mathcal{F}^{\mathcal{T}}(\widehat{h}_{T-1})+\mathcal{F}^{\mathcal{T}}(\widehat{h}_T)\right)\Delta_{T-1}+\epsilon;\]
2) \emph{if the learning problem has disagreement coefficient $\theta$, the label complexity is at most} \[
\tau_T \leq 2 \theta\left(2TR(h^{*})+\big(3+\mathcal{F}^{\mathcal{T}}(\widehat{h}_{T-1}))\big) O(\sqrt{T})+2T\epsilon\right).\]

The above Theorem shows the generalization error and label complexity bounds of black-box teaching an active learner. The performance of black-box teaching relies on two key factors: firstly, the effectiveness of active learning, i.e., the magnitude of the teacher's disagreement feedback $\mathcal{F}(\widehat{h})$ of the learner; and secondly, the quality of the teacher, which is determined by the maximum level of disagreement $\epsilon$ between the teaching hypothesis and the optimal hypothesis. In particular, when $\widehat{h}_{T-1},\widehat{h}_{T}$ are sufficiently close to $h^{\mathcal{T}}$ and $\epsilon$ is a tolerable error, the generalization error upper bound can be reduced from $R(h^*)+4\Delta_{T-1}$ to approximately $R(h^{\mathcal{T}})+2\Delta_{T-1}$, and the label complexity upper bound can be decreased from $4 \theta\left(TR(h^{*})+2O(\sqrt{T})\right)$ to approximately $2\theta\left(2TR(h^{\mathcal{T}})+3 O(\sqrt{T})\right)$.

  \textbf{More strict assumption :} \emph{If the teaching hypothesis is loosely approximated to the optimal hypothesis, i.e. $\epsilon$ is large, how do we guarantee the convergence of black-box teaching? We thus design self-improvement of teaching. }

\textbf{Main Theorem 0.2} 
\emph{For any teaching-hypothesis-class $\mathcal{H}^{\mathcal{T}}$, teaching an active learner runs on $\mathcal{H}^{\mathcal{T}}$. If the self-improvement of teaching is applied, given any $\delta > 0$, with a probability at least $1-\delta$, for any $T\in\mathbb{N}^{+}$, the following holds: 1) for any $t\in[T]$, holds $h^{\mathcal{T}}_t\in H^{\mathcal{T}}_t$; }

2) \emph{the generalization error holds}
\begin{equation*}
R(\widehat{h}_T) \leq R(h^{*})+\left(2+\mathcal{F}^{\mathcal{T}}_{T-1}(\widehat{h}_{T-1})+\mathcal{F}^{\mathcal{T}}_{T-1}(\widehat{h}_T)\right)\Delta_{T-1} + \epsilon_{T-1};
\end{equation*}
3) \emph{if the learning problem has disagreement coefficient $\theta$, the label complexity is at most}
\[
\tau_T  \leq 2 \theta\left(2TR(h^{*})+\big(3+\mathcal{F}^{\mathcal{T}}_{T-1}(\widehat{h}_{T-1})\big) O(\sqrt{T})+2T\epsilon_{T-1}\right).\]
 The Theorem  shows that the optimal hypothesis of $\bigcup\limits_{k=1}^t H^{\mathcal{T}}_k$ is maintained in the candidate hypothesis set with a high probability at any $t$-time. Recalling Corollary~\ref{cor:self-teacher}, there exists $\epsilon = \epsilon_1 \leq \epsilon_{T-1}$, which shows that self-improvement of teaching strategy can further reduce the generalization error and label complexity bounds of the learner w.r.t. Theorem~\ref{thm:TAL-learning guarantees}. Moreover, the improvement of the active learner is decided by the improvement of the black-box teacher.

Concretely, by generating new hypotheses, self-improvement of teaching strategy tightens the approximation of the teaching hypothesis to the optimal hypothesis, which provides more favorable learning guarantees for an active learner.
 
\section{Related Work}\label{sec:Related Work}
We firstly introduce the active learning including its theoretical explorations and practical applications. We then present the machine teaching which supervises a white-box and black-box  learner.
\subsection{Active Learning}
Active learning has two branches: theoretical explorations \citep{hanneke2009theoretical} and practical applications \citep{settles2009active}, where the theoretical scenario focuses on the generalization analysis on error and label complexity bounds of hypothesis class, and the practical applications generalize  those theoretical results into   weakly-supervised sampling \citep{rasmus2015semi}, Bayesian  approximation \citep{pinsler2019bayesian},  adversarial training \citep{sinha2019variational},
etc.

\textbf{Theoretical explorations} There are two perspectives on theoretical active learning: agnostic bound convergence and version space shrinking, where agnostic active learning is derived from the standard  PAC framework \citep{denis1998pac}, and version space shrinking \citep{dasgupta2004analysis,tong2001support} can be generalized from a hypothesis pruning view \citep{cortes2019active,cao2020shattering}. Under linear perceptron analysis \citep{gonen2013efficient}, Dasgupta \emph{et al.} \citep{dasgupta2011two} presented a series of upper and lower bounds on label complexity, keeping consistent convergence as query-by-committee  \citep{gilad2006query}  algorithm which employs multiple learners. Hannker then extended their bounds for more general settings, such as \citep{hanneke2007bound, hanneke2012activized} and enhanced efficiency of the error disagreement coefficient. For a uniform framework, MF Balcan   \emph{et al.} summarized those theoretical results as agnostic scenario \citep{balcan2009agnostic}. However, those results always assume a uniform distribution and noise-free setting. For bounded and  adversarial noise, Yan  \emph{et al.}   \citep{yan2017revisiting}  presented the label complexity bounds. With consistent assumption of support vectors, Tong \emph{et al.} \citep{tong2001support} use  the notion of  version space to  shrink its volume  by maximizing  the minimum
distance to any of the delineating hyperplanes. The other similar works can also be found in   \citep{warmuth2001active,golovin2010adaptive,ailon2012active,krishnamurthy2017active}. To shirk the version space into the minimal covering on the optimal hypothesis,
Cortes \emph{et al.}   presented  a region-splitting algorithm to guarantee that
the pruning in the hypothesis class can converge into the optimal hypothesis, e.g., \citep{cortes2019region, cortes2020adaptive}.

\textbf{Practical applications} To pure hypothesis class, following the error disagreement coefficient, incremental optimization is a typical way, that is, iteratively update the current learning model by maximizing its uncertainty. With this paradigm,  various of baselines were proposed, such as the maximize error reduction \citep{roy2001toward},   maximize mean standard deviation \citep{kampffmeyer2016semantic}, etc. In statistical optimization, active learning also can be redefined as experimental design \citep{wong1994comparing} including 
the A, D, E, and T optimal design,   where the A-optimal design  minimizes the average variance of the parameter estimates, 
D-optimal design maximize the differential Shannon information content of the parameter estimates,
 E-optimal design maximizes the minimum eigenvalue of the information matrix, and T-optimal design  methods maximizes the trace of the information matrix. 
 On the Bayesian setting,  active learning is defined as the Bayesian approximation on likelihood \citep{orekondy2019knockoff} or maximize  the information gain \citep{kirsch2019batchbald}, etc.
 In recent years, benefiting from the powerful modeling of deep neural networks, deep active learning was proposed and brought new interests, e.g., Monte-Carlo dropout with active learning \citep{gal2017deep}, deep active annotation \citep{huijser2017active},  adversarial training  with active querying set \citep{sinha2019variational}, dual adversarial network for deep active learning \citep{wang2020dual}, consistency-based semi-supervised active learning \citep{gao2020consistency}, etc.

\subsection{Machine Teaching}
Machine teaching \citep{zhu2018overview} studies an inverse problem of machine learning, that is, finding the optimal teaching examples if the teacher already knows the learning parameters. There are two scenarios for machine teaching: white-box teaching and black-box teaching \citep{dasgupta2019teaching, liu2018towards}.

\textbf{White-box teaching} Machine teaching assumes that the teacher knows the optimal learning parameter of the   learner. It has provided theoretical analyses for    linear regression learner,  logistic regression learner, and SVM learner, etc., to find their best teaching examples, where those examples may update a random initial training parameter into their optimal. In other words, it provides an optimal control on the parameter exploration for a  learner.  To  improve the theoretical guarantees, Goldman \emph{et al.}  \citep{goldman1995complexity} presented a complete of theoretical concepts including teaching dimension \citep{liu2016teaching,doliwa2014recursive}, teaching complexity  \citep{hanneke2007teaching}, etc.   Zhu  \emph{et al.} \citep{zhu2017no} then presented the teaching theories for multiple learners.
In practical scenarios, this teaching style has been widely used in teacher-student learning model, e.g., \citep{wang2020progressive,matiisen2019teacher,meng2018adversarial,meng2019domain}. 

\textbf{Black-box teaching} There is one more challenging problem that  the teacher may not disclose any cue of the distribution of the learning parameters, that is, the learner may be a black-box.  In such scenario,  Liu \emph{et al.}  \citep{liu2018towards}  considered the cross-space machine teaching, which invokes different feature representations for teacher and student.  Dasgupta \emph{et al.} \citep{dasgupta2019teaching} proposed to shrink the training sets for any family of classifiers by finding an approximately-minimal subset of training instances that yields consistent properties as the original hypothesis class. Cicalese \emph{et al.}\citep{cicalese2020teaching} consider the case where the teacher can only aim at having the learner converge to a well-available approximation of the optimal hypothesis. Cao \emph{et al.} \citep{cao2021_distribution}  proposed to use iterative distribution matching to teach a black-box learner.  Orekondy \emph{et al.} \citep{orekondy2019knockoff}  steal the model functionality  feedback to guide a black-box  learner, that is, approximating their parameter distributions.

\section{Error Disagreement-based Active Learning}\label{sec:Error Disagreement-based Active Learning}
In this section, we introduce the error disagreement and its generalized learning algorithm with guarantees. We consider the error disagreement-based active learning because it can be applied with different classifiers and can be studied under a variety of noise models. See \citet{hanneke2014theory} for more introduced and established results of the error disagreement-based active learning.

\subsection{Error Disagreement}
Given a hypothesis class $\mathcal{H}$, active learning tries to reduce the maximum disagreement of hypothesis in $\mathcal{H}$ by invoking a disagreement function $\mathcal{L}(\cdot,\cdot)$ \citep{cortes2019active}. 
   


For any hypothesis pair  $\{ h, h'\} \subseteq \mathcal{H}$, $\mathcal{L}(h, h')$ measures their disagreement by the error disagreements, i.e.,
\begin{equation}\label{equ:disagreement}
\mathcal{L}(h,h')= \mathop{\mathbb{E} }\limits_{x \sim  \mathcal{D}_{\mathcal{X}}}\left[\max_{y} \big|\ell(h(x),y)-\ell(h'(x),y)\big|\right],
\end{equation}
where $\ell(h(x),y)$ denotes the loss function which operates $\mathcal{X}\times\mathcal{Y}\to [0,1]$. The calculation of error disagreement w.r.t. Eq.~(\ref{equ:disagreement}) does not require labels, i.e., it can be calculated over the unlabeled dataset. Given an i.i.d. sample $x_1,x_2,\dots,x_n$ from $\mathcal{D}_{\mathcal{X}}$, the error disagreement is the empirical average $\mathcal{L}(h,h')=\frac{1}{n}\sum_{i=1}^n\left[\max_{y} \big|\ell(h(x_i),y)-\ell(h'(x_i),y)\big|\right]$. As an example of binary classification, we solve for $\max\left\{\big|\ell(h(x_i),+1)-\ell(h'(x_i),+1)\big|,\big|\ell(h(x_i),-1)-\ell(h'(x_i),-1)\big|\right\}$ to obtain the error disagreement between $h$ and $h'$ over sample $x_i$.


\subsection{Learning Algorithm}\label{subsec:Learning Algorithm}

Importance weighted active learning (IWAL) \citep{beygelzimer2009importance} invokes the error disagreement to prune the hypothesis class $\mathcal{H}$, which is a typical error disagreement-based active learning algorithm. 


Given an initial candidate hypothesis set $H_1=\mathcal{H}$, IWAL receives $x_t\in\mathcal{X}$ drawn i.i.d. according to $\mathcal{D}_{\mathcal{X}}$. At $t$-time, the algorithm decides whether to query the label of $x_t$ and prunes the candidate hypothesis set $H_t$ to $H_{t+1}$.

\textbf{Query} At $t$-time, IWAL does a Bernoulli trial $Q_t$ with success probability $p_t$, where $p_t$ is the maximum error disagreement of $H_t$ over $x_t$:
\begin{equation}\label{equ:IWAL_pt}
p_t=\max_{h,h'\in H_t}\max_{y} \big|\ell(h(x_t),y)- \ell(h'(x_t),y)\big|.
\end{equation}
If $Q_t=1$, the algorithm queries the label $y_t$ of $x_t$. 

\textbf{Hypothesis pruning} Let $L_t(h)$ be the importance-weighted empirical error of hypothesis $h \in \mathcal{H}$, there exists:
\begin{equation}\label{equ:importance-weighted empirical error}
L_{t}(h)=\sum_{k=1}^{t} \frac{Q_{k}}{p_{k}} \ell\left(h\left(x_{k}\right), y_{k}\right),
\end{equation}
where its minimizer is $\widehat{h}_t=\underset{h \in H_t}{\operatorname{argmin}}\ L_{t}(h)$. With the expectation taken over all the random variables, we know $\mathbb{E}\left[L_t(h)\right]=R(h)$.  At $t$-time, IWAL prunes  $H_t$ to $H_{t+1}$ through $L_t(\widehat{h}_t)$ and an allowed slack $2\Delta_t$:
\begin{equation}\label{equ:IWAL_hpy_pruning}
H_{t+1}=\left\{h\in H_{t}:L_t(h)\leq L_t(\widehat{h}_t)+2\Delta_t\right\},
\end{equation}
where $\Delta_{t}=\sqrt{(2/t)\log(2t(t+1)|\mathcal{H}|^{2}/ \delta)}$ for a fixed confidence parameter $\delta>0$. At $T$-time, IWAL returns the current empirical optimal hypothesis $\widehat{h}_{T}$ as the final hypothesis output.

We add some remarks on evaluating the quality of active learning algorithms. The following Remark~\ref{re:success} presents the necessary conditions for a feasible active learning algorithm.
\begin{remark}\label{re:success}
Whether the optimal hypothesis $h^*$ can usually be maintained in the candidate hypothesis set $H_t$ is a necessary condition for the success of an active learning algorithm.
\end{remark}

The following Remark~\ref{re:quality} presents two factors for evaluating the quality of an active learning algorithm.
\begin{remark}\label{re:quality}
Two factors measure the quality of an active learning algorithm: 1)  tighter bound on generalization error $R(\widehat{h}_T)$, where $\widehat{h}_T$ is the hypothesis returned by the algorithm after $T$ rounds, and 2) tighter bound on label complexity $\tau_T$, where $\tau_T$ is the expected value of label numbers queried by the active learning algorithm within $T$ rounds.
\end{remark}
With Remarks~\ref{re:success} and \ref{re:quality}, to guarantee a high-quality learning performance, any active learning algorithm needs to satisfy the three factors, including 1) maintaining the optimal hypothesis, 2) tighter bound on generalization error, and 3) tighter bound on label complexity.

\subsection{Learning Guarantees}
We present the learning guarantees analysis for IWAL. Firstly, we introduce another definition of the disagreement with respect to hypothesis. For any two hypotheses $h,h'$, let $\rho(h,h')$ denote their disagreement:
\begin{equation}\label{equ:new disagreement}
\rho(h,h')=\mathop{\mathbb{E} }\limits_{(x,y) \sim  \mathcal{D}}\left[\big|\ell(h(x),y)-\ell(h'(x),y)\big|\right].
\end{equation}
The new disagreement $\rho(\cdot,\cdot)$ can derive a more favorable learning guarantees for the error disagreement-based active learning. \citet{cortes2019active} shows that the new disagreement $\rho(\cdot,\cdot)$ removes a constant $K_{\ell}$ from the label complexity bound of IWAL compared to the error disagreement $\mathcal{L}(\cdot,\cdot)$ w.r.t. Eq.~(\ref{equ:disagreement}). Based on the new disagreement $\rho(\cdot,\cdot)$, we can define a ball with respect to the hypothesis. Given $r>0$, let $B(h^*,r)$ denote a ball centered in $h^*\in \mathcal{H}$ with the radius $r$: $B(h^*,r)=\left\{h \in \mathcal{H}: \rho(h^*,h)\leq r \right\}$, where $h^*$ is the optimal hypothesis of  $\mathcal{H}$. The error  disagreement coefficient is then defined as the minimum value of $\theta$ for all $r>0$:
\begin{equation}\label{equ:error disagreement coefficient}
\theta \geq  \mathop{\mathbb{E} }\limits_{x \sim  \mathcal{D}_{\mathcal{X}}} \left[ \max_{h \in B(h^*, r)} \max_y \frac{\big|\ell(h(x),y)-\ell(h^*(x),y)\big|}{r} \right].
\end{equation}

The error disagreement coefficient $\theta$ is a complexity measure widely used for label complexity analysis in disagreement-based active learning. See \citet{hanneke2014theory} for more analysis of disagreement coefficient in active learning. Based on the error disagreement coefficient,   guarantees of the learning algorithm is proved by \citet{beygelzimer2009importance} and improved by \citet{cortes2019active}.


\begin{theorem}\label{thm:IWAL}
For any hypothesis class $\mathcal{H}$, {\rm IWAL} runs on $\mathcal{H}$. Given any $\delta > 0$, with probability at least $1-\delta$, for any $T\in\mathbb{N}^{+}$, the following holds: 1) for any $t\in[T]$, holds $h^{*}\in H_t$; 2) the generalization error holds $R(\widehat{h}_T)\le R(h^*)+4\Delta_{T-1}$; 3) if the learning problem has a disagreement coefficient $\theta$, the label complexity is at most $\tau_T \leq 4 \theta\left(TR(h^{*})+2O(\sqrt{T})\right).$
\end{theorem}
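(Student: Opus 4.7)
The plan is to derive all three parts of Theorem~\ref{thm:IWAL} from a single uniform deviation inequality for the importance-weighted empirical error. Specifically, I would first prove the key lemma that, with probability at least $1-\delta$, for every $t \in \mathbb{N}^+$ and every pair $h, h' \in \mathcal{H}$,
\begin{equation*}
\bigl|\,(L_t(h)-L_t(h')) - (R(h)-R(h'))\,\bigr| \leq \Delta_t.
\end{equation*}
The natural route is a Freedman-type martingale concentration inequality applied to the sequence $Z_k = (Q_k/p_k)\bigl[\ell(h(x_k),y_k)-\ell(h'(x_k),y_k)\bigr] - (R(h)-R(h'))$, exploiting the fact that the conditional variance of $Q_k/p_k$ is $1/p_k$ while $p_k$ is precisely the envelope used in the weight. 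A union bound over the $|\mathcal{H}|^2$ hypothesis pairs and over the discrete time horizon using weights $1/(t(t+1))$ produces the logarithmic factor inside $\Delta_t$.

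Once that uniform bound is in hand, parts (1) and (2) follow quickly. For (1), I would argue by induction: assuming $h^* \in H_t$, the deviation inequality applied to $h=h^*$ and $h'=\widehat{h}_t$ gives $L_t(h^*) - L_t(\widehat{h}_t) \leq (R(h^*)-R(\widehat{h}_t)) + \Delta_t \leq \Delta_t \leq 2\Delta_t$, so $h^* \in H_{t+1}$ by the pruning rule in Eq.~(\ref{equ:IWAL_hpy_pruning}). For (2), since $\widehat{h}_T \in H_T$ and $\widehat{h}_{T-1}$ is the empirical minimizer, $L_{T-1}(\widehat{h}_T) \leq L_{T-1}(\widehat{h}_{T-1}) + 2\Delta_{T-1} \leq L_{T-1}(h^*) + 2\Delta_{T-1}$; converting both sides back to true risks via the deviation bound absorbs two more $\Delta_{T-1}$ terms, yielding $R(\widehat{h}_T) \leq R(h^*) + 4\Delta_{T-1}$.

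For part (3), the key step is to show the candidate set stays confined to a small $\rho$-ball around $h^*$. Any $h \in H_t$ satisfies $R(h) - R(h^*) = O(\Delta_{t-1})$ by the same argument as in (2), and since $\rho(h,h^*) \leq R(h) + R(h^*)$ (because losses are in $[0,1]$), I get $H_t \subseteq B\bigl(h^*, 2R(h^*) + O(\Delta_{t-1})\bigr)$. Then the query probability satisfies
\begin{equation*}
\mathbb{E}_{x_t}[p_t] \;\leq\; 2\,\mathbb{E}_{x_t}\!\left[\max_{h \in H_t}\max_y \bigl|\ell(h(x_t),y)-\ell(h^*(x_t),y)\bigr|\right] \;\leq\; 2\theta\bigl(2R(h^*) + O(\Delta_{t-1})\bigr),
\end{equation*}
by the definition of the disagreement coefficient in Eq.~(\ref{equ:error disagreement coefficient}). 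Summing over $t=1,\dots,T$ and using $\sum_{t=1}^{T}\Delta_{t-1} = O(\sqrt{T})$ (up to logarithmic factors absorbed into the big-$O$) gives $\tau_T \leq 4\theta\bigl(TR(h^*) + 2\,O(\sqrt{T})\bigr)$.

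The main obstacle is the deviation inequality itself: the importance weights $1/p_k$ are not uniformly bounded, so a naive Hoeffding argument fails. The delicate point is that the event on which $p_k$ is small is precisely the event on which all $h \in H_k$ agree on $x_k$, so the increment $Z_k$ vanishes there — one must exploit this coupling through a Bernstein/Freedman-style bound on the conditional second moment rather than bounding the range of each term. Converting the resulting exponential tail, optimized jointly over the horizon via the $1/(t(t+1))$ weighting, is what produces the precise form of $\Delta_t$ declared in the algorithm.
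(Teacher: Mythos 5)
Your overall architecture is exactly the standard IWAL argument that the paper relies on (the paper itself does not reprove Theorem~\ref{thm:IWAL}, citing \citet{beygelzimer2009importance} and \citet{cortes2019active}, but its own proofs of Lemma~\ref{lem:T-AL} and Theorem~\ref{thm:TAL-learning guarantees} are the teaching-augmented versions of precisely this chain: uniform deviation lemma via martingale concentration and a union bound over pairs and times, induction for retaining $h^*$, empirical-to-true conversion for the error bound, and ball-confinement plus the disagreement coefficient for the label complexity). The one place you genuinely diverge is the concentration step, and there your diagnosis is off: you claim a ``naive Hoeffding argument fails'' because $1/p_k$ is unbounded, but for any pair $h,h'\in H_T\subseteq H_k$ one has $|\ell(h(x_k),y_k)-\ell(h'(x_k),y_k)|\le p_k$ \emph{deterministically}, since $p_k$ is by definition the maximum disagreement over the current candidate set containing both hypotheses; hence $\frac{Q_k}{p_k}\bigl|\ell(h(x_k),y_k)-\ell(h'(x_k),y_k)\bigr|\le 1$ and each martingale increment has range at most $1+|R(h)-R(h')|\le 2$, so plain Azuma suffices. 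This is exactly the observation in Eq.~(\ref{equ_app:pt}) of the paper's proof of Lemma~\ref{lem:T-AL}. Your Freedman route is not wrong --- bounding the conditional second moment by $\sum_k (\ell-\ell')^2/p_k\le\sum_k p_k$ would also close the argument and can even shave a constant --- but the stated motivation for it is incorrect, and the simpler bounded-increment argument is what the literature and the paper actually use. A related bookkeeping issue: Azuma with increment range $2$ yields a deviation of $2\Delta_t$, not the $\Delta_t$ you assert in your key lemma; with your claimed $\Delta_t$ the computation in part~(2) would give $3\Delta_{T-1}$ rather than the $4\Delta_{T-1}$ you write down, so the constants as stated are internally inconsistent (though either accounting recovers the theorem's bound). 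Parts~(1) and~(3) are otherwise faithful to the paper's approach, including the key step $\rho(h,h^*)\le R(h)+R(h^*)$, the containment $H_t\subseteq B(h^*,2R(h^*)+O(\Delta_{t-1}))$, the triangle-inequality factor of $2$ in bounding $\mathbb{E}[p_t]$, and the summation $\sum_{t=1}^{T}\Delta_{t-1}=O(\sqrt{T})$.
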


Theorem~\ref{thm:IWAL} guarantees the following facts. 1) The optimal hypothesis $h^{*}$ is maintained in the candidate hypothesis set $H_t$ with high probability, which is the key to the success of IWAL. 2) As time $T$ increases, $\Delta_{T-1}$ gradually tends to zero, leading to a tighter approximation of $\widehat{h}_T$ to $h^{*}$ in terms of $R(\widehat{h}_T)-R(h^*)$. 3) The upper bound on the number of query labels of IWAL depends on the disagreement coefficient $\theta$.

\section{Black-box Teaching}\label{sec:Black-box Teaching}

Error disagreement-based active learning may not easily prune the candidate hypotheses into their optimum. We thus introduce a teaching hypothesis that guides an active learner to converge with tighter bounds on generalization error and label complexity.

Theoretical analysis   prove  that 1) we  introduce a teaching hypothesis to guide the hypothesis pruning, which results in faster  pruning speed but  always  retains the optimal hypothesis in the candidate hypothesis set; 2) to improve the initial teaching hypothesis, self-improvement is applied and shows better learning guarantee than any initialization on the teacher. Related proofs are presented in Appendix~\ref{app_proof}.

\subsection{Teaching Assumption}\label{subsec:Teaching Assumption}
The primary assumption of our black-box teaching idea is formed as follows.
\begin{assumption}\label{ass:T-AL}
For any hypothesis class $\mathcal{H}$, assume that there exists a teaching hypothesis  $h^{\mathcal{T}}$ which  tolerates  an error bias $\epsilon$:
\begin{equation*}
\begin{split}
\mathcal{L}\left(h^*,h^{\mathcal{T}}\right)&=\mathop{\mathbb{E} }\limits_{x \sim  \mathcal{D}_{\mathcal{X}}}\left[\max_{y}\big|\ell(h^*(x),y)- \ell(h^{\mathcal{T}}(x),y)\big|\right] < \epsilon,
\end{split}
\end{equation*}
where $h^*$ is the optimal hypothesis in $\mathcal{H}$, and the disagreement of hypothesis invokes Eq.~(\ref{equ:disagreement}).
\end{assumption}
Note that Assumption~\ref{ass:T-AL} presents a formal description for our teaching idea, and we also consider a loose approximation of $h^{\mathcal{T}}$ in Section~\ref{subsec:Self-improvement of Teaching}, i.e., $\epsilon$ is large. In real-world scenarios, it is a more practical problem and can help to improve  the credibility of our assumption.

With Assumption~\ref{ass:T-AL}, we then  construct an approximation to the hypothesis class  $\mathcal{H}$.
\begin{definition}\label{def:teacher}\textbf{Teaching-hypothesis-class.}
For any hypothesis class $\mathcal{H}$, $h^{\mathcal{T}}$ is a teaching hypothesis that satisfies Assumption~\ref{ass:T-AL}. If there exists a hypothesis class $\mathcal{H}^{\mathcal{T}}$ s.t. $h^{\mathcal{T}}=\underset{h \in \mathcal{H}^{\mathcal{T}}}{\operatorname{argmin}}\ R(h)$, then $\mathcal{H}^{\mathcal{T}}$ is called the teaching-hypothesis-class of $\mathcal{H}$.
\end{definition}

By introducing a black-box teaching hypothesis $h^{\mathcal{T}}$, we define a new hypothesis class $\mathcal{H}^{\mathcal{T}}$ related to $\mathcal{H}$, which uses $h^{\mathcal{T}}$ to replace the infeasible $h^*$. The following two feasible corollaries show the validity of Definition~\ref{def:teacher}.


\begin{corollary}\label{cor:complexity}
For any hypothesis class $\mathcal{H}$ and given $h^{\mathcal{T}}$,  
$\mathcal{H}^{\mathcal{T}}$ is a teaching-hypothesis-class of $\mathcal{H}$. Then there exists the inequality $\mathop{\mathbb{E}}\limits_{(x,y) \sim  \mathcal{D}}\left[ \ell(h^*(x),y)- \ell(h^{\mathcal{T}}(x),y) \right]\leq 0$, which requires that there are at least   $\tau\geq 0$ hypotheses with tighter generalization errors than $h^{\mathcal{T}}$. Therefore, the teaching-hypothesis-class $\mathcal{H}^\mathcal{T}$ has fewer candidate hypotheses  than $\mathcal{H}$, that is, $|\mathcal{H}^\mathcal{T}|\leq|\mathcal{H}|$.
\end{corollary}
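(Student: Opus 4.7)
The plan is to split the corollary into three nested assertions and dispatch them by direct manipulation of risks, without invoking anything beyond Assumption~\ref{ass:T-AL} and Definition~\ref{def:teacher}. For the first assertion, I would rewrite the target expectation as a difference of generalization errors, namely $\mathbb{E}_{(x,y)\sim\mathcal{D}}[\ell(h^*(x),y)-\ell(h^{\mathcal{T}}(x),y)] = R(h^*)-R(h^{\mathcal{T}})$, by linearity of expectation together with the definition $R(h)=\mathbb{E}_{(x,y)\sim\mathcal{D}}[\ell(h(x),y)]$ from the Notation section. Because $h^* = \arg\min_{h\in\mathcal{H}} R(h)$ and $h^{\mathcal{T}}$ is drawn from the same pool of candidate hypotheses (so that the teaching-hypothesis-class is a restriction of $\mathcal{H}$), we obtain $R(h^*)\le R(h^{\mathcal{T}})$ immediately, which gives the stated inequality.

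For the second assertion, I would count the strictly-better hypotheses: define $S := \{h\in\mathcal{H} : R(h)<R(h^{\mathcal{T}})\}$ and $\tau := |S|$. Step one forces $h^* \in S$ whenever $R(h^*)<R(h^{\mathcal{T}})$, so $\tau \ge 1$ in that case; and if $R(h^*) = R(h^{\mathcal{T}})$ then $\tau \ge 0$ holds trivially, matching the weak form $\tau \ge 0$ stated in the corollary. For the third assertion, I would read Definition~\ref{def:teacher} directly: since $h^{\mathcal{T}} = \arg\min_{h\in\mathcal{H}^{\mathcal{T}}} R(h)$, no element of $S$ can belong to $\mathcal{H}^{\mathcal{T}}$, for otherwise it would attain a strictly smaller risk than the declared minimizer. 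Combining this with the containment $\mathcal{H}^{\mathcal{T}}\subseteq\mathcal{H}$ yields $\mathcal{H}^{\mathcal{T}}\subseteq\mathcal{H}\setminus S$, and therefore $|\mathcal{H}^{\mathcal{T}}| \le |\mathcal{H}|-\tau \le |\mathcal{H}|$.

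The main obstacle is conceptual rather than computational: Definition~\ref{def:teacher} only stipulates that $h^{\mathcal{T}}$ minimizes the risk over $\mathcal{H}^{\mathcal{T}}$ and does not formally require $\mathcal{H}^{\mathcal{T}}\subseteq\mathcal{H}$, so without this containment both the inequality $R(h^*)\le R(h^{\mathcal{T}})$ and the cardinality bound $|\mathcal{H}^{\mathcal{T}}|\le|\mathcal{H}|$ lose their anchor. I would therefore flag at the start of the proof the natural reading — also consistent with the subsequent initialization $H^{\mathcal{T}}_1 = \mathcal{H}^{\mathcal{T}}$ inside the IWAL-style pruning rule — that the teaching-hypothesis-class is formed by discarding from $\mathcal{H}$ precisely the hypotheses that would prevent $h^{\mathcal{T}}$ from being the risk minimizer. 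Once this convention is stated, the three steps above are essentially set-theoretic bookkeeping and need no further machinery.
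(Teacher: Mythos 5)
Your proposal is correct and follows essentially the same route as the paper, which offers no separate appendix proof for this corollary and instead relies on exactly the inline chain you formalize: rewrite the expectation as $R(h^*)-R(h^{\mathcal{T}})\leq 0$ via the optimality of $h^*$ over $\mathcal{H}$, count the $\tau\geq 0$ hypotheses with strictly smaller risk than $h^{\mathcal{T}}$, and discard them to get $|\mathcal{H}^{\mathcal{T}}|\leq|\mathcal{H}|$. Your treatment is in fact more careful than the paper's, since you explicitly flag the containment $\mathcal{H}^{\mathcal{T}}\subseteq\mathcal{H}$ (equivalently $h^{\mathcal{T}}\in\mathcal{H}$), which Definition~\ref{def:teacher} never states but which both the sign of the inequality and the cardinality bound silently require.
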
 
 
For any learning algorithm, Corollary~\ref{cor:complexity} shows that hypothesis pruning in $\mathcal{H}^{\mathcal{T}}$ may have lower complexity than that of $\mathcal{H}$. Corollary~\ref{cor:complexity} gives the validity of $\mathcal{H}^{\mathcal{T}}$ in terms of complexity, and the following corollary gives the validity of $\mathcal{H}^{\mathcal{T}}$  in terms of error. 


\begin{corollary}\label{cor:error} For any hypothesis class $\mathcal{H}$ and given $h^{\mathcal{T}}$, $\mathcal{H}^{\mathcal{T}}$ is a teaching-hypothesis-class of $\mathcal{H}$. Based on properties of expectation, we have $\big|R(h^{\mathcal{T}})-R(h^*)\big|\leq\mathcal{L}(h^*,h^{\mathcal{T}})<\epsilon$. 


\end{corollary}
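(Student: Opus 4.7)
The plan is to show $|R(h^{\mathcal{T}}) - R(h^*)| \leq \mathcal{L}(h^*, h^{\mathcal{T}})$ by a direct calculation pushing absolute values inside the expectation, and then to invoke Assumption~\ref{ass:T-AL} for the strict bound by $\epsilon$. First, I would expand the difference of risks using linearity of expectation:
\[
R(h^{\mathcal{T}}) - R(h^*) = \mathop{\mathbb{E}}\limits_{(x,y) \sim \mathcal{D}}\bigl[\ell(h^{\mathcal{T}}(x), y) - \ell(h^*(x), y)\bigr].
\]
Then I would apply Jensen's inequality (equivalently, the triangle inequality for expectations) to obtain
\[
\bigl|R(h^{\mathcal{T}}) - R(h^*)\bigr| \leq \mathop{\mathbb{E}}\limits_{(x,y) \sim \mathcal{D}}\bigl[\bigl|\ell(h^{\mathcal{T}}(x), y) - \ell(h^*(x), y)\bigr|\bigr].
\]

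Next, I would use the pointwise bound $|\ell(h^{\mathcal{T}}(x), y) - \ell(h^*(x), y)| \leq \max_{y'} |\ell(h^{\mathcal{T}}(x), y') - \ell(h^*(x), y')|$, valid for every $(x,y)$. Since the right-hand side depends only on $x$, the expectation over the joint distribution $\mathcal{D}$ marginalizes into an expectation over $\mathcal{D}_{\mathcal{X}}$, yielding exactly $\mathcal{L}(h^*, h^{\mathcal{T}})$ as defined in Eq.~(\ref{equ:disagreement}). Concatenating these inequalities gives $|R(h^{\mathcal{T}}) - R(h^*)| \leq \mathcal{L}(h^*, h^{\mathcal{T}})$. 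Finally, invoking Assumption~\ref{ass:T-AL} directly yields $\mathcal{L}(h^*, h^{\mathcal{T}}) < \epsilon$, completing the chain.

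The proof is essentially a routine manipulation, so there is no deep obstacle. The only subtlety worth flagging is the marginalization step: after taking $\max_{y'}$, the integrand is a function of $x$ alone, and Fubini/Tonelli legitimately collapses the $(x,y)$-expectation to an $x$-expectation under $\mathcal{D}_{\mathcal{X}}$. This uses the assumption that $\ell$ takes values in $[0,1]$ (so integrability holds) together with the compatibility of $\mathcal{D}$ and $\mathcal{D}_{\mathcal{X}}$ declared in the Notation section.
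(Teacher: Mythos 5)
Your proof is correct and matches the paper's own reasoning: the corollary is asserted ``based on properties of expectation,'' and the identical chain $|R(h)-R(h')| \leq \mathbb{E}_{(x,y)\sim\mathcal{D}}|\ell(h(x),y)-\ell(h'(x),y)| \leq \mathbb{E}_{x\sim\mathcal{D}_{\mathcal{X}}}[\max_y|\ell(h(x),y)-\ell(h'(x),y)|] = \mathcal{L}(h,h')$ appears verbatim in the paper's proof of Lemma~\ref{lem:T-AL}, after which Assumption~\ref{ass:T-AL} gives the strict bound by $\epsilon$. Your extra care about the marginalization/integrability step is a fine addition but does not change the argument.
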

For any learning algorithm, Corollary~\ref{cor:error} shows that the error of hypothesis pruning in $\mathcal{H}^{\mathcal{T}}$ is almost equal to the error of hypothesis pruning in $\mathcal{H}$. In conclusion, Corollary~\ref{cor:complexity}-\ref{cor:error} initially demonstrates the validity of our teaching idea. The subsequent theorems in this paper strictly give the improved bounds on generalization error and label complexity.




\subsection{Teaching Model}
Before precisely presenting our theoretical results, we set some notes and explain the black-box teaching model in more detail. We use $\mathcal{F}^{\mathcal{T}}(\cdot)= \mathcal{L}(h^{\mathcal{T}},\cdot)$ to denote a disagreement feedback function with operation $\mathcal{H}\to[0,1]$.

\textbf{Teacher}: the teacher has a teaching hypothesis $h^{\mathcal{T}}$, which only can provide the disagreement feedback $\mathcal{F}^{\mathcal{T}}(\cdot)$ to the \textbf{Learner}.

\textbf{Learner}: the learner has a teaching-hypothesis-class $\mathcal{H}^{\mathcal{T}}$, which prunes $\mathcal{H}^{\mathcal{T}}$ by identifying the disagreement feedback $\mathcal{F}^{\mathcal{T}}(\cdot)$ with the \textbf{Teacher}.

At $t$-time, the learner receives a sample $x_t$ and decides whether to query the label $y_t$ of $x_t$. Then the learner prunes the candidate hypothesis set based on the disagreement feedback $\mathcal{F}^{\mathcal{T}}(\cdot)$ with the teacher. The goal of the learner is to return a desired hypothesis $\widehat{h}$ from $\mathcal{H}^{\mathcal{T}}$ by using fewer labeled samples, where $\widehat{h}$ has the minimum generalization error on the  input dataset $\mathcal{X}$.

The black-box teaching scenario we consider is simple and practical, which merely necessitates the teacher’s ability to provide the learner with disagreement feedback. In this setting, the teacher is required to be  an end-to-end  model which only provides output as the feedback of the input and does not know the model configuration. Therefore, the learner only requires very limited information from the teacher, which maintains a fair teaching scenario compared to those non-educated learners who do not receive any guidance from a teacher.

We follow the rules of notations used in a standard hypothesis pruning  like IWAL of Section~\ref{sec:Error Disagreement-based Active Learning}. Let $H^{\mathcal{T}}_t$ denote the candidate hypothesis set of the learner at $t$-time, where $H^{\mathcal{T}}_1 = \mathcal{H}^{\mathcal{T}}$. We denote by $\widehat{h}_t=\underset{h \in H^{\mathcal{T}}_t}{\operatorname{argmin}}\ L_{t}(h)$ the current empirical optimal  hypothesis, which has the minimum importance-weighted empirical error in $H^{\mathcal{T}}_t$. At $T$-time, the algorithm returns the current empirical  optimal hypothesis $\widehat{h}_T$ as the final hypothesis output.


\subsection{Teaching Improves Hypothesis Pruning}\label{subsec:Teaching Improves hypothesis pruning}
We below present the teaching-based hypothesis pruning and its theoretical improvements to defend our teaching idea. In detail, 1) we observe whether the teaching-based hypothesis pruning strategy can prune the candidate hypothesis set faster than the error disagreement-based active learning; 2) we observe whether the optimal hypothesis can be usually maintained in the candidate hypothesis set; 3) we also present the generalization error and label complexity bounds of teaching an active learner.



\textbf{Teaching-based hypothesis pruning} We still follow the pruning manner of IWAL w.r.t. Eq.~(\ref{equ:IWAL_hpy_pruning}) to supervise the updates of the candidate hypothesis set, where the main difference is that we introduce a teaching hypothesis $h^{\mathcal{T}}$ to control the slack  constraint of hypothesis pruning. Specifically, the slack  constraint $2\Delta_t$ is tightened as $\left(1+\mathcal{F}^{\mathcal{T}}(\widehat{h}_t)\right)\Delta_t$ by invoking the guidance of a teacher, where $\mathcal{F}^{\mathcal{T}}(\widehat{h}_t)$ denotes disagreement feedback with the teacher w.r.t. current empirical optimal hypothesis $\widehat{h}_t$. With such operation, the candidate hypothesis set  $ H^{\mathcal{T}}_{t+1}$ at $t+1$-time is updated by  
\begin{equation}\label{equ:hyp-pruning}
H^{\mathcal{T}}_{t+1}\!=\! \left\{h\in H^{\mathcal{T}}_{t}:L_t(h)\leq L_t(\widehat{h}_t)+\left(1+\mathcal{F}^{\mathcal{T}}(\widehat{h}_t)\right)\Delta_t\right\},
\end{equation}
where $H^{\mathcal{T}}_{1}=\mathcal{H}^{\mathcal{T}}$, and $\Delta_{t}=\sqrt{(2/t)\log(2t(t+1)|\mathcal{H}^{\mathcal{T}}|^{2}/ \delta)}$ for some fixed confidence parameter $\delta>0$. Therefore teaching-based hypothesis pruning is more aggressive in shrinking the candidate hypothesis set, resulting in better learning guarantees.

\textbf{Pruning speed} With a fast hypothesis pruning speed, the candidate hypothesis set $H^{\mathcal{T}}_t$ is shrunk rapidly, which reduces the learning difficulty, easily converting into $h^\mathcal{T}$. The primary determinant of pruning speed is the pruning slack term, i.e.,  $\left(1+\mathcal{F}^{\mathcal{T}}(\widehat{h}_t)\right)\Delta_t$ of Eq.~(\ref{equ:hyp-pruning}). With Eqs.~(\ref{equ:IWAL_hpy_pruning}) and (\ref{equ:hyp-pruning}), there exists  $\left(1+\mathcal{F}^{\mathcal{T}}(\widehat{h}_t)\right)\Delta_t \leq 2\Delta_t$, which means that the teaching-based hypothesis pruning employs a tighter slack term  to shrink $H^{\mathcal{T}}_t$ than IWAL. It  then leads to a faster pruning speed for our teaching strategy. Therefore, our teaching-based hypothesis pruning may be easier to prune the candidate hypotheses into their optimum than the error disagreement-based hypothesis pruning.



\textbf{Retain the teaching hypothesis} To evaluate Remark~\ref{re:success} of teaching-based hypothesis pruning, we present our analysis. The following lemma relates importance-weighted empirical error to the generalization error.

\begin{lemma}{\label{lem:T-AL}}
For any teaching-hypothesis-class $\mathcal{H}^{\mathcal{T}}$, teaching an active learner runs on $\mathcal{H}^{\mathcal{T}}$, where the sequence of candidate hypothesis sets satisfies $H^{\mathcal{T}}_{t+1}\subseteq H^{\mathcal{T}}_{t}$ with $H^{\mathcal{T}}_1 = \mathcal{H}^{\mathcal{T}}$. Given any $\delta > 0$, with a probability at least $1-\delta$, for any $T\in\mathbb{N}^{+}$ and for all $h,h'\in H^{\mathcal{T}}_T$, the following inequality holds:
\begin{equation*}
\left|L_T(h)-L_T(h')-\left(R(h)-R(h')\right)\right|\leq\left(1+\mathcal{L}(h,h')\right)\Delta_T,
\end{equation*}
where $\Delta_{T}=\sqrt{(2/T)\log(2T(T+1)|\mathcal{H}^{\mathcal{T}}|^{2}/ \delta)}$.
\end{lemma}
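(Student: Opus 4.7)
The plan is to view $L_T(h)-L_T(h')$ as the terminal value of a martingale whose predictable quadratic variation is controlled by the disagreement $\mathcal{L}(h,h')$, then apply a Bernstein/Freedman-type concentration inequality and close with a union bound over $\mathcal{H}^{\mathcal{T}}\times\mathcal{H}^{\mathcal{T}}$ and $T$. First I fix a pair $h,h'\in\mathcal{H}^{\mathcal{T}}$ and introduce the per-round contributions
\[
Z_k(h,h')\;:=\;\frac{Q_k}{p_k}\bigl(\ell(h(x_k),y_k)-\ell(h'(x_k),y_k)\bigr),\qquad k=1,\dots,T.
\]
Conditioning on the history through round $k$ and using that $Q_k$ is a Bernoulli trial with success probability $p_k$ (and that $(x_k,y_k)\sim\mathcal{D}$), the conditional mean of $Z_k$ is $\ell(h(x_k),y_k)-\ell(h'(x_k),y_k)$, so $\mathbb{E}[Z_k]=R(h)-R(h')$. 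Centering yields a martingale difference sequence whose partial sum equals $L_T(h)-L_T(h')-T\bigl(R(h)-R(h')\bigr)$ (up to the implicit normalization making $\mathbb{E}[L_T(h)]=R(h)$).

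\textbf{Range and variance control.} The key step is to use the nesting $H^{\mathcal{T}}_{t+1}\subseteq H^{\mathcal{T}}_t$: any $h,h'\in H^{\mathcal{T}}_T$ also lie in every $H^{\mathcal{T}}_k$ with $k\le T$, so by the definition of $p_k$ as the maximum error disagreement over $H^{\mathcal{T}}_k$ (Eq.~(\ref{equ:IWAL_pt}) applied to the teaching candidate set),
\[
p_k\;\ge\;\bigl|\ell(h(x_k),y_k)-\ell(h'(x_k),y_k)\bigr|.
\]
This gives the range bound $|Z_k|\le 1$ and the conditional second-moment bound $\mathbb{E}[Z_k^2\mid x_k,y_k]\le\bigl|\ell(h(x_k),y_k)-\ell(h'(x_k),y_k)\bigr|$, whose unconditional expectation is at most $\mathcal{L}(h,h')$ by Eq.~(\ref{equ:disagreement}). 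Applying Freedman's inequality (or Bernstein for martingales) then yields, with probability at least $1-\delta'$,
\[
\bigl|L_T(h)-L_T(h')-T(R(h)-R(h'))\bigr|\;\lesssim\;\sqrt{\mathcal{L}(h,h')\,T\log(1/\delta')}+\log(1/\delta').
\]
Dividing by the implicit $T$ normalization, the variance term contributes the $\mathcal{L}(h,h')\,\Delta_T$ piece and the range term the $\Delta_T$ piece, giving the advertised $(1+\mathcal{L}(h,h'))\Delta_T$ form after using the simple bound $\sqrt{ab}\le\tfrac{1}{2}(a+b)$ and absorbing constants.

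\textbf{Union bound and the main obstacle.} To make the inequality uniform, I apply a union bound over the $|\mathcal{H}^{\mathcal{T}}|^{2}$ pairs and over rounds $T$ with budget $\delta_T=\delta/(T(T+1))$, so that $\sum_T \delta_T=\delta$ and the resulting log factor becomes exactly $\log\!\bigl(2T(T+1)|\mathcal{H}^{\mathcal{T}}|^{2}/\delta\bigr)$ as in $\Delta_T$. The nesting guarantees that on this high-probability event the bound for a fixed pair is automatically valid for every $h,h'\in H^{\mathcal{T}}_T\subseteq\mathcal{H}^{\mathcal{T}}$. The main obstacle I expect is dialing the Freedman constants so that the two concentration contributions collapse neatly into $(1+\mathcal{L}(h,h'))\Delta_T$ rather than a coarser $C(1+\mathcal{L}(h,h'))\Delta_T$; this requires exploiting $\mathcal{L}(h,h')\le 1$ and the precise $2/T$ scaling inside $\Delta_T$ to recover the stated constant. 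Everything else (centering, conditional expectation identities, and the union bound bookkeeping) is routine given that $p_k$ dominates the relevant loss difference on the nested hypothesis sequence.
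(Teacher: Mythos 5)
Your skeleton matches the paper's: the same importance-weighted martingale differences, the same use of the nesting $H^{\mathcal{T}}_T\subseteq\cdots\subseteq H^{\mathcal{T}}_k$ to get $|\ell(h(x_k),y_k)-\ell(h'(x_k),y_k)|\le p_k$, and the same $\delta/(T(T+1)|\mathcal{H}^{\mathcal{T}}|^2)$ union-bound bookkeeping. Where you genuinely diverge is the concentration step, and this is exactly where your acknowledged obstacle lives. The paper does not use Freedman or any variance control at all: it observes that the centered difference $U_t=\frac{Q_t}{p_t}[\ell(h(x_t),y_t)-\ell(h'(x_t),y_t)]-[R(h)-R(h')]$ has \emph{range} at most $1+\mathcal{L}(h,h')$, because the importance-weighted term is bounded by $1$ (via $p_t$) and the centering term is bounded by $|R(h)-R(h')|\le\mathcal{L}(h,h')$; rescaling by $1/(1+\mathcal{L}(h,h'))$ and applying plain Azuma then yields $(1+\mathcal{L}(h,h'))\Delta_T$ with the exact stated constant and no slack. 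In your route the factor $\mathcal{L}(h,h')$ instead enters as a variance proxy, which after Freedman produces $\sqrt{\mathcal{L}(h,h')}\,\Delta_T$ plus an additive $O(\Delta_T^2)$ range term; converting $\sqrt{\mathcal{L}}\,\Delta_T\le\tfrac12(1+\mathcal{L})\Delta_T$ is fine, but absorbing the $\Delta_T^2$ term requires $\Delta_T$ to be small, which fails for small $T$ (there you would need the separate observation that the inequality is trivially true once $\Delta_T\ge1$, since both sides of the claim are then bounded by $1+\mathcal{L}(h,h')$), and the final constant depends on which version of Freedman you invoke. So your approach is salvageable and in principle sharper when $\mathcal{L}(h,h')$ is small, but it buys nothing for this lemma as stated: the target bound is a range-type bound, and the paper's elementary Azuma argument hits it exactly. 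If you want the clean constant, move the $\mathcal{L}(h,h')$ from the variance to the range, as the paper does.
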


Lemma~\ref{lem:T-AL} indicates that the generalization error is concentrated near its importance-weighted empirical error for every pair $\{h,h'\}\subseteq\mathcal{H}$. Based on Lemma~\ref{lem:T-AL}, we can derive the  Theorem~\ref{thm:retain the teacher}, which connects the importance-weighted empirical error of the teacher and the learner.
\begin{theorem}\label{thm:retain the teacher}
For any teaching-hypothesis-class $\mathcal{H}^{\mathcal{T}}$, teaching an active learner runs on $\mathcal{H}^{\mathcal{T}}$. Given any $\delta > 0$, with a probability at least $1-\delta$, for any $t\in\mathbb{N}^{+}$, the following inequality holds:
\begin{equation*}
L_t(h^{\mathcal{T}})-L_t(\widehat{h}_t)\leq\left(1+\mathcal{F}^{\mathcal{T}}(\widehat{h}_t)\right)\Delta_t.  
\end{equation*}
\end{theorem}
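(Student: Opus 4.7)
The plan is to read off this inequality directly from Lemma~\ref{lem:T-AL}, specialised to the pair $(h,h')=(h^{\mathcal{T}},\widehat{h}_t)$, and then absorb the generalization-error gap $R(h^{\mathcal{T}})-R(\widehat{h}_t)$ using the optimality of $h^{\mathcal{T}}$ in $\mathcal{H}^{\mathcal{T}}$.

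First, I would verify that Lemma~\ref{lem:T-AL} is applicable to this pair. Even though the lemma is phrased for $h,h'\in H^{\mathcal{T}}_T$, the underlying high-probability event is produced by a union bound over $\mathcal{H}^{\mathcal{T}}\times\mathcal{H}^{\mathcal{T}}$ (this is what yields the $|\mathcal{H}^{\mathcal{T}}|^{2}$ factor inside $\Delta_t$), so on that good event the concentration inequality holds simultaneously for every pair in $\mathcal{H}^{\mathcal{T}}$, and in particular for $(h^{\mathcal{T}},\widehat{h}_t)$. Substituting gives
\[
L_t(h^{\mathcal{T}})-L_t(\widehat{h}_t)\;\le\;R(h^{\mathcal{T}})-R(\widehat{h}_t)+\bigl(1+\mathcal{L}(h^{\mathcal{T}},\widehat{h}_t)\bigr)\Delta_t,
\]
and by definition $\mathcal{L}(h^{\mathcal{T}},\widehat{h}_t)=\mathcal{F}^{\mathcal{T}}(\widehat{h}_t)$.

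The second step is to remove the $R(h^{\mathcal{T}})-R(\widehat{h}_t)$ term. By Definition~\ref{def:teacher}, $h^{\mathcal{T}}=\arg\min_{h\in\mathcal{H}^{\mathcal{T}}}R(h)$. The pruning rule (\ref{equ:hyp-pruning}) together with $H^{\mathcal{T}}_1=\mathcal{H}^{\mathcal{T}}$ gives by an easy induction $H^{\mathcal{T}}_t\subseteq\mathcal{H}^{\mathcal{T}}$, so $\widehat{h}_t\in H^{\mathcal{T}}_t\subseteq\mathcal{H}^{\mathcal{T}}$ and hence $R(h^{\mathcal{T}})\le R(\widehat{h}_t)$. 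Dropping this non-positive term yields the claimed bound.

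The main obstacle is essentially bookkeeping: clarifying that Lemma~\ref{lem:T-AL} does not require both arguments to survive in the current candidate set, since its good event is defined over all of $\mathcal{H}^{\mathcal{T}}$. Once that is acknowledged the argument is a one-line substitution, and no further calculation is required.
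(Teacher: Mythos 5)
Your proposal is correct and follows essentially the same route as the paper's proof: specialise Lemma~\ref{lem:T-AL} to the pair $(h^{\mathcal{T}},\widehat{h}_t)$, identify $\mathcal{L}(h^{\mathcal{T}},\widehat{h}_t)=\mathcal{F}^{\mathcal{T}}(\widehat{h}_t)$, and drop the non-positive gap $R(h^{\mathcal{T}})-R(\widehat{h}_t)$ via the optimality of $h^{\mathcal{T}}$ in $\mathcal{H}^{\mathcal{T}}$. The one place your justification deviates is the claim that the lemma applies to this pair merely because the union bound ranges over $\mathcal{H}^{\mathcal{T}}\times\mathcal{H}^{\mathcal{T}}$ — strictly, the Azuma step also needs each increment bounded by $1+\mathcal{L}(h,h')$, which uses membership of the pair in every $H^{\mathcal{T}}_s$ up to the current time, so the clean repair is an induction on $t$ showing $h^{\mathcal{T}}$ survives each pruning step; the paper elides this in exactly the same way, so it is not a gap relative to the paper's own argument.
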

Theorem~\ref{thm:retain the teacher} shows that the teaching hypothesis $h^{\mathcal{T}}$  satisfies the pruning rule with a high probability at any $t$-time. And $h^{\mathcal{T}}$ is the optimal hypothesis in the teaching-hypothesis-class $\mathcal{H}^{\mathcal{T}}$. Thus teaching-based hypothesis pruning maintains the optimal hypothesis in the candidate hypothesis set with a high probability.



\textbf{Learning guarantees} To demonstrate the improvement of teaching an active learner (w.r.t. Remark~\ref{re:quality}), we present the learning guarantee for teaching-based hypothesis pruning in Theorem~\ref{thm:TAL-learning guarantees}.
\begin{theorem}\label{thm:TAL-learning guarantees}
For any teaching-hypothesis-class $\mathcal{H}^{\mathcal{T}}$, teaching an active learner runs on $\mathcal{H}^{\mathcal{T}}$. Given any $\delta > 0$, with a probability at least $1-\delta$, for any $T\in\mathbb{N}^{+}$, the following holds: 

1) the generalization error holds \[R(\widehat{h}_T)\le R(h^{*})+\left(2+\mathcal{F}^{\mathcal{T}}(\widehat{h}_{T-1})+\mathcal{F}^{\mathcal{T}}(\widehat{h}_T)\right)\Delta_{T-1}+\epsilon;\]
2) if the learning problem has disagreement coefficient $\theta$, the label complexity is at most \[
\tau_T \leq 2 \theta\left(2TR(h^{*})+\big(3+\mathcal{F}^{\mathcal{T}}(\widehat{h}_{T-1}))\big) O(\sqrt{T})+2T\epsilon\right).\]
\end{theorem}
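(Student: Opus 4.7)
The plan is to reduce both claims to three previously established ingredients: the two-sided concentration in Lemma~\ref{lem:T-AL}, the retention of $h^{\mathcal{T}}$ in every $H^{\mathcal{T}}_t$ via Theorem~\ref{thm:retain the teacher}, and the error bias $R(h^{\mathcal{T}}) - R(h^*) \le \epsilon$ from Corollary~\ref{cor:error}. Everything else is bookkeeping around the teaching-aware pruning rule Eq.~(\ref{equ:hyp-pruning}).

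For part 1, I would first instantiate Lemma~\ref{lem:T-AL} on the pair $(\widehat{h}_T,h^{\mathcal{T}})$ at round $T-1$, which gives $R(\widehat{h}_T)-R(h^{\mathcal{T}}) \le L_{T-1}(\widehat{h}_T)-L_{T-1}(h^{\mathcal{T}})+(1+\mathcal{F}^{\mathcal{T}}(\widehat{h}_T))\Delta_{T-1}$, recognizing $\mathcal{L}(\widehat{h}_T,h^{\mathcal{T}})=\mathcal{F}^{\mathcal{T}}(\widehat{h}_T)$. Then I would control the empirical-error difference in two moves: since $\widehat{h}_T \in H^{\mathcal{T}}_T$, the pruning rule at $T-1$ gives $L_{T-1}(\widehat{h}_T)\le L_{T-1}(\widehat{h}_{T-1})+(1+\mathcal{F}^{\mathcal{T}}(\widehat{h}_{T-1}))\Delta_{T-1}$; and Theorem~\ref{thm:retain the teacher} (iterated inductively, with a single union bound over $t$) puts $h^{\mathcal{T}}\in H^{\mathcal{T}}_{T-1}$, so the empirical minimizer $\widehat{h}_{T-1}$ satisfies $L_{T-1}(\widehat{h}_{T-1})\le L_{T-1}(h^{\mathcal{T}})$. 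Chaining yields $R(\widehat{h}_T)-R(h^{\mathcal{T}}) \le (2+\mathcal{F}^{\mathcal{T}}(\widehat{h}_{T-1})+\mathcal{F}^{\mathcal{T}}(\widehat{h}_T))\Delta_{T-1}$, and adding $R(h^{\mathcal{T}})\le R(h^*)+\epsilon$ from Corollary~\ref{cor:error} closes the generalization bound.

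For part 2, I would work from the query-probability formulation $\tau_T=\sum_{t=1}^T \mathbb{E}[p_t]$ with $p_t$ defined by Eq.~(\ref{equ:IWAL_pt}) over $H^{\mathcal{T}}_t$. A triangle inequality through $h^*$ gives $p_t \le 2\max_{h\in H^{\mathcal{T}}_t}\max_y\bigl|\ell(h(x_t),y)-\ell(h^*(x_t),y)\bigr|$, and Eq.~(\ref{equ:error disagreement coefficient}) then yields $\mathbb{E}[p_t]\le 2\theta\max_{h\in H^{\mathcal{T}}_t}\rho(h,h^*)$. To bound $\rho(h,h^*)$ for every surviving $h$, I would replay the part-1 argument at step $t-1$: the pruning rule plus $L_{t-1}(\widehat{h}_{t-1})\le L_{t-1}(h^{\mathcal{T}})$ plus Lemma~\ref{lem:T-AL} give $R(h)-R(h^{\mathcal{T}})\le (2+\mathcal{F}^{\mathcal{T}}(\widehat{h}_{t-1})+\mathcal{F}^{\mathcal{T}}(h))\Delta_{t-1}\le (3+\mathcal{F}^{\mathcal{T}}(\widehat{h}_{t-1}))\Delta_{t-1}$, and then $\rho(h,h^*)\le R(h)+R(h^*)\le 2R(h^*)+\epsilon+(3+\mathcal{F}^{\mathcal{T}}(\widehat{h}_{t-1}))\Delta_{t-1}+\epsilon$, where the extra $\epsilon$ absorbs $R(h^{\mathcal{T}})-R(h^*)$. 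Summing in $t$ and using $\sum_{t=1}^T\Delta_{t-1}=O(\sqrt{T})$ (from $\Delta_t=\Theta(\sqrt{\log t/t})$) produces the stated $2\theta(2TR(h^*)+(3+\mathcal{F}^{\mathcal{T}}(\widehat{h}_{T-1}))O(\sqrt{T})+2T\epsilon)$.

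The main obstacle I anticipate is not the concentration or the pruning algebra, both of which are straightforward once Lemma~\ref{lem:T-AL} and Theorem~\ref{thm:retain the teacher} are in hand, but rather collapsing the time-indexed sum $\sum_{t=1}^T (3+\mathcal{F}^{\mathcal{T}}(\widehat{h}_{t-1}))\Delta_{t-1}$ into the final $(3+\mathcal{F}^{\mathcal{T}}(\widehat{h}_{T-1}))O(\sqrt{T})$ form: this needs either a uniform bound $\mathcal{F}^{\mathcal{T}}(\widehat{h}_{t-1})\le \mathcal{F}^{\mathcal{T}}(\widehat{h}_{T-1})$ justified by the nesting $H^{\mathcal{T}}_{T-1}\subseteq H^{\mathcal{T}}_{t-1}$, or an absorption of the time-varying factor into the $O(\sqrt{T})$ via a worst-case bound $\mathcal{F}^{\mathcal{T}}\le 1$. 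A secondary care-point is the union bound used when iterating Theorem~\ref{thm:retain the teacher} to guarantee $h^{\mathcal{T}}\in H^{\mathcal{T}}_t$ uniformly in $t$; the factor $|\mathcal{H}^{\mathcal{T}}|^2$ and the $2t(t+1)$ inside $\Delta_t$ are chosen exactly so that the combined failure probability remains at most $\delta$, so I would verify this bookkeeping once and then treat the good event as fixed for the rest of the proof.
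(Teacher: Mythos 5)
Your proposal is correct and follows essentially the same route as the paper: part~1 is the identical chain (Lemma~\ref{lem:T-AL} on the pair $(\widehat{h}_T,h^{\mathcal{T}})$ at round $T-1$, the pruning rule of Eq.~(\ref{equ:hyp-pruning}) to bound $L_{T-1}(\widehat{h}_T)$, the minimality of $\widehat{h}_{T-1}$ together with $h^{\mathcal{T}}\in H^{\mathcal{T}}_{T-1}$ to bound $-L_{T-1}(h^{\mathcal{T}})$, and Corollary~\ref{cor:error} to pass from $R(h^{\mathcal{T}})$ to $R(h^*)+\epsilon$). The one substantive difference is in part~2: the paper runs the triangle inequality through $h^{\mathcal{T}}$ and encloses $H^{\mathcal{T}}_t$ in a ball $B(h^{\mathcal{T}},r_t)$ with $r_t=2R(h^{\mathcal{T}})+\left(2+\mathcal{F}^{\mathcal{T}}(\widehat{h}_{t-1})+\max_{h}\mathcal{F}^{\mathcal{T}}(h)\right)\Delta_{t-1}$, invoking ``the definition of $\theta$'' for that ball, whereas you center everything at $h^*$, which is actually the center appearing in the paper's own definition of $\theta$ in Eq.~(\ref{equ:error disagreement coefficient}); your version avoids the paper's silent re-centering at the cost of carrying an extra $\epsilon$ per round, and both land on the same final expression. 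On the care-point you flag: the nesting $H^{\mathcal{T}}_{T-1}\subseteq H^{\mathcal{T}}_{t-1}$ does \emph{not} give $\mathcal{F}^{\mathcal{T}}(\widehat{h}_{t-1})\leq\mathcal{F}^{\mathcal{T}}(\widehat{h}_{T-1})$ (the successive empirical minimizers need not have monotone disagreement with the teacher), so only your second option, absorbing the time-varying factor via $\mathcal{F}^{\mathcal{T}}\leq 1$ into the $O(\sqrt{T})$, is sound; the paper itself writes this step as an equality without justification, so your proposal is, if anything, more careful there.
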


Theorem~\ref{thm:TAL-learning guarantees} shows the generalization error and label complexity bounds of black-box teaching an active learner. The performance of black-box teaching relies on two key factors: firstly, the effectiveness of active learning, i.e., the magnitude of the teacher's disagreement feedback $\mathcal{F}(\widehat{h})$ of the learner; and secondly, the quality of the teacher, which is determined by the maximum level of disagreement $\epsilon$ between the teaching hypothesis and the optimal hypothesis. In particular, when $\widehat{h}_{T-1},\widehat{h}_{T}$ are sufficiently close to $h^{\mathcal{T}}$ and $\epsilon$ is a tolerable error, the generalization error upper bound can be reduced from $R(h^*)+4\Delta_{T-1}$ to approximately $R(h^{\mathcal{T}})+2\Delta_{T-1}$, and the label complexity upper bound can be decreased from $4 \theta\left(TR(h^{*})+2O(\sqrt{T})\right)$ to approximately $2\theta\left(2TR(h^{\mathcal{T}})+3 O(\sqrt{T})\right)$.


In conclusion, by improving hypothesis pruning, black-box teaching guides an active learner to converge into tighter bounds on generalization error and label complexity.


\section{Self-improvement of Teaching}\label{subsec:Self-improvement of Teaching}
Section~\ref{subsec:Teaching Improves hypothesis pruning} demonstrates that black-box teaching an active learner is effective. However, for Assumption~\ref{ass:T-AL}, if the teaching hypothesis is loosely approximated to the optimal hypothesis, i.e. $\epsilon$ is large, how do we guarantee the convergence of black-box teaching? We thus design a self-improvement of teaching strategy, which generates new hypotheses after each hypothesis pruning and determines whether to update the teacher. We then observe the improvement of teaching performance and further analyze gains for the active learner of the bounds on generalization error and label complexity.





 
\textbf{New hypotheses} Since hypothesis pruning is a process of shrinking the candidate hypothesis set, generating new hypotheses should not interrupt this process. More specifically, we require that the sequence of candidate hypothesis sets satisfy $\mathrm{Conv}(H^{\mathcal{T}}_{t+1}) \subseteq  \mathrm{Conv}(H^{\mathcal{T}}_{t})$, where $\mathrm{Conv}(\cdot)$ is the convex hull of a set. To avoid any confusion, we denote by $H'_t$ the candidate hypothesis set after pruning at $t$-time. Therefore, after pruning the hypothesis set from $H^{\mathcal{T}}_t$ to $H'_t$ at $t$-time, we generate new hypotheses $\tilde{h}$ from the convex hull of $H'_t$:
\begin{equation}\label{equ:new_hypothesis}
\begin{split}
&\tilde{h}=\sum_{j}^{m}\lambda_j h_j,h_j\in H'_{t},\\
\end{split}
\end{equation}
subjected to $ \sum_j^m \lambda_j=1,\lambda_j\in[0,1]$, where $m$ denotes the size of $H'_{t}$. We use Eq.~(\ref{equ:new_hypothesis}) to generate $n$ hypotheses for obtaining the hypothesis set $\widetilde{H}'_t = \left\{\tilde{h}_i;i\in[n]\right\}$ and combine it with $H'_{t}$ as the candidate hypothesis set next time: $H^{\mathcal{T}}_{t+1}=H'_{t}\cup \widetilde{H}'_t$.



\textbf{Self-improvement} The new hypotheses may perform better than the teaching hypothesis, that is, $R(\tilde{h})\leq R(h^{\mathcal{T}})$. By adding a restriction to the loss function, we give a condition for determining whether the teacher improves or not. We assume $\ell(h(x),y)=\phi\left(yh(x)\right)$, where  $\phi$ is functional non-increasing and convex. In short, $\ell(h(x),y)$ can be specified as 0-1, hinge, logistic loss functions, etc. Under the additional assumptions of the loss function, the following lemma reveals the variation of the maximum error disagreement in the candidate hypothesis set.





\begin{lemma}\label{lem:max_disagreement}
For any teaching-hypothesis-class $\mathcal{H}^{\mathcal{T}}$, teaching an active learner runs on $\mathcal{H}^{\mathcal{T}}$. If the loss function can be rewritten to form $\ell(h(x),y)=\phi(yh(x))$ and the function $\phi$ is non-increasing and convex, for any candidate hypothesis set $H^{\mathcal{T}}_t$ and for all $x\in\mathcal{X}$, the following equation holds:
\begin{equation*}
\max_{h,h'\in \mathrm{Conv}(H^{\mathcal{T}}_t)}\max_{y} \left|\ell(h(x),y)- \ell(h'(x),y)\right|=
\max_{h,h'\in H^{\mathcal{T}}_t}\max_{y} \left|\ell(h(x),y)- \ell(h'(x),y)\right|,
\end{equation*}
where $\mathrm{Conv}(H^{\mathcal{T}}_t)$ is the convex hull of the hypothesis set $H^{\mathcal{T}}_t$.
\end{lemma}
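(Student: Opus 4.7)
The $\geq$ direction is immediate since $H^{\mathcal{T}}_t \subseteq \mathrm{Conv}(H^{\mathcal{T}}_t)$, so the maximum on the right ranges over a subset of the pairs considered on the left. The plan for $\leq$ is to fix $x \in \mathcal{X}$ and an arbitrary label $y$, establish equality of the inner $\max_{(h,h')}$ for each such $(x,y)$, and then take $\max_y$ at the end, using the fact that $\max_y$ and $\max_{(h,h')}$ commute.

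The key observation I would exploit is that for fixed $x$ the evaluation map $h \mapsto h(x)$ is a linear functional. Hence if $h = \sum_i \alpha_i h_i \in \mathrm{Conv}(H^{\mathcal{T}}_t)$ with $h_i \in H^{\mathcal{T}}_t$, $\alpha_i \geq 0$, $\sum_i \alpha_i = 1$, then $yh(x) = \sum_i \alpha_i\, yh_i(x)$. As $h$ ranges over the convex hull, the scalar $yh(x)$ therefore traces out exactly the real interval $[u_{\min}, u_{\max}]$, where $u_{\min} = \min_{h_i \in H^{\mathcal{T}}_t} yh_i(x)$ and $u_{\max} = \max_{h_i \in H^{\mathcal{T}}_t} yh_i(x)$, and both endpoints are attained at actual elements of $H^{\mathcal{T}}_t$.

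I would then invoke the structural hypothesis on $\phi$. Convexity guarantees continuity of $\phi$ on the interior of its domain, and non-increasing monotonicity implies that on $[u_{\min}, u_{\max}]$ the map $u \mapsto \phi(u)$ attains its maximum $\phi(u_{\min})$ at the left endpoint and its minimum $\phi(u_{\max})$ at the right endpoint. Consequently
\begin{align*}
\max_{h,h' \in \mathrm{Conv}(H^{\mathcal{T}}_t)} \bigl|\phi(yh(x)) - \phi(yh'(x))\bigr|
&= \phi(u_{\min}) - \phi(u_{\max}) \\
&= \max_{h,h' \in H^{\mathcal{T}}_t} \bigl|\phi(yh(x)) - \phi(yh'(x))\bigr|,
\end{align*}
with the second equality witnessed by the two hypotheses in $H^{\mathcal{T}}_t$ that attain $u_{\min}$ and $u_{\max}$ respectively. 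Taking $\max_y$ on both sides then yields the lemma. There is no real technical obstacle; the main conceptual point worth flagging is that convexity of $\phi$ is not used in a substantive way beyond ensuring continuity, since monotonicity alone governs extrema of a one-dimensional function on an interval, while the linearity of $h \mapsto h(x)$ is what makes the image $\{yh(x) : h \in \mathrm{Conv}(H^{\mathcal{T}}_t)\}$ a single interval and thereby collapses the two-sided optimization to its endpoint witnesses.
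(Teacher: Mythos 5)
Your proposal is correct and follows essentially the same route as the paper's proof: both arguments rest on the linearity of the evaluation map $h \mapsto h(x)$, so that $yh(x)$ for $h$ in the convex hull is sandwiched between $\min_j yh_j(x)$ and $\max_j yh_j(x)$, and then on the monotonicity of $\phi$ to conclude that the loss of any convex combination lies between the componentwise minimum and maximum losses, with the extremes witnessed by elements of $H^{\mathcal{T}}_t$ itself. Your side remark that convexity of $\phi$ plays no substantive role in this particular lemma also matches the paper's proof, which likewise invokes only the non-increasing property here (convexity is used later, in the generalization-error bound under self-improvement).
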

Lemma~\ref{lem:max_disagreement} shows that the maximum error disagreement at a certain fixed sample $x$ will not increase despite the learning algorithm generating new hypotheses in the convex hull of $H^{\mathcal{T}}_t$. Based on this property, Theorem~\ref{thm:BTAL-improvement} presents the lower bound analysis for the generalization error difference between the teaching hypothesis $h^{\mathcal{T}}$ and new hypotheses $\tilde{h}$.
\begin{theorem}\label{thm:BTAL-improvement}
For any teaching-hypothesis-class $\mathcal{H}^{\mathcal{T}}$, teaching an active learner runs on $\mathcal{H}^{\mathcal{T}}$, where the sequence of candidate hypothesis sets satisfies $\mathrm{Conv}(H^{\mathcal{T}}_{t+1})\subseteq\mathrm{Conv}(H^{\mathcal{T}}_{t})$ with $H^{\mathcal{T}}_1 = \mathcal{H}^{\mathcal{T}}$. For any $t\in\mathbb{N}^{+}$, given any $\delta > 0$, with a probability at least $1-\delta$, for any $\tilde{h}\in \widetilde{H}'_t$, the following inequality holds:
\begin{equation}\label{equ:judge}
R(h^{\mathcal{T}})-R(\tilde{h})\geq L_{t}(h^{\mathcal{T}})-L_{t}(\tilde{h})-\left(1+\mathcal{F}^{\mathcal{T}}(\tilde{h})\right)\Delta_{t}.
\end{equation}
\end{theorem}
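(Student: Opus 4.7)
The plan is to derive the one-sided inequality by extending the two-sided concentration inequality of Lemma~\ref{lem:T-AL} to cover the newly generated hypothesis $\tilde{h}$, and then rearranging. The teaching hypothesis $h^{\mathcal{T}}$ lies in $H^{\mathcal{T}}_t$ with high probability by Theorem~\ref{thm:retain the teacher}, while $\tilde{h}\in\widetilde{H}'_t\subseteq\mathrm{Conv}(H'_t)\subseteq\mathrm{Conv}(H^{\mathcal{T}}_t)$ by construction via Eq.~(\ref{equ:new_hypothesis}). So the pair $(h^{\mathcal{T}},\tilde{h})$ lives inside $\mathrm{Conv}(H^{\mathcal{T}}_t)$, and we need a concentration statement that is valid on this convex hull rather than just on $H^{\mathcal{T}}_t$ itself.

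First I would invoke Lemma~\ref{lem:max_disagreement}, which (under the assumption $\ell(h(x),y)=\phi(yh(x))$ with $\phi$ non-increasing and convex) guarantees that the maximum error disagreement over $\mathrm{Conv}(H^{\mathcal{T}}_t)$ coincides with that over $H^{\mathcal{T}}_t$. This is precisely what is needed so that the range of the importance-weighted loss differences (and the associated martingale increments) used in the proof of Lemma~\ref{lem:T-AL} is not enlarged when passing to the convex hull. Consequently, the concentration inequality of Lemma~\ref{lem:T-AL} applies verbatim to the pair $(h^{\mathcal{T}},\tilde{h})$, yielding
\begin{equation*}
\bigl|L_t(h^{\mathcal{T}})-L_t(\tilde{h})-\bigl(R(h^{\mathcal{T}})-R(\tilde{h})\bigr)\bigr|\leq \bigl(1+\mathcal{L}(h^{\mathcal{T}},\tilde{h})\bigr)\Delta_t.
\end{equation*}

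Next I would identify $\mathcal{L}(h^{\mathcal{T}},\tilde{h})=\mathcal{F}^{\mathcal{T}}(\tilde{h})$ by the definition of the disagreement feedback $\mathcal{F}^{\mathcal{T}}(\cdot)=\mathcal{L}(h^{\mathcal{T}},\cdot)$, and then extract the lower-side of the absolute value bound:
\begin{equation*}
R(h^{\mathcal{T}})-R(\tilde{h})\geq L_t(h^{\mathcal{T}})-L_t(\tilde{h})-\bigl(1+\mathcal{F}^{\mathcal{T}}(\tilde{h})\bigr)\Delta_t,
\end{equation*}
which is exactly Eq.~(\ref{equ:judge}). A union bound over the high-probability events underlying Lemma~\ref{lem:T-AL} and Theorem~\ref{thm:retain the teacher} preserves the overall $1-\delta$ confidence level.

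The main obstacle is the first step: justifying that Lemma~\ref{lem:T-AL} transfers from $H^{\mathcal{T}}_t$ to $\mathrm{Conv}(H^{\mathcal{T}}_t)$. The concentration argument behind Lemma~\ref{lem:T-AL} relies on a uniform bound on the size of the per-sample loss differences, and a naive convex combination of hypotheses could in principle inflate this range. Lemma~\ref{lem:max_disagreement} is exactly the tool that rules this out under the stated convexity and monotonicity of $\phi$, so the argument hinges on a careful invocation of that lemma to certify that the concentration constant $\Delta_t$ remains unchanged when we move from the finite set $H^{\mathcal{T}}_t$ to its convex hull containing $\tilde{h}$.
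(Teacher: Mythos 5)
Your proposal is correct and follows essentially the same route as the paper: the paper packages your first step as a separate auxiliary lemma (Lemma~\ref{lem:new T-AL}), which extends the concentration bound of Lemma~\ref{lem:T-AL} from $H^{\mathcal{T}}_t$ to $\mathrm{Conv}(H^{\mathcal{T}}_t)$ precisely by invoking Lemma~\ref{lem:max_disagreement} to show the martingale increments' range is not enlarged, and then applies it to the pair $(h^{\mathcal{T}},\tilde{h})$ and reads off the one-sided inequality exactly as you do. Your additional explicit appeal to Theorem~\ref{thm:retain the teacher} and the union bound is a harmless (and slightly more careful) elaboration of what the paper leaves implicit.
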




Theorem~\ref{thm:BTAL-improvement} presents the determine condition of self-improvement: if $\beta^{(t)}_{i}= L_{t}(h^{\mathcal{T}})-L_{t}(\tilde{h}_i)-\left(1+\mathcal{F}^{\mathcal{T}}(\tilde{h}_{i})\right)\Delta_{t}>0$, i.e., $R(\tilde{h}_{i})<R(h^{\mathcal{T}})$, then the teaching hypothesis of $\mathcal{H}^{\mathcal{T}}$ is updated to $h^{\mathcal{T}}=\widetilde{h}_i$. Thus self-improvement of teaching strategy reduces generalization error of the teaching hypothesis without excessive additional calculations.

\textbf{Improvement of teaching performance} Self-improvement of teaching strategy obtain a teaching hypothesis sequence $\left\{h^{\mathcal{T}}_1,...,h^{\mathcal{T}}_T\right\}$, where $h^{\mathcal{T}}_t$ denote the optimal hypothesis in $\bigcup\limits_{k=1}^t H^{\mathcal{T}}_k$. Based on Theorem~\ref{thm:BTAL-improvement}, Corollary~\ref{cor:self-teacher} gives the improvement of teaching performance.
\begin{corollary}\label{cor:self-teacher}
For any teaching-hypothesis-class $\mathcal{H}^{\mathcal{T}}$, teaching an active learner runs on $\mathcal{H}^{\mathcal{T}}$. Let $\alpha_t=\max\{\max_{i}\beta^{(t)}_{i}, 0\}$ and $h^{\mathcal{T}}_1$ be the initial teaching hypothesis. If the self-improvement of teaching is applied, given any $\delta > 0$, with a probability at least $1-\delta$, for any $T\in\mathbb{N}^{+}$, there exists inequality $R(h^{\mathcal{T}}_T) \leq R(h^{\mathcal{T}}_1) - \sum_{t=1}^{T-1}\alpha_t$.
\end{corollary}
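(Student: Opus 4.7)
The plan is to prove the corollary by a short induction on $T$ that telescopes the per-round improvement bound supplied by Theorem~\ref{thm:BTAL-improvement}. The base case $T=1$ is vacuous, as the empty sum yields $R(h^{\mathcal{T}}_1)\le R(h^{\mathcal{T}}_1)$. For the inductive step I would first make the self-improvement update rule precise: at round $t$ the algorithm considers the newly generated hypotheses $\tilde{h}_i \in \widetilde{H}'_t$, computes $\beta^{(t)}_i = L_t(h^{\mathcal{T}}_t) - L_t(\tilde{h}_i) - \bigl(1+\mathcal{F}^{\mathcal{T}}(\tilde{h}_i)\bigr)\Delta_t$, and sets $h^{\mathcal{T}}_{t+1}=\tilde{h}_{i^\star}$ with $i^\star\in\arg\max_i\beta^{(t)}_i$ whenever $\max_i\beta^{(t)}_i>0$, and otherwise retains $h^{\mathcal{T}}_{t+1}=h^{\mathcal{T}}_t$.

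I would then show that in either branch $R(h^{\mathcal{T}}_t)-R(h^{\mathcal{T}}_{t+1})\ge\alpha_t$. When $\alpha_t>0$ this is exactly Theorem~\ref{thm:BTAL-improvement} applied to the maximizing $\tilde{h}_{i^\star}$, which gives $R(h^{\mathcal{T}}_t)-R(\tilde{h}_{i^\star})\ge\beta^{(t)}_{i^\star}=\alpha_t$; when $\alpha_t=0$ the teacher is unchanged and the inequality is trivial. Summing these $T-1$ inequalities and telescoping yields the claimed bound $R(h^{\mathcal{T}}_T)\le R(h^{\mathcal{T}}_1)-\sum_{t=1}^{T-1}\alpha_t$.

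The one non-routine point, and what I expect to be the main obstacle, is the probabilistic qualifier. Theorem~\ref{thm:BTAL-improvement} is a high-probability statement at a fixed round $t$, but the argument needs it to hold at every $t\in[T-1]$ on a single favorable event of probability at least $1-\delta$. I would resolve this by appealing to the union-bound structure already baked into $\Delta_t=\sqrt{(2/t)\log(2t(t+1)|\mathcal{H}^{\mathcal{T}}|^2/\delta)}$: the factor $2t(t+1)$ together with $\sum_{t\ge1}1/\bigl(t(t+1)\bigr)=1$ is precisely what lets the favorable event of Lemma~\ref{lem:T-AL}, and hence of Theorem~\ref{thm:BTAL-improvement}, hold uniformly in $t$ with total failure probability at most $\delta$. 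On this event the telescoping induction goes through without any further union bound, completing the proof.
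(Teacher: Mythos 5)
Your proposal is correct and matches the argument the paper intends: the corollary is obtained by telescoping the per-round guarantee of Theorem~\ref{thm:BTAL-improvement} over the self-improvement updates, with the uniformity in $t$ already supplied by the single $1-\delta$ event of Lemma~\ref{lem:new T-AL} (the paper gives no separate written proof of this corollary). Your explicit handling of the union bound and of the $\alpha_t=0$ branch is a faithful, slightly more careful rendering of the same route.
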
 

Corollary~\ref{cor:self-teacher} guarantees that under high probability, self-improvement of teaching  can reduce the generalization error of the initial teaching hypothesis by at least $\sum_{t=1}^{T-1}\alpha_t$. Moreover, assuming $\epsilon$ is the initial approximation error of the teaching hypothesis to the optimal hypothesis, there exists inequality $R(h^{\mathcal{T}}_T) \leq R(h^{*}) +\epsilon_{T}$, where $\epsilon_T := \epsilon - \sum_{t=1}^{T-1}\alpha_t (T>1)$ and $\epsilon_1 = \epsilon$. Thus the self-improvement of teaching alleviates the loose approximation of the teaching hypothesis to the optimal hypothesis w.r.t. $\epsilon$ of Assumption~\ref{ass:T-AL}.

\textbf{Learning guarantees} With the improvement of the teacher, the improvement of the learner is natural. Recalling Theorem~\ref{thm:TAL-learning guarantees}, we then present the learning guarantees for the self-improvement of teaching. The primary motivation is to replace the pre-defined teaching hypothesis $h^{\mathcal{T}}$ by a teaching hypothesis sequence $\left\{h^{\mathcal{T}}_1,...,h^{\mathcal{T}}_T\right\}$. At $t$-time, we denote by $\mathcal{F}^{\mathcal{T}}_t(\cdot):= \mathcal{L}(h^{\mathcal{T}}_t,\cdot)$ the disagreement feedback with latest teaching hypothesis $h^{\mathcal{T}}_t$. Because the disagreement coefficient $\theta$ w.r.t. Eq.~(\ref{equ:error disagreement coefficient}) is defined based on the varying candidate hypothesis set $H^{\mathcal{T}}_t$, it varies with time $t$. To make the theoretical results more concise, we assume that $\theta$ is stable for smooth distribution and does not change dramatically as $H^{\mathcal{T}}_t$ changes. The learning guarantees of Theorem~\ref{thm:TAL-learning guarantees} are then re-derived.
\begin{theorem}\label{thm:BTAL-learning guarantees}
For any teaching-hypothesis-class $\mathcal{H}^{\mathcal{T}}$, teaching an active learner runs on $\mathcal{H}^{\mathcal{T}}$. If the self-improvement of teaching is applied, given any $\delta > 0$, with a probability at least $1-\delta$, for any $T\in\mathbb{N}^{+}$, the following holds: 1) for any $t\in[T]$, holds $h^{\mathcal{T}}_t\in H^{\mathcal{T}}_t$; 

2) the generalization error holds
\begin{equation*}
R(\widehat{h}_T) \leq R(h^{*})+\left(2+\mathcal{F}^{\mathcal{T}}_{T-1}(\widehat{h}_{T-1})+\mathcal{F}^{\mathcal{T}}_{T-1}(\widehat{h}_T)\right)\Delta_{T-1} + \epsilon_{T-1};
\end{equation*}
3) if the learning problem has disagreement coefficient $\theta$, the label complexity is at most
\[
\tau_T  \leq 2 \theta\left(2TR(h^{*})+\big(3+\mathcal{F}^{\mathcal{T}}_{T-1}(\widehat{h}_{T-1})\big) O(\sqrt{T})+2T\epsilon_{T-1}\right).\]
\end{theorem}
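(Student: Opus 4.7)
The plan is to mirror the proof of Theorem~\ref{thm:TAL-learning guarantees}, substituting the fixed teaching hypothesis $h^{\mathcal{T}}$ by the time-indexed sequence $\{h^{\mathcal{T}}_1,\dots,h^{\mathcal{T}}_T\}$ and invoking Corollary~\ref{cor:self-teacher} to replace the static bias $\epsilon$ with $\epsilon_{T-1}$. For claim (1), I would argue by induction on $t$. The base case is immediate from $h^{\mathcal{T}}_1\in\mathcal{H}^{\mathcal{T}}=H^{\mathcal{T}}_1$. For the inductive step, if the self-improvement criterion from Theorem~\ref{thm:BTAL-improvement} does not fire at step $t-1$, then $h^{\mathcal{T}}_t=h^{\mathcal{T}}_{t-1}$ and we need it to survive the pruning rule of Eq.~(\ref{equ:hyp-pruning}); this follows by applying Lemma~\ref{lem:T-AL} to the pair $(\widehat{h}_{t-1},h^{\mathcal{T}}_{t-1})$ exactly as in Theorem~\ref{thm:retain the teacher}, giving $L_{t-1}(h^{\mathcal{T}}_{t-1})-L_{t-1}(\widehat{h}_{t-1})\leq(1+\mathcal{F}^{\mathcal{T}}_{t-1}(\widehat{h}_{t-1}))\Delta_{t-1}$. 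If the criterion fires, then $h^{\mathcal{T}}_t\in\widetilde{H}'_{t-1}\subseteq H^{\mathcal{T}}_t$ by construction.

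For claim (2), I would apply Lemma~\ref{lem:T-AL} twice at time $T-1$, first to the pair $(\widehat{h}_{T-1},h^{\mathcal{T}}_{T-1})$ and then to the pair $(\widehat{h}_T,h^{\mathcal{T}}_{T-1})$. The first application combined with the empirical minimality $L_{T-1}(\widehat{h}_{T-1})\leq L_{T-1}(h^{\mathcal{T}}_{T-1})$ gives $R(\widehat{h}_{T-1})-R(h^{\mathcal{T}}_{T-1})\leq(1+\mathcal{F}^{\mathcal{T}}_{T-1}(\widehat{h}_{T-1}))\Delta_{T-1}$. The second, combined with the pruning inequality from Eq.~(\ref{equ:hyp-pruning}) applied at step $T-1$ (which forces $L_{T-1}(\widehat{h}_T)-L_{T-1}(\widehat{h}_{T-1})\leq(1+\mathcal{F}^{\mathcal{T}}_{T-1}(\widehat{h}_{T-1}))\Delta_{T-1}$ for any surviving hypothesis), yields $R(\widehat{h}_T)-R(\widehat{h}_{T-1})\leq(1+\mathcal{F}^{\mathcal{T}}_{T-1}(\widehat{h}_T))\Delta_{T-1}$ after cancellation with the empirical-minimality term. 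Chaining these and then invoking Corollary~\ref{cor:self-teacher} in the form $R(h^{\mathcal{T}}_{T-1})\leq R(h^*)+\epsilon_{T-1}$ produces the stated bound $R(\widehat{h}_T)\leq R(h^*)+(2+\mathcal{F}^{\mathcal{T}}_{T-1}(\widehat{h}_{T-1})+\mathcal{F}^{\mathcal{T}}_{T-1}(\widehat{h}_T))\Delta_{T-1}+\epsilon_{T-1}$.

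For claim (3), I would follow the standard disagreement-coefficient argument used for IWAL and its teaching variant in Theorem~\ref{thm:TAL-learning guarantees}. Claim (1) together with the pruning threshold controls the excess error of every surviving hypothesis, so that $H^{\mathcal{T}}_t$ lies inside a $\rho$-ball $B(h^{\mathcal{T}}_{t-1},r_t)$ of radius $r_t$ of order $R(h^*)+\Delta_{t-1}+\epsilon_{t-1}$, where the ball is defined via Eq.~(\ref{equ:error disagreement coefficient}). Using the stability assumption on $\theta$, the expected query probability $\mathbb{E}[p_t]$ is bounded by $2\theta\,r_t$, and summing over $t\in[T]$ together with a Hoeffding/Bernstein tail bound on $\sum_t(Q_t-p_t)$ produces the $O(\sqrt{T})$ deviation term and the final complexity bound. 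The main obstacle is reconciling the self-improvement dynamics with the monotonicity assumption of Lemma~\ref{lem:T-AL}: because $H^{\mathcal{T}}_{t+1}$ only satisfies the convex-hull containment $\mathrm{Conv}(H^{\mathcal{T}}_{t+1})\subseteq\mathrm{Conv}(H^{\mathcal{T}}_t)$ rather than $H^{\mathcal{T}}_{t+1}\subseteq H^{\mathcal{T}}_t$, Lemma~\ref{lem:T-AL} must be extended via Lemma~\ref{lem:max_disagreement} to hypotheses lying in the convex hull so that the concentration inequality still covers the newly generated $\tilde{h}$, and care is needed to track the correct indexing of $h^{\mathcal{T}}_t$ versus $h^{\mathcal{T}}_{t-1}$ inside the disagreement feedback $\mathcal{F}^{\mathcal{T}}_{t-1}(\cdot)$ throughout the chain.
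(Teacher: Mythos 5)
Your overall strategy — induction for part (1) using the Theorem~\ref{thm:retain the teacher} argument with the time-indexed teacher, a two-step chain through $h^{\mathcal{T}}_{T-1}$ for part (2), the standard disagreement-coefficient summation for part (3), and the explicit recognition that Lemma~\ref{lem:T-AL} must be upgraded to its convex-hull version (Lemma~\ref{lem:new T-AL}) because only $\mathrm{Conv}(H^{\mathcal{T}}_{t+1})\subseteq\mathrm{Conv}(H^{\mathcal{T}}_{t})$ holds — matches the paper's proof closely. Part (1) in particular is essentially identical to the paper's induction: $h^{\mathcal{T}}_k$ survives pruning into $H'_k$ because it is optimal among all hypotheses seen so far, and otherwise $h^{\mathcal{T}}_{k+1}\in\widetilde{H}'_k$, so in either case $h^{\mathcal{T}}_{k+1}\in H'_k\cup\widetilde{H}'_k=H^{\mathcal{T}}_{k+1}$.

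There is, however, one genuine gap in your part (2). You apply the pruning inequality ``$L_{T-1}(\widehat{h}_T)\leq L_{T-1}(\widehat{h}_{T-1})+(1+\mathcal{F}^{\mathcal{T}}_{T-1}(\widehat{h}_{T-1}))\Delta_{T-1}$ for any surviving hypothesis,'' but $\widehat{h}_T$ is the empirical minimizer over $H^{\mathcal{T}}_T=H'_{T-1}\cup\widetilde{H}'_{T-1}$, and if it lies in the newly generated set $\widetilde{H}'_{T-1}$ it never passed the time-$(T-1)$ pruning test, so that inequality is unavailable. The paper handles this with an explicit case split: for $\widehat{h}_T\in H'_{T-1}$ it runs the chain you describe, and for $\widehat{h}_T=\sum_j\lambda_j h_j\in\widetilde{H}'_{T-1}$ it uses the convexity and monotonicity of $\phi$ together with Jensen's inequality to show $R(\widehat{h}_T)\leq\max_{h_j\in H'_{T-1}}R(h_j)$, reducing the second case to the first. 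Without this step the bound is unproven precisely when self-improvement produces the returned hypothesis, which is the situation the theorem is designed to cover. (A smaller bookkeeping issue: your intermediate claim $R(\widehat{h}_T)-R(\widehat{h}_{T-1})\leq(1+\mathcal{F}^{\mathcal{T}}_{T-1}(\widehat{h}_T))\Delta_{T-1}$ does not follow as stated from applying the lemma to the pair $(\widehat{h}_T,h^{\mathcal{T}}_{T-1})$; the clean route is the paper's single chain $R(h)-R(h^{\mathcal{T}}_{T-1})\leq L_{T-1}(h)-L_{T-1}(h^{\mathcal{T}}_{T-1})+(1+\mathcal{F}^{\mathcal{T}}_{T-1}(h))\Delta_{T-1}$, then bounding $L_{T-1}(h)$ by the pruning rule and $-L_{T-1}(h^{\mathcal{T}}_{T-1})$ by $-L_{T-1}(\widehat{h}_{T-1})$ via empirical minimality — but this lands in the same place, so it is not a substantive error.)
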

Theorem~\ref{thm:BTAL-learning guarantees} shows that the optimal hypothesis of $\bigcup\limits_{k=1}^t H^{\mathcal{T}}_k$ is maintained in the candidate hypothesis set with a high probability at any $t$-time. Recalling Corollary~\ref{cor:self-teacher}, there exists $\epsilon = \epsilon_1 \leq \epsilon_{T-1}$, which shows that self-improvement of teaching strategy can further reduce the generalization error and label complexity bounds of the learner w.r.t. Theorem~\ref{thm:TAL-learning guarantees}. Moreover, the improvement of the active learner is decided by the improvement of the black-box teacher.

In conclusion, by generating new hypotheses, self-improvement of teaching strategy tightens the approximation of the teaching hypothesis to the optimal hypothesis, which provides more favorable learning guarantees for an active learner.

\section{Black-box Teaching-based Active Learning}\label{sec:Black-box Teaching-based Active Learning}
Based on the theoretical results of Section~\ref{sec:Black-box Teaching}, we present the black-box teaching-based active learning algorithm (BTAL), which guides a white-box learner. To guide a black-box learner, we then extend BTAL into BTAL$^+$.

\subsection{Teaching a White-box Learner}\label{subsec:Teaching a White-box Learner}
We here consider the teaching for  a white-box learner who discloses its hypothesis class information to the teacher. In this setting, the learner prunes the teaching-hypothesis-class $\mathcal{H}^{\mathcal{T}}$ by querying the sample labels and finally outputs a desired hypothesis. In each round, BTAL includes three stages: 1) query, 2) hypothesis pruning, and  3) self-improvement. Its pseudo-code is presented in Algorithm~\ref{alg:BT-AL}.

\begin{algorithm}[t]
    \caption{BTAL($\mathcal{H}^{\mathcal{T}},h^{\mathcal{T}},T,n$)}
    \label{alg:BT-AL}
    \begin{algorithmic}[1]
    \STATE {\bfseries Initialize:} $H^{\mathcal{T}}_1 = \mathcal{H}^{\mathcal{T}}, h^{\mathcal{T}}_1 = h^{\mathcal{T}},  \widetilde{H}'_t = \emptyset$
    \FOR{$t\in[T]$}
    \STATE $p_t=\max_{h,h'\in H^{\mathcal{T}}_t}\max_{y} |\ell(h(x_t),y)- \ell(h'(x_t),y)|$
    \STATE $Q_t\in\{0,1\}$ with $Q_t\sim\mathcal{B}(1,p_t)$
    \IF{$Q_t=1$}
    \STATE $y_t \gets \mathrm{LABEL}(x_t)$
    \STATE $\widehat{h}_t=\underset{h\in H^{\mathcal{T}}_t}{\operatorname{argmin}}\ L_{t}(h)$
    \STATE ${H}'_t \gets H^{\mathcal{T}}_{t}$
    \FOR{$i\in[n]$}
    \STATE $\widetilde{h} \gets \mathrm{Conv}({H}'_t); \widetilde{H}'_t = \widetilde{H}'_t \cup \widetilde{h}$
    \STATE $h^{\mathcal{T}}_{t+1} \gets \widetilde{h}$
    \ENDFOR
    \STATE $H^{\mathcal{T}}_{t+1}=H'_{t}\cup \widetilde{H}'_t$
    \ENDIF
    \ENDFOR
    \STATE {\bfseries Return} $\widehat{h}_T$
\end{algorithmic}
\end{algorithm}

  


\textbf{Query} (Steps~3-6) On the setting of white-box learner, BTAL adopts a similar label query strategy as IWAL w.r.t. Eq.~(\ref{equ:IWAL_pt}), with a slightly different hypothesis class. At $t$-time, BTAL does a Bernoulli trial $Q_t$ with a success probability $p_t$:
\begin{equation}
p_t=\max_{h,h'\in H^{\mathcal{T}}_t}\max_{y} \left|\ell(h(x_t),y)- \ell(h'(x_t),y)\right|.
\end{equation}
If $Q_t=1$, the algorithm queries the label $y_t$ of $x_t$.

\textbf{Hypothesis pruning} (Step~7-8) At any $t$-time, BTAL maintains a candidate hypothesis set $H^{\mathcal{T}}_t$ with $H^{\mathcal{T}}_1=\mathcal{H}^{\mathcal{T}}$.  After querying the label, BTAL updates the current empirical optimal  hypothesis $\widehat{h}_t=\underset{h \in H^{\mathcal{T}}_t}{\operatorname{argmin}}\ L_{t}(h)$ w.r.t. Eq.~(\ref{equ:importance-weighted empirical error}). Then the algorithm prunes the candidate hypothesis set from $H^{\mathcal{T}}_t$ to $H'_{t}$ according to Eq.~(\ref{equ:hyp-pruning}). At $T$-time, BTAL returns the hypothesis $\widehat{h}_T$ as the final hypothesis output.

\textbf{Self-improvement} (Steps~9-13) After the hypothesis pruning, BTAL will generate new hypotheses to improve the performance of the teaching hypothesis. At $t$-time, BTAL generates new hypotheses $\tilde{h}_i$ from the convex hull of $H'_{t}$ according to Eq.~(\ref{equ:new_hypothesis}) and obtains the hypothesis set $\widetilde{H}'_t=\left\{\tilde{h}_i;i\in[n]\right\}$. Next, the algorithm updates the teaching hypothesis according to Eq.~(\ref{equ:judge}) and uses $H^{\mathcal{T}}_{t+1}=H'_{t}\cup \widetilde{H}'_t$ as the candidate hypothesis set at $t+1$-time.

\subsection{Teaching a Black-box Learner}\label{subsec:Teaching a Black-box Learner}
Here, we consider a more challenging problem: the learner is also a black-box who can not disclose its hypothesis class information to the teacher. In this setting, the teaching-hypothesis-class $\mathcal{H}^{\mathcal{T}}$ of learner is non-transparent. Therefore, the learner tries to converge to the optimal hypothesis from an initial hypothesis $\widehat{h}_0$ by incremental updates. We extend BTAL into BTAL$^+$ for teaching a black-box learner. In each round, BTAL$^+$ includes three stages: 1)  query, 2) hypothesis pruning, and  3) self-improvement. Its pseudo-code is presented in Algorithm~\ref{alg:BTAL+}.

\begin{algorithm}[htbp]
    \caption{BTAL$^+(h^{\mathcal{T}}, T, n)$}
    \label{alg:BTAL+}
    \begin{algorithmic}[1]
    \STATE {\bfseries Initialize:}Teacher $h^{\mathcal{T}}_1=h^{\mathcal{T}}$, Learner $\widehat{h}_0$
    \FOR{$t\in[T]$}
    \STATE $p_t=\max_y|\ell(h^{\mathcal{T}}_{t}(x),y)- \ell(\widehat{h}_{t-1}(x),y)|$
    \STATE $Q_t\in\{0,1\}$ with $Q_t\sim\mathcal{B}(1,p_t)$
    \IF{$Q_t=1$}
    \STATE $y_t \gets \mathrm{LABEL}(x_t)$
    \STATE $\widehat{h}_t=\underset{h\in H^{\mathcal{T}}_t}{\operatorname{argmin}}\ L_{t}(h)$
    \IF{$ L_{t-1}(\widehat{h}_t) > L_{t-1}(\widehat{h}_{t-1}) + \Phi_{t-1}$}
    \STATE $\widehat{h}_{t}=\widehat{h}_{t-1}$
    \ENDIF
    \FOR{$i\in[n]$}
    \STATE $\tilde{h}\gets \lambda h^{\mathcal{T}}_{t}+(1-\lambda)\widehat{h}_t$
    \STATE $h^{\mathcal{T}}_{t+1} \gets \tilde{h}$
    \ENDFOR
    \ENDIF
    \ENDFOR
    \STATE {\bfseries Return} $\widehat{h}_T$
\end{algorithmic}
\end{algorithm}


\textbf{Query}(Steps~3-6) In the setting of black-box learner, the maximum error disagreement of the candidate hypothesis set $H^{\mathcal{T}}_t$ cannot be obtained. We thus re-characterize query probability $p_t$ by the maximum error disagreement  between teacher and learner:
\begin{equation}\label{equ:BT-AL_pt}
p_t=\max_y\left|\ell(h^{\mathcal{T}}_{t}(x),y)- \ell(\widehat{h}_{t-1}(x),y)\right|.
\end{equation}
Formally, $p_t$ should be defined as  $p_t=\max_y\left|\ell(h^{\mathcal{T}}_{t}(x),y)- \ell(\widehat{h}_{t}(x),y)\right|$. Before the update on $\widehat{h}_{t}$ at $t$-time, $\widehat{h}_{t-1}$ is used to approximate  $\widehat{h}_{t}$. At $t$-time, BTAL$^+$ does a Bernoulli trial $Q_t$  with success probability $p_t$ to decide whether to query the label of $x_t$.

\textbf{Hypothesis pruning}(Steps~7-10) Since the teaching-hypothesis-class $\mathcal{H}^{\mathcal{T}}$ is non-transparent, hypothesis pruning is generalized as a constraint in incremental updates. Specifically, we present a backtracking approach to ensure that the current empirical optimal  hypothesis $\widehat{h}_t$ is maintained in the candidate hypothesis set $H^{\mathcal{T}}_t$. After the learner is updated as $\widehat{h}_t$ at $t$-time, we judge whether the hypothesis pruning rule of $\widehat{h}_t$ is satisfied at $t-1$-time by the following inequality:
\begin{equation}\label{equ:slack}
L_{t-1}(\widehat{h}_t)\leq L_{t-1}(\widehat{h}_{t-1}) + \Phi_{t-1},
\end{equation}
where $\Phi_{t-1}=\left(1+\mathcal{F}^{\mathcal{T}}_{t-1}(\widehat{h}_{t-1})\right)\Delta_{t-1}$ represents the slack term. If Eq.~(\ref{equ:slack}) does not satisfy, it means that $\widehat{h}_t$ has already been pruned, and we backtrack the hypothesis  $\widehat{h}_{t} = \widehat{h}_{t-1}$.  The backtracking approach forces the learner not to be updated too far for each update, which prevents the learner to be disordered when updating towards a subsequent hypothesis.

\textbf{Self-improvement}(Steps~11-14) Because the teaching-hypothesis-class $\mathcal{H}^{\mathcal{T}}$ of the learner is non-transparent, we cannot generate new hypotheses from the convex hull of the candidate hypothesis set. We suggest generating new hypotheses by a linear combination of the teaching hypothesis $h^{\mathcal{T}}_{t}$ and the current empirical optimal  hypothesis $\widehat{h}_t$:
\begin{equation}\label{equ:new_hypothesis_ts}
    \tilde{h}=\lambda h^{\mathcal{T}}_{t}+(1-\lambda)\widehat{h}_t.
\end{equation}
At $t$-time, BTAL$^+$ generates $n$ new hypotheses by Eq.(\ref{equ:new_hypothesis_ts}) and determines whether to update the teacher according to Eq.~(\ref{equ:judge}).



\section{Experiments}\label{sec:Experiments}
To demonstrate our teaching idea of Section~\ref{sec:Black-box Teaching}, we present the empirical studies for teaching-based hypothesis pruning of Section~\ref{subsec:Teaching Improves hypothesis pruning}, and the self-improvement of teaching of Section~\ref{subsec:Self-improvement of Teaching}. With their guarantees, we then present real-world studies for BTAL of Section~\ref{subsec:Teaching a White-box Learner} and BTAL$^+$ of Section~\ref{subsec:Teaching a Black-box Learner}.

\textbf{Dataset} We experimented with algorithms on 7 binary classification datasets: \textit{skin, shuttle, magic04, covtype, nomao, jm1 and mnist}. Table~\ref{tab:dataset} shows the summary statistics for all datasets used in our experiment. We denote by $N$ the number of samples, by $Dim$ the number of features, and $R$ is the relative size of the minority class. For the high-dimensional datasets (\textit{covtype, nomao, jm1}), we only keep the first 10 principal components of its original features. For the multi-class datasets (\textit{shuttle, covtype}), we set the majority class as positive classes and all the remaining classes as negative classes. For \textit{mnist} dataset, we set the digit 3 as the positive class and the digit 5 as the negative class. For all datasets, we normalize each feature to $[0,1]$.
\begin{table}[htbp]
\setlength{\tabcolsep}{18pt}
  \centering
  \caption{Dataset summary in experiments.}\label{tab:dataset}
    \begin{tabular}{cccc}
    \toprule
    Dataset & $N$     & $Dim$   & $R$ \\
    \midrule
    \textit{skin}  & 245,057  & 3     & 0.208 \\
    \textit{magic04} & 19,020  & 10    & 0.352 \\
    \textit{shuttle} & 43,500  & 9     & 0.216 \\
    \textit{covtype} & 581,012  & 54    & 0.488 \\
    \textit{nomao} & 34,465  & 118   & 0.286 \\
    \textit{jm1}   & 10,880  & 21    & 0.193 \\
    \textit{mnist}   & 11,552  & 784    & 0.469 \\
    \bottomrule
    \end{tabular}%
  \label{tab:addlabel}%
\end{table}%



\subsection{Empirical Studies}\label{sec:Empirical Studies}
We present the following empirical studies on six UCI binary classification datasets: 1) whether the teaching-based hypothesis pruning of BTAL can prune the candidate hypothesis set faster than hypothesis pruning of IWAL; 2) whether self-improvement of teaching strategy of BTAL can reduce the generalization error of teaching hypothesis.


\textbf{Setting} In our empirical studies, we randomly generate $10,000$ hyperplanes with bounded norms as the initial hypothesis class $\mathcal{H}^{\mathcal{T}}$ and set the teaching hypothesis  as  that hypothesis with the minimum empirical error from $\mathcal{H}^{\mathcal{T}}$. For a $Dim$-dimensional dataset, the sample can be described as $\vec{x}=(x_1,...,x_{Dim})$. Correspondingly, the generated hyperplanes are $Dim+1$-dimensional and can be parameterized as $\vec{w}=(w_1,...,w_{Dim},b)$, where $b$ is the bias term. Thus the prediction of the hypothesis is $h(x)=\sum_{n=1}^{Dim} w_n x_n +b$. For all $(x,y)\in \mathcal{X}\times\mathcal{Y}$, the loss function is written as $\ell(h(x),y)=\log\left(1+\exp\big(-yh(x)\big)\right)$, and  we use function $g\left(\ell(h(x),y)\right)=2/\left(1+\exp\big(-\ell(h(x),y)\big)\right)-1$ to normalize the output of $\ell(h(x),y)$ to $[0,1]$. The hypothesis pruning strategy of IWAL follows Section~\ref{subsec:Learning Algorithm}, and BTAL follows Section~\ref{subsec:Teaching Improves hypothesis pruning}. To reduce computation, we use $10\%$ of unlabeled samples of $\mathcal{X}$ to calculate approximately $\mathcal{L}(\cdot,\cdot)$ w.r.t. Eq.~(\ref{equ:disagreement}). For example, at $t$-time, for all $x\in S$, we solve for the disagreement feedback of teacher and learner by traversing the label $y$, where $S$ is the unlabeled data subset of $\mathcal{X}$ s.t. $|S|=10\% \times|\mathcal{X}|$. If the dataset is split into training set and test set, we use $10\%$ of training set for calculating $\mathcal{L}(\cdot,\cdot)$ approximately to prevent leakage of test set information. We repeat the empirical studies $20$ times on each dataset and collect the average results with standard error.

\textbf{Teaching-based hypothesis pruning} To analyze the hypothesis pruning performance of our teaching idea, we employ IWAL to compare our proposed BTAL in the specified $\mathcal{H}^{\mathcal{T}}$. The size  of the candidate hypothesis set written as $|H^{\mathcal{T}}_t|$ is generalized  as a feasible measure to show the pruning speed. We thus present the dynamic  change of $|H^{\mathcal{T}}_t|$ with the number of query labels (on $\log_2$ scale) in Figure~\ref{fig:exp_one}. Since  BTAL applies  a tighter hypothesis pruning slack term (w.r.t. Eq.~(\ref{equ:hyp-pruning})) under the guidance of the black-box teacher, its pruning speed is naturally faster than that of IWAL in terms of the $|H^{\mathcal{T}}_t|$.

\begin{figure}[htbp] 
	\centering 
	\vspace{-0.35cm} 
	\subfigtopskip=2pt
	\subfigbottomskip=2pt 
	\subfigcapskip=-5pt 
	\subfigure[\textit{skin}]{
		\includegraphics[width=0.31\textwidth]{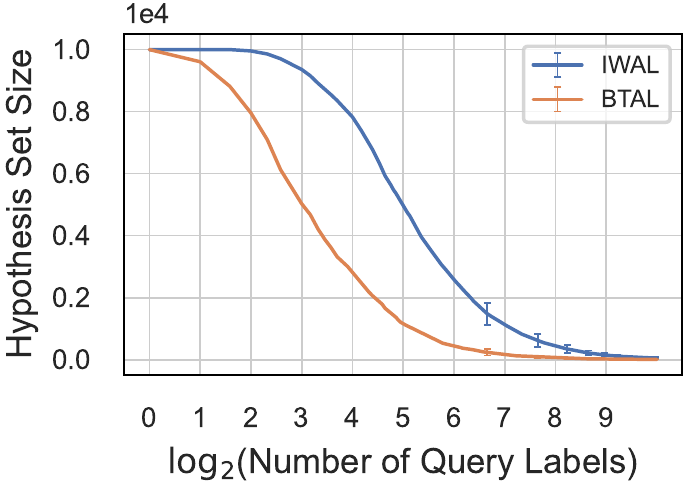}}
	\subfigure[\textit{magic04}]{
		\includegraphics[width=0.31\textwidth]{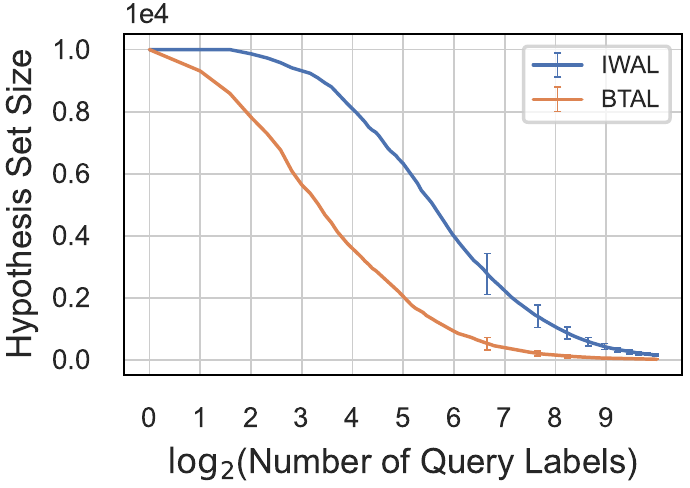}}
	\subfigure[\textit{shuttle}]{
		\includegraphics[width=0.31\textwidth]{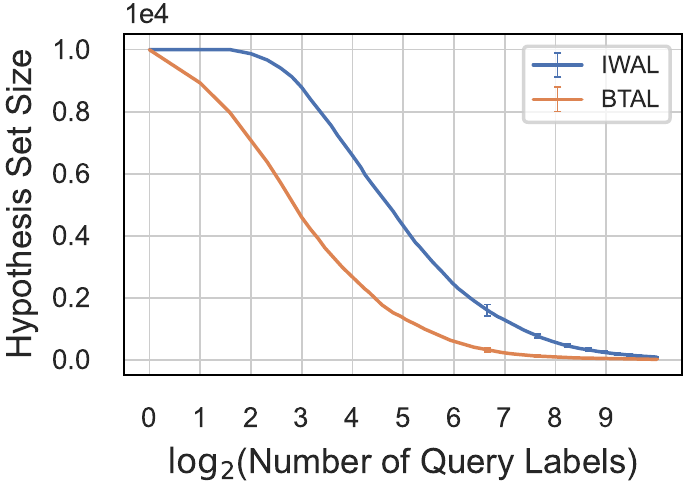}}
		
	\subfigure[\textit{covtype}]{
		\includegraphics[width=0.31\textwidth]{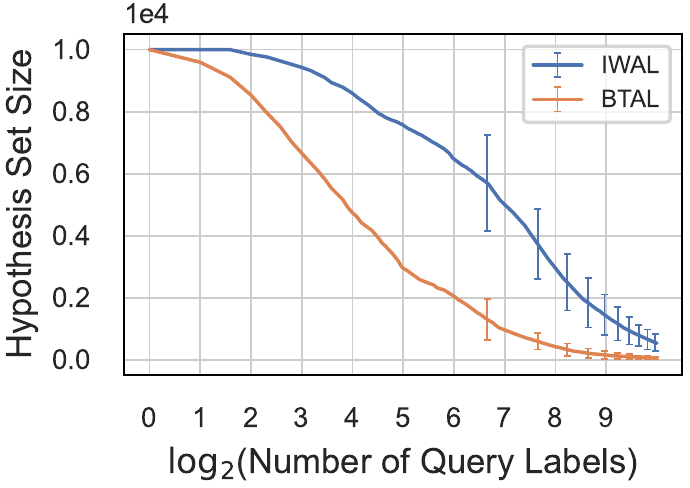}}
	\subfigure[\textit{nomao}]{
		\includegraphics[width=0.31\textwidth]{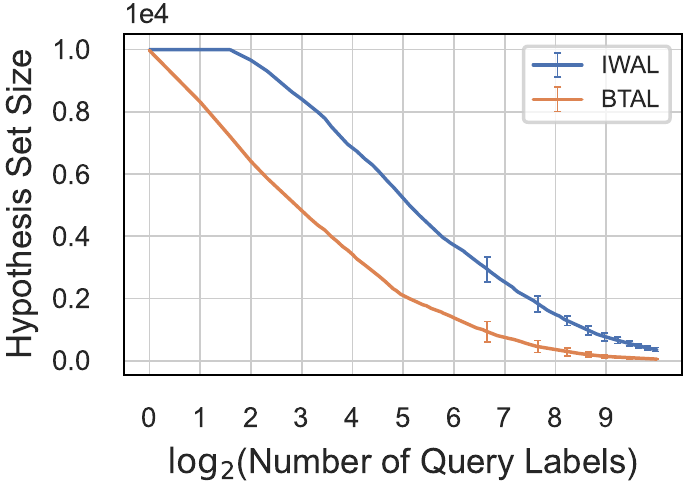}}
	\subfigure[\textit{jm1}]{
		\includegraphics[width=0.31\textwidth]{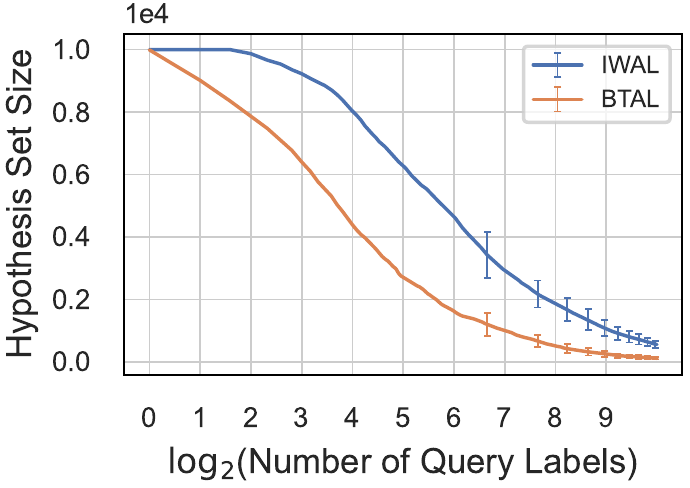}}
    \caption{The size of the candidate hypothesis set of IWAL and BTAL vs. the number of query labels ($\log2$ scale).}\label{fig:exp_one}
\end{figure}


\textbf{Self-improvement of teaching} To verify the effectiveness of the self-improvement of teaching, we observe changes in the generalization error of the teaching hypothesis for BTAL. For each dataset, we randomly select $50\%$ of the data as the training set and approximate the generalization error of the teaching hypothesis by the empirical error on the remaining data. The results are presented in Figure~\ref{fig:exp_two}. With self-improvement of teaching, BTAL gradually tightens the approximation of the teaching hypothesis to the optimal hypothesis. It then leads to the continuous and steady decreases in the generalization error curve of the teaching hypothesis for BTAL. 
\begin{figure}[htbp] 
	\centering 
	\vspace{-0.35cm} 
	\subfigtopskip=2pt
	\subfigbottomskip=2pt 
	\subfigcapskip=-5pt 
	\subfigure[\textit{skin}]{
		\includegraphics[width=0.31\textwidth]{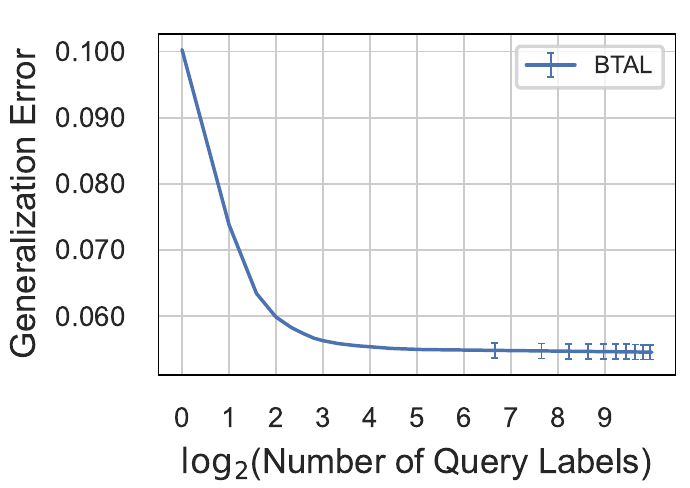}}
	\subfigure[\textit{magic04}]{
		\includegraphics[width=0.31\textwidth]{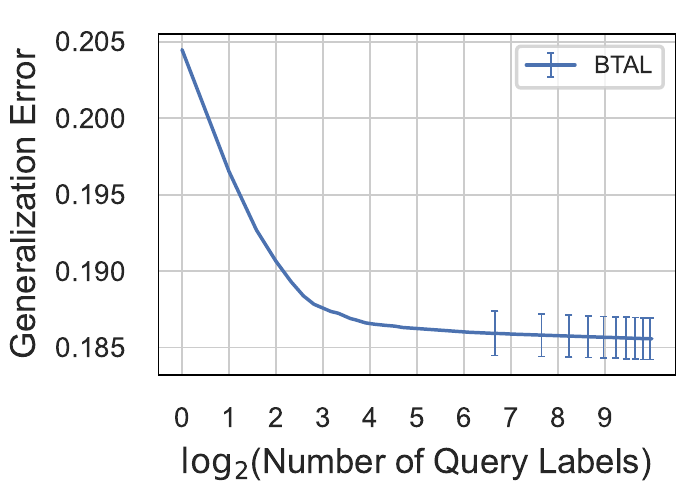}}
	\subfigure[\textit{shuttle}]{
		\includegraphics[width=0.31\textwidth]{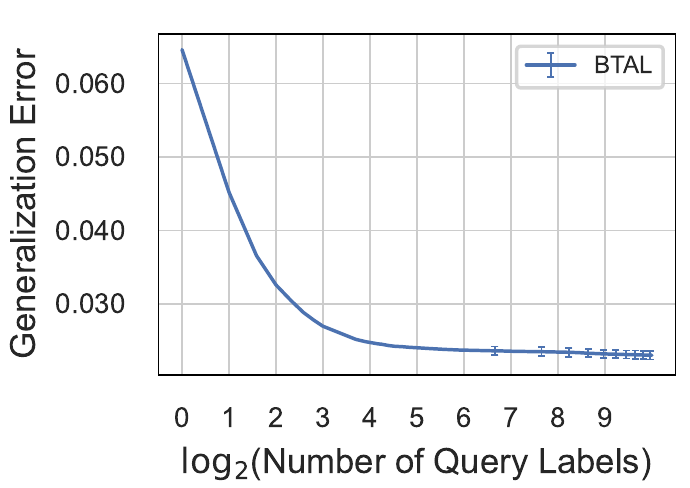}}
		
	\subfigure[\textit{covtype}]{
		\includegraphics[width=0.31\textwidth]{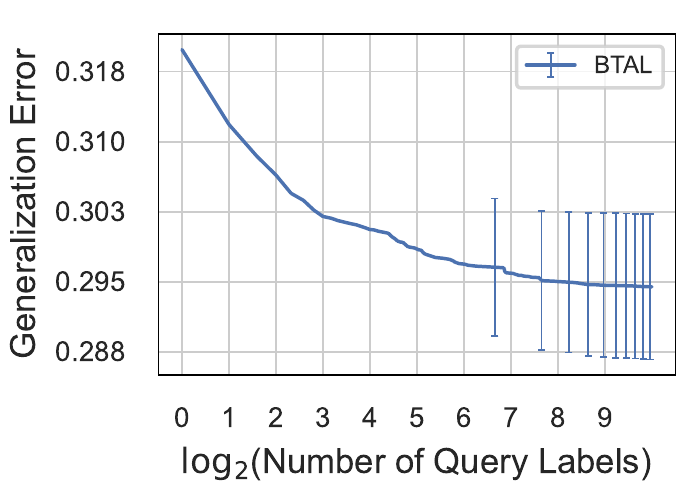}}
	\subfigure[\textit{nomao}]{
		\includegraphics[width=0.31\textwidth]{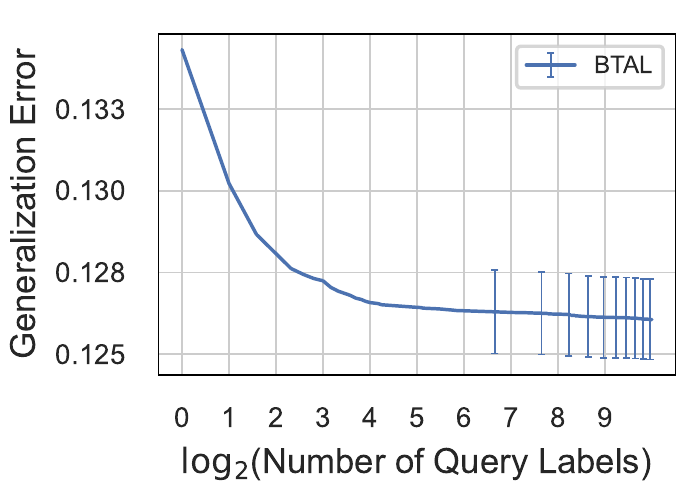}}
	\subfigure[\textit{jm1}]{
		\includegraphics[width=0.31\textwidth]{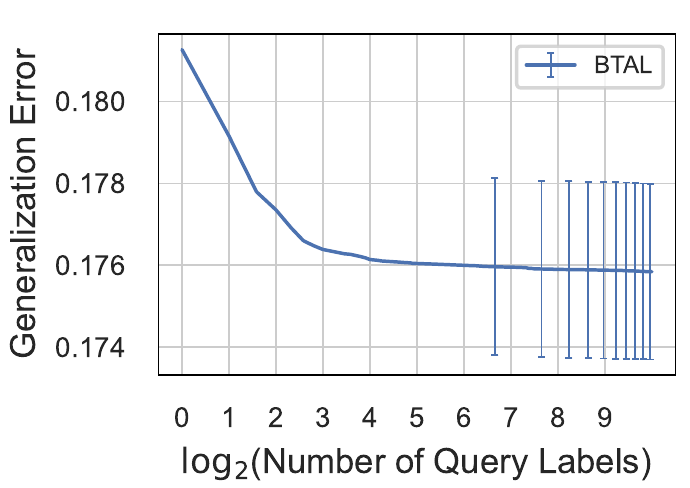}}
    \caption{The generalization error of teaching hypothesis for BTAL vs. the number of query labels ($\log2$ scale). }\label{fig:exp_two}
\end{figure}




\subsection{Real-world Studies}
We present the performance of BTAL and BTAL$^+$ in real-world studies. We first report the performance of BTAL in the setting of white-box learner, where IWAL\citep{beygelzimer2009importance} and IWAL-D\citep{cortes2019active} are used as the baseline. We then report the performance of BTAL$^+$ in the setting of black-box learner, where MVR\citep{freeman1965elementary}, ME\citep{shannon2001mathematical}, and Random\citep{gal2017deep} are used as the baseline. The reason for the different baselines in the two settings is that these algorithms are not directly applicable to each other.

\textbf{White-box learner} In this setting, we compare the performance of IWAL, IWAL-D, and BTAL on six UCI binary classification datasets. For all algorithms, we adopt the same settings as in Section~\ref{sec:Empirical Studies}, including 1) the initialization of the hypothesis set, 2) the loss function, and 3) the calculation method of $\mathcal{L}(\cdot,\cdot)$. For each dataset, we randomly select $70\%$ of the data as the training set and the remaining data  as the test set. We run the three algorithms $20$ times and collect the average results with standard error.

Firstly, we compare the performance of the hypothesis $\widehat{h}_T$ returned by IWAL, IWAL-D, and BTAL. Figure~\ref{fig:exp_three_a} presents the error rate of $\widehat{h}_T$ on the test dataset against the number of query labels (on $\log_2$ scale). The $\widehat{h}_T$ returned by IWAL and IWAL-D are subjected to the initial hypothesis class, so the error rate of $\widehat{h}_T$ is almost the same. However, the self-improvement of teaching strategy for BTAL can generate new hypotheses in the candidate hypothesis set, so $\widehat{h}_T$ has a lower error rate. This verifies the Theorem~\ref{thm:BTAL-learning guarantees} that the learner guided by a black-box teacher can converge into a tighter generalization error than those non-educated learners.

\begin{figure}[htbp] 
	\centering 
	\vspace{-0.35cm} 
	\subfigtopskip=2pt
	\subfigbottomskip=2pt 
	\subfigcapskip=-5pt 
	\subfigure[\textit{skin}]{
		\includegraphics[width=0.31\textwidth]{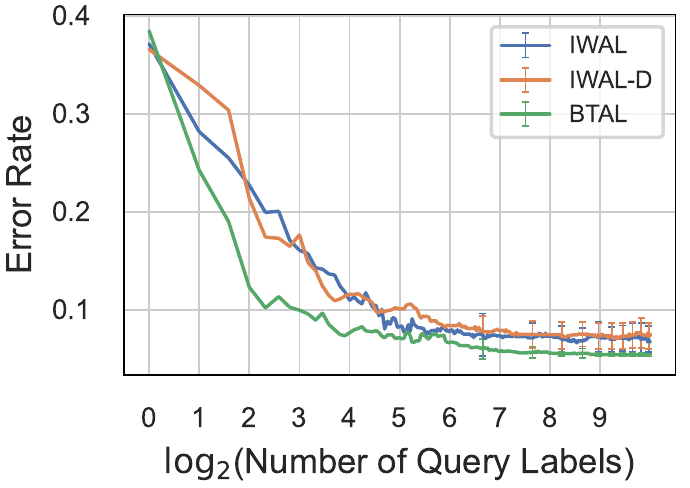}}
	\subfigure[\textit{magic04}]{
		\includegraphics[width=0.31\textwidth]{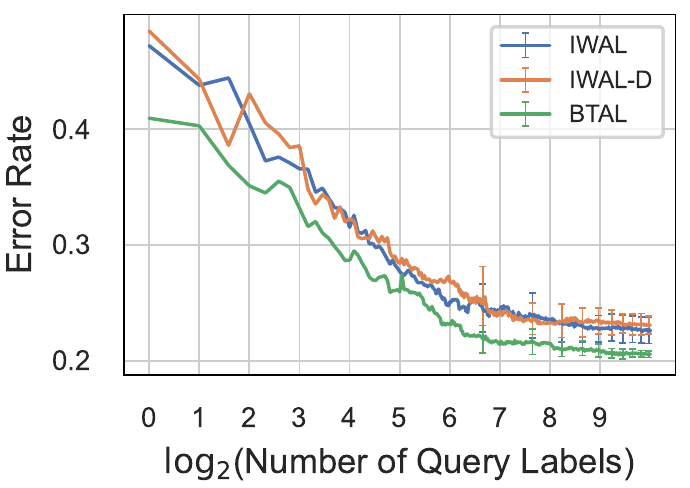}}
	\subfigure[\textit{shuttle}]{
		\includegraphics[width=0.31\textwidth]{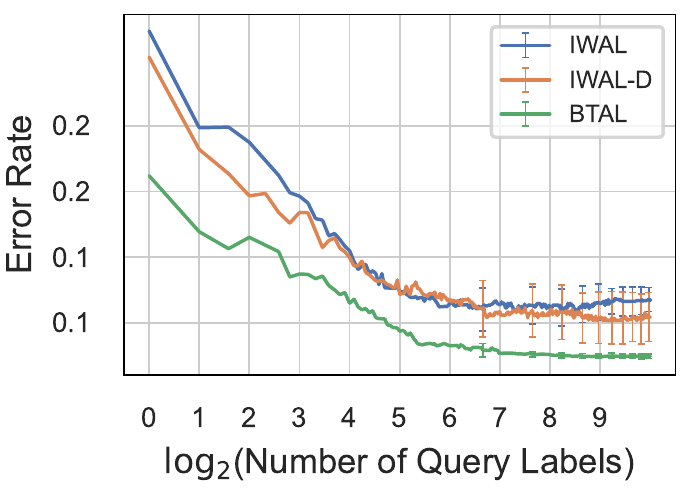}}
		
	\subfigure[\textit{covtype}]{
		\includegraphics[width=0.31\textwidth]{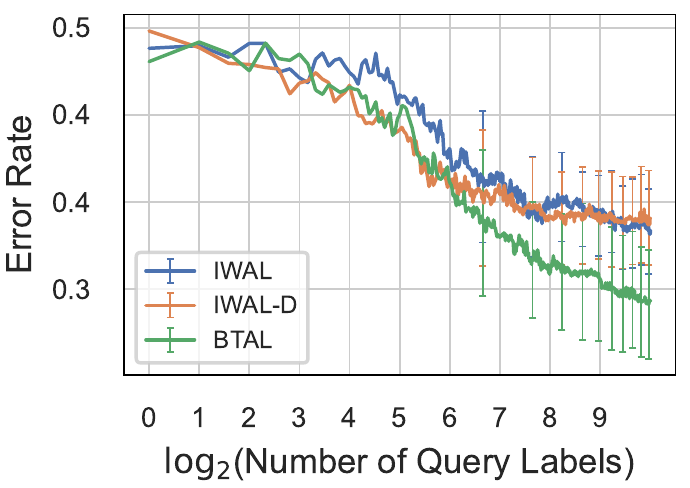}}
	\subfigure[\textit{nomao}]{
		\includegraphics[width=0.31\textwidth]{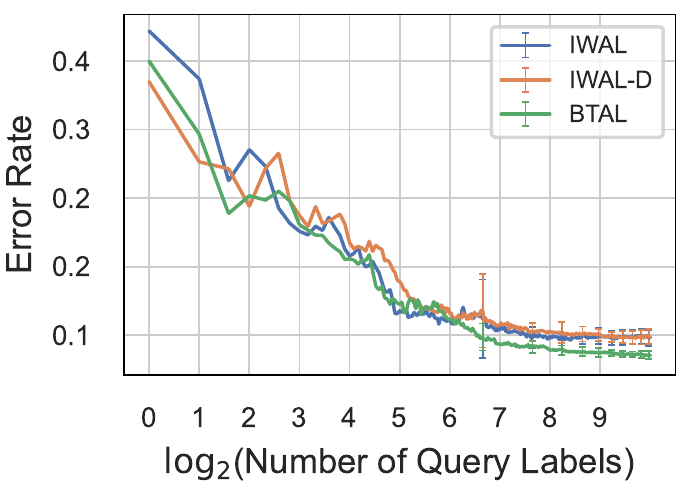}}
	\subfigure[\textit{jm1}]{
		\includegraphics[width=0.31\textwidth]{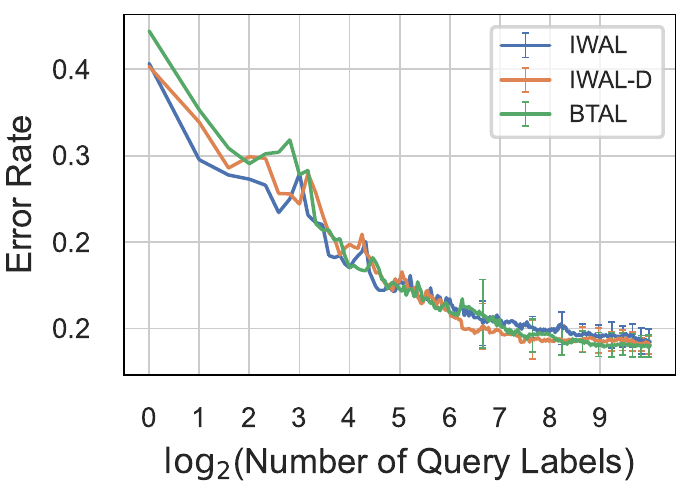}}
		\caption{The error rate of IWAL, IWAL-D, and BTAL on the test dataset vs. the number of query labels ($\log2$ scale).}\label{fig:exp_three_a}
\end{figure}



Secondly, we compare the number of query labels of IWAL, IWAL-D, and BTAL. Figure~\ref{fig:exp_three_b} presents the relationship between the number of query labels and the number of samples seen. IWAL-D uses the error disagreement of the learner for hypothesis pruning and thus spends  fewer the number of query labels than IWAL. BTAL uses the error disagreement between the teacher and the learner for hypothesis pruning, thus spending the fewest number of query labels. This further verifies the Theorem~\ref{thm:BTAL-learning guarantees} that the learner guided by a black-box teacher can converge into a tighter label complexity than  those non-educated  learners. 
\begin{figure}[htbp] 
	\centering 
	\vspace{-0.35cm} 
	\subfigtopskip=2pt
	\subfigbottomskip=2pt 
	\subfigcapskip=-5pt 
	\subfigure[\textit{skin}]{
		\includegraphics[width=0.31\textwidth]{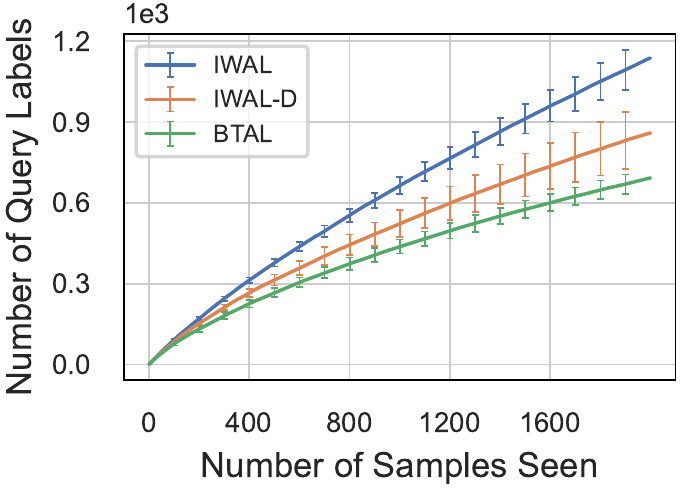}}
	\subfigure[\textit{magic04}]{
		\includegraphics[width=0.31\textwidth]{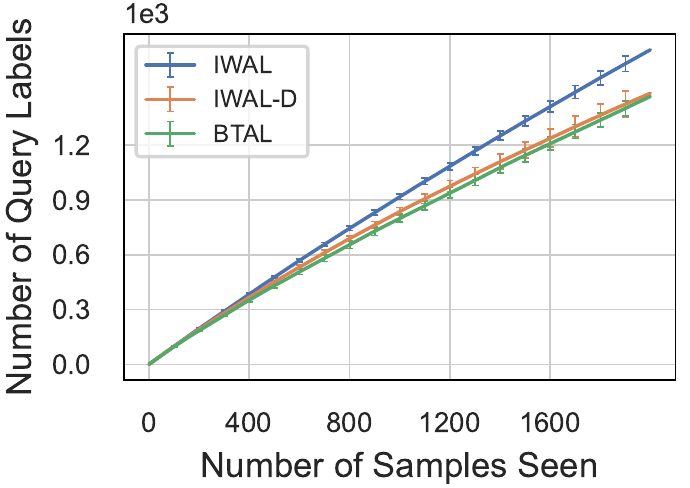}}
	\subfigure[\textit{shuttle}]{
		\includegraphics[width=0.31\textwidth]{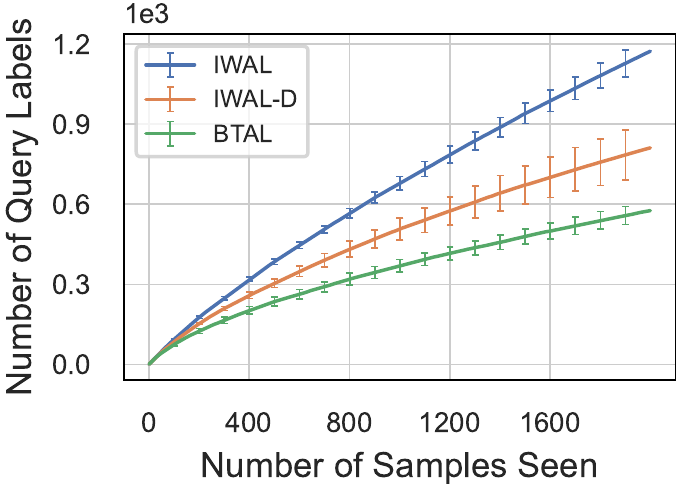}}
		
	\subfigure[\textit{covtype}]{
		\includegraphics[width=0.31\textwidth]{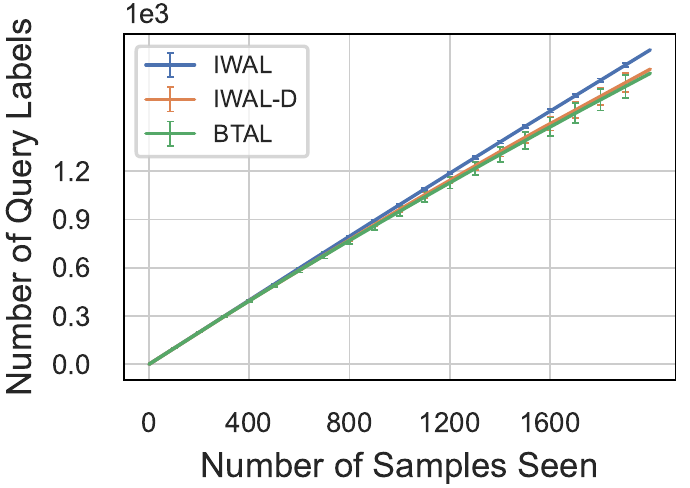}}
	\subfigure[\textit{nomao}]{
		\includegraphics[width=0.31\textwidth]{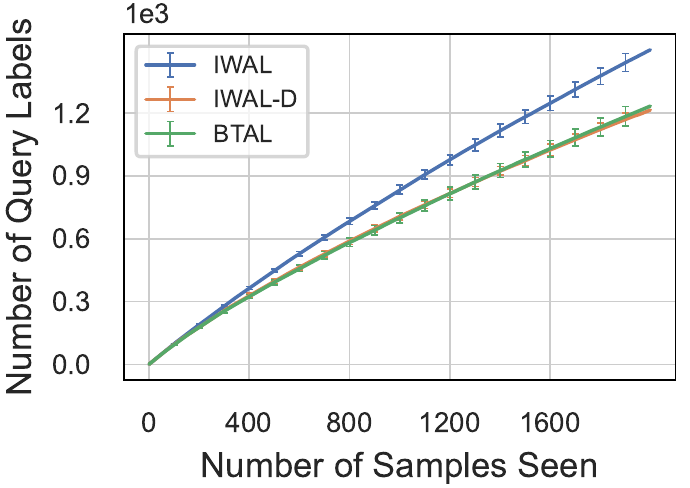}}
	\subfigure[\textit{jm1}]{
		\includegraphics[width=0.31\textwidth]{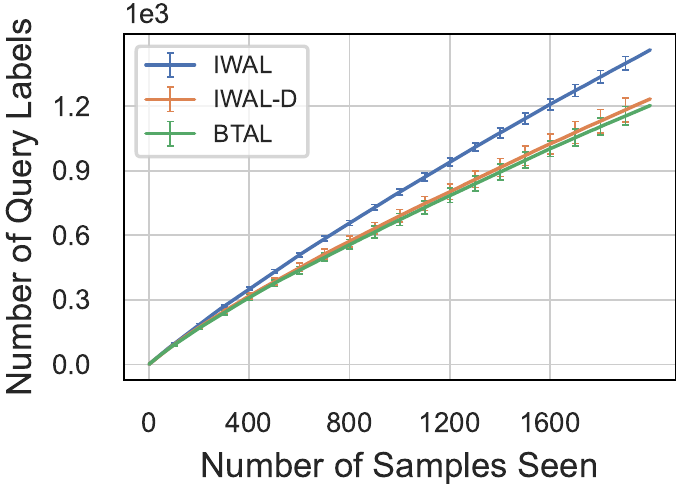}}
        \caption{The number of query labels of IWAL, IWAL-D, and BTAL vs. the number of samples seen.}\label{fig:exp_three_b}
\end{figure}

\begin{figure}[t]
    \centering
    \includegraphics[width=0.7\textwidth]{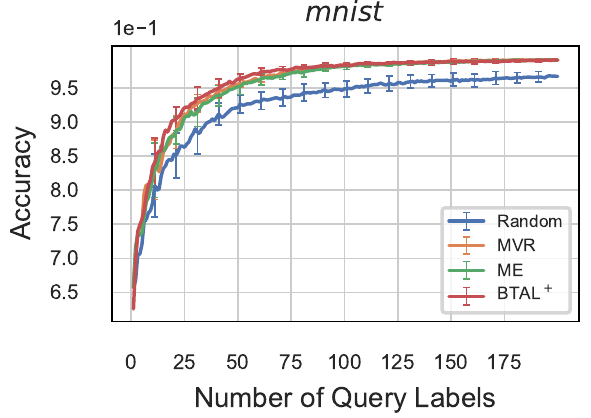}
    \vspace{-0.4cm}
    \caption{The accuracy of Random, MVR, ME, and BTAL$^+$ on the test dataset vs. the number of query labels.}\label{fig:exp_four}
    \vspace{-0.5cm}
\end{figure}

\textbf{Black-box learner} In this setting, we test BTAL$^+$ on the digits 3 and 5 of \textit{mnist} dataset\citep{crammer2009adaptive} as a binary classification task, where  $70\%$ of dataset is randomly selected as the training set and the remaining data as the test set. As a comparison, we examine the performance of several standard active learning algorithms: 1) maximize variation ratios (MVR), 2) max entropy (ME), and 3) random (Random). 

In all algorithms, we use the CNN network as a classifier following the structure of convolution-relu-convolution-relu-max pooling-dropout-dense-relu-dropout-dense, with the loss function: $\log(1+\exp(-yh(x))$ and normalize to $[0,1]$. In BTAL$^+$, the teaching hypothesis is specified as a pre-trained CNN model. 
We repeat the learning algorithm $20$ times and collect the average results with standard error. 

Figure~\ref{fig:exp_four} presents the relationship of  the test accuracy and the number of query labels, where BTAL$^+$  wins the traditional active learning baselines. The reasons are two-fold: 1) the traditional active learning algorithms have an unclear purpose, which leads to convergence of incremental updates that is usually infeasible; 2) BTAL$^+$ focuses on the disagreement between learner and teacher, and thus its convergence benefits from the pre-trained teaching hypothesis. This shows that teaching a black-box learner is also effective.


\section{Conclusion}\label{sec:Conclusion}
Black-box teaching an active learner is a new idea for the traditional active learning community. To keep fair teaching for error disagreement-based active learning, we set the machine teacher as a black-box, who only provides disagreement feedback to the learner. With this assumption, we introduce a teaching hypothesis to improve the hypothesis pruning, which results in tighter bounds on the generalization error and label complexity. Considering that the teaching hypothesis  may be a loose approximate 
to the optimal hypothesis, we also present the self-improvement of teaching. Guaranteed from our theoretical insights, we consider the teaching for a white-box and black-box learner. Rigorous analysis and solid experiments demonstrated the effectiveness of our teaching idea.
\newpage

\newpage

\appendix

\section{Proof}\label{app_proof}

\begingroup
\def\thetheorem{\ref{lem:T-AL}}
\begin{lemma}
For any teaching-hypothesis-class $\mathcal{H}^{\mathcal{T}}$, teaching an active learner runs on $\mathcal{H}^{\mathcal{T}}$, where the sequence of candidate hypothesis sets satisfies $H^{\mathcal{T}}_{t+1}\subseteq H^{\mathcal{T}}_{t}$ with $H^{\mathcal{T}}_1 = \mathcal{H}^{\mathcal{T}}$. Given any $\delta > 0$, with a probability at least $1-\delta$, for any $T\in\mathbb{N}^{+}$ and for all $h,h'\in H^{\mathcal{T}}_T$, the following inequality holds:
\begin{equation}\label{equ:LR inequality}
\left|L_T(h)-L_T(h')-\left(R(h)-R(h')\right)\right|\leq\left(1+\mathcal{L}(h,h')\right)\Delta_T,
\end{equation}
where $\Delta_{T}=\sqrt{(2/T)\log(2T(T+1)|\mathcal{H}^{\mathcal{T}}|^{2}/ \delta)}$.
\end{lemma}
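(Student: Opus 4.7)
The plan is to establish a uniform martingale concentration bound and then take a union bound over pairs of hypotheses and over the time horizon. For a fixed pair $h,h'\in\mathcal{H}^{\mathcal{T}}$ and a fixed $T$, I would work with the filtration $\mathcal{F}_{k-1}$ generated by $(x_1,Q_1,y_1),\ldots,(x_{k-1},Q_{k-1},y_{k-1})$ together with $x_k$, and define
$$Z_k(h,h') \;=\; \frac{Q_k}{p_k}\bigl[\ell(h(x_k),y_k)-\ell(h'(x_k),y_k)\bigr] \;-\; \bigl(R(h)-R(h')\bigr).$$
Conditioning first on $Q_k\sim\mathcal{B}(1,p_k)$ and then on $(x_k,y_k)\sim\mathcal{D}$, the $Q_k/p_k$ factor cancels the Bernoulli mass and the importance-weighted increment reduces in expectation to $R(h)-R(h')$, so $\mathbb{E}[Z_k\mid\mathcal{F}_{k-1}]=0$ and $\{Z_k\}$ is a martingale difference sequence.

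The main step is the almost-sure range bound $|Z_k|\le 1+\mathcal{L}(h,h')$. The key observation here is the nesting $H^{\mathcal{T}}_T \subseteq H^{\mathcal{T}}_k$ for every $k\le T$, which places both $h$ and $h'$ in $H^{\mathcal{T}}_k$. Since $p_k$ is defined as the maximum error disagreement on $x_k$ over $H^{\mathcal{T}}_k$, it follows that $p_k \ge \max_y|\ell(h(x_k),y)-\ell(h'(x_k),y)|$, so the importance-weighted summand is bounded by $1$ in absolute value when $Q_k=1$ (and is zero otherwise). For the centering term, Jensen's inequality applied inside the expectation defining $\mathcal{L}$ gives $|R(h)-R(h')|\le \mathbb{E}_x[\max_y|\ell(h(x),y)-\ell(h'(x),y)|] = \mathcal{L}(h,h')$. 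Combining yields $|Z_k|\le 1+\mathcal{L}(h,h')$.

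Next I would invoke Azuma--Hoeffding on $M_T=\sum_{k=1}^T Z_k$ (using the normalization of $L_T$ under which $\mathbb{E}[L_T(h)]=R(h)$): for every $\varepsilon>0$,
$$\Pr\!\bigl[\,\bigl|L_T(h)-L_T(h')-(R(h)-R(h'))\bigr|\ge \varepsilon\,\bigr] \;\le\; 2\exp\!\Bigl(-\tfrac{T\varepsilon^2}{2(1+\mathcal{L}(h,h'))^2}\Bigr).$$
Setting the right-hand side equal to $\delta\big/\bigl(T(T{+}1)|\mathcal{H}^{\mathcal{T}}|^2\bigr)$ and solving for $\varepsilon$ gives $\varepsilon = (1+\mathcal{L}(h,h'))\,\Delta_T$ with exactly the $\Delta_T$ stated in the lemma. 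A union bound over the at most $|\mathcal{H}^{\mathcal{T}}|^2$ ordered pairs $(h,h')\in\mathcal{H}^{\mathcal{T}}\times\mathcal{H}^{\mathcal{T}}$ and over $T\in\mathbb{N}^+$ (exploiting $\sum_{T\ge 1}\tfrac{1}{T(T+1)}=1$) absorbs the additional factors into the logarithm and yields the claim, restricted to $h,h'\in H^{\mathcal{T}}_T$ as required.

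The main obstacle I expect is bookkeeping around the range bound in the second step: one must carefully invoke the monotonicity $H^{\mathcal{T}}_{t+1}\subseteq H^{\mathcal{T}}_t$ to ensure that $p_k$ dominates the pointwise disagreement of every surviving pair, since otherwise the importance weight $1/p_k$ can blow up and the clean bound $|Z_k|\le 1+\mathcal{L}(h,h')$ fails. Once this is handled, the rest is a standard Azuma plus union-bound argument, and the factor $(1+\mathcal{L}(h,h'))$ in front of $\Delta_T$ is precisely the range constant appearing in the tail bound.
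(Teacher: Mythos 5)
Your proposal is correct and follows essentially the same route as the paper's proof: the same importance-weighted martingale difference sequence, the same use of the nesting $H^{\mathcal{T}}_T\subseteq H^{\mathcal{T}}_k$ to show $p_k$ dominates the pointwise disagreement of surviving pairs (hence the summand is bounded by $1$), the same bound $|R(h)-R(h')|\le\mathcal{L}(h,h')$ giving the range constant $1+\mathcal{L}(h,h')$, and the same Azuma-plus-union-bound over pairs and over $T$ via $\sum_{T\ge1}\frac{1}{T(T+1)}=1$. No gaps.
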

\addtocounter{theorem}{-1}
\endgroup

\begin{proof}
Pick any $T\in\mathbb{N}^{+}$ and a pair of $h,h'\in H^{\mathcal{T}}_{T}$. We define a sequence of random variables $\{U_1,\cdots,U_{T}\}$, where $U_t(t\in[T])$ with respect to $h,h'$:
\begin{equation*}
U_t = \frac{Q_t}{p_t}\left[\ell(h(x_t),y_t)-\ell(h'(x_t),y_t)\right]-[R(h)-R(h')],
\end{equation*}
We then solve for the expectation of the random variable $U_t$ with respect to the past:
\begin{equation*}
\begin{split}
&\mathbb{E}[U_t|U_1,\cdots,U_{t-1}] \\ 
&=\mathop{\mathbb{E}}\limits_{(x_t,y_t)\sim \mathcal{D}}\frac{Q_t}{p_t}[\ell(h(x_t),y_t)-\ell(h'(x_t),y_t)]-[R(h)-R(h')]\\
& = 0.
\end{split}
\end{equation*}
This indicates that $U_t$ has zero expectation of the past, i.e., the sequence of random variables $\{U_1,\cdots,U_{T}\}$ is a martingale difference sequence. 

In order to use the Azuma’s inequality, we also need to prove the individual $U_t$ are bounded. We split $U_t$ into two parts and prove that each is bounded separately.

We first prove that $|\ell(h(x_t),y_t)-\ell(h'(x_t),y_t)|$ is bounded. For all hypothesis pruning strategies, the sequence of candidate hypothesis sets satisfies $H^{\mathcal{T}}_T\subseteq H^{\mathcal{T}}_{T-1}\subseteq \cdots \subseteq H^{\mathcal{T}}_1 = \mathcal{H}^{\mathcal{T}}$. Thus for all $t\leq T$, combine the definition of $p_t$, we have:
\begin{equation}\label{equ_app:pt}
\begin{split}
& |\ell(h(x_t),y_t)-\ell(h'(x_t),y_t)| \\ & \leq  \max_y \left|\ell(h(x_t),y)-\ell(h'(x_t),y)\right| \\ &  \leq \max_{h_1,h_2\in H^{\mathcal{T}}_T}\max_y  \left|\ell(h_1(x_t),y)-\ell(h_2(x_t),y)\right| \\ &  \leq \max_{h_1,h_2\in H^{\mathcal{T}}_t}\max_y  \left|\ell(h_1(x_t),y)-\ell(h_2(x_t),y)\right| \\ & = p_t.
\end{split}
\end{equation}
We next prove that $|R(h)-R(h')|$ is bounded.
\begin{equation*}
\begin{split}
|R(h)-R(h')| & =\left| \mathop{\mathbb{E}}\limits_{(x,y)\sim \mathcal{D}} [\ell(h(x),y)-\ell(h'(x),y)] \right| \\  & \leq  \mathop{\mathbb{E}}\limits_{(x,y)\sim \mathcal{D}} \left| \ell(h(x),y)-\ell(h'(x),y)\right| \\  &  \leq \mathop{\mathbb{E}}\limits_{x\sim \mathcal{D}_{\mathcal{X}}} \left[ \max_y |\ell(h(x),y)-\ell(h'(x),y)|\right] \\  & =\mathcal{L}(h,h').
\end{split}
\end{equation*}
Using the above inequality, we obtain that $|U_t|$ is bounded for all $t\in[T]$:
\begin{equation*}
\begin{split}
|U_t|  & = \left| \frac{Q_t}{p_t}\left[\ell(h(x_t),y_t)-\ell(h'(x_t),y_t)\right]-[R(h)-R(h')]\right| \\ & \leq  \frac{1}{p_t}\left|\ell(h(x_t),y_t) - \ell(h'(x_t),y_t)\right| + |R(h)-R(h')| \\ & \leq 1+\mathcal{L}(h,h').
\end{split}
\end{equation*}

Thus  $\{U_1,\cdots,U_{T}\}$ is a martingale difference sequence with bounded $1+\mathcal{L}(h,h')$. To make the subsequent proof clearer, let $Z_t=\frac{U_t}{1+\mathcal{L}(h,h')}$. Then $\{Z_1,\cdots,Z_{T}\}$ is a martingale difference sequence with bounded $|Z_t|\leq 1$. Applying Azuma’s inequality to $\sum_{t=1}^{T}Z_t$:

\begin{equation*}
\begin{split}
& \mathbb{P}\left(|L_T(h)-L_T(h')- R(h)+R(h')|\geq(1+\mathcal{L}(h,h'))\Delta_T\right) \\
& = \mathbb{P}\left(\frac{1}{T}|\sum_{t=1}^{T} Z_t|\geq \Delta_T \right) \\
& = \mathbb{P}\left(|\sum_{t=1}^{T} Z_t|\geq T \Delta_T \right) \\
& \leq 2 \exp(\frac{-T^2\Delta^{2}_T}{2T})\\
& = \frac{\delta}{T(T+1)|\mathcal{H}^{\mathcal{T}}|^2}.
\end{split}
\end{equation*}

The above probability inequality shows that the probability that Eq.~(\ref{equ:LR inequality}) does not hold is less than $\frac{\delta}{T(T+1)|\mathcal{H}^{\mathcal{T}}|^2}$.

Since $H^{\mathcal{T}}_T$ is a random subset of $\mathcal{H}^{\mathcal{T}}$, a union bound over all $T\in\mathbb{N}^{+}$ and all pairs of $h,h'\in H^{\mathcal{T}}_{T}$, we can concludes the proof.
\end{proof}

\begingroup
\def\thetheorem{\ref{thm:retain the teacher}}
\begin{theorem}
For any teaching-hypothesis-class $\mathcal{H}^{\mathcal{T}}$, teaching an active learner runs on $\mathcal{H}^{\mathcal{T}}$. Given any $\delta > 0$, with a probability at least $1-\delta$, for any $t\in\mathbb{N}^{+}$, the following inequality holds:
\begin{equation*}
L_t(h^{\mathcal{T}})-L_t(\widehat{h}_t)\leq\left(1+\mathcal{F}^{\mathcal{T}}(\widehat{h}_t)\right)\Delta_t.  
\end{equation*}
\end{theorem}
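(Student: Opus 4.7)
The plan is to prove the inequality by induction on $t$, where the inductive hypothesis is the conjunction of the claimed bound and the membership $h^{\mathcal{T}} \in H^{\mathcal{T}}_t$. The two key inputs are Lemma~\ref{lem:T-AL}, which supplies the concentration bound $|L_t(h)-L_t(h')-(R(h)-R(h'))| \le (1+\mathcal{L}(h,h'))\Delta_t$ for any pair in the current candidate set, and the definitional fact (Definition~\ref{def:teacher}) that $h^{\mathcal{T}} = \arg\min_{h \in \mathcal{H}^{\mathcal{T}}} R(h)$, so $R(h^{\mathcal{T}}) \le R(h')$ for every $h' \in \mathcal{H}^{\mathcal{T}}$.

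First I would condition on the high-probability event of Lemma~\ref{lem:T-AL}, which holds uniformly over all $t$ and all pairs in $H^{\mathcal{T}}_t$ with probability at least $1-\delta$; all subsequent statements live inside this event. For the base case $t=1$ the membership $h^{\mathcal{T}} \in H^{\mathcal{T}}_1 = \mathcal{H}^{\mathcal{T}}$ is immediate. For the inductive step, assuming $h^{\mathcal{T}} \in H^{\mathcal{T}}_t$, I would apply Lemma~\ref{lem:T-AL} with $h = h^{\mathcal{T}}$ and $h' = \widehat{h}_t$ to obtain
\[
L_t(h^{\mathcal{T}}) - L_t(\widehat{h}_t) \le R(h^{\mathcal{T}}) - R(\widehat{h}_t) + \bigl(1 + \mathcal{L}(h^{\mathcal{T}}, \widehat{h}_t)\bigr)\Delta_t.
\]
Since $\widehat{h}_t \in H^{\mathcal{T}}_t \subseteq \mathcal{H}^{\mathcal{T}}$ and $h^{\mathcal{T}}$ is the $R$-minimizer over $\mathcal{H}^{\mathcal{T}}$, the gap $R(h^{\mathcal{T}}) - R(\widehat{h}_t)$ is non-positive, so $L_t(h^{\mathcal{T}}) - L_t(\widehat{h}_t) \le (1 + \mathcal{F}^{\mathcal{T}}(\widehat{h}_t))\Delta_t$ using $\mathcal{F}^{\mathcal{T}}(\widehat{h}_t)=\mathcal{L}(h^{\mathcal{T}},\widehat{h}_t)$. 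This is the target inequality at time $t$, and by the pruning rule in Eq.~(\ref{equ:hyp-pruning}) it immediately forces $h^{\mathcal{T}} \in H^{\mathcal{T}}_{t+1}$, closing the induction.

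I do not expect a deep obstacle here; the work is essentially bookkeeping. The only points requiring care are (i) verifying that the probability budget of Lemma~\ref{lem:T-AL} is already uniform in $t$ via the union bound hidden inside $\Delta_t = \sqrt{(2/t)\log(2t(t+1)|\mathcal{H}^{\mathcal{T}}|^{2}/\delta)}$, so no additional union bound is needed for the present theorem; and (ii) recognising that the inductive hypothesis $h^{\mathcal{T}} \in H^{\mathcal{T}}_t$ is genuinely indispensable, because Lemma~\ref{lem:T-AL} only controls pairs drawn from the current candidate set. Together these two observations propagate the chain $h^{\mathcal{T}} \in H^{\mathcal{T}}_1 \Rightarrow h^{\mathcal{T}} \in H^{\mathcal{T}}_2 \Rightarrow \cdots$ on a single high-probability event, yielding the conclusion for every $t \in \mathbb{N}^{+}$.
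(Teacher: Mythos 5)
Your proposal is correct and uses essentially the same argument as the paper: apply Lemma~\ref{lem:T-AL} to the pair $(h^{\mathcal{T}},\widehat{h}_t)$ and drop the non-positive term $R(h^{\mathcal{T}})-R(\widehat{h}_t)$ using the $R$-minimality of $h^{\mathcal{T}}$ over $\mathcal{H}^{\mathcal{T}}$. The only difference is that your induction explicitly establishes $h^{\mathcal{T}}\in H^{\mathcal{T}}_t$ before invoking the lemma at each step, a prerequisite the paper's proof leaves implicit, so if anything your version is slightly more careful.
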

\addtocounter{theorem}{-1}
\endgroup
\begin{proof}
Start by assuming that the $1-\delta$ probability event of Lemma \ref{lem:T-AL} holds.

Let $t = T \in\mathbb{N}^{+}$. By using the absolute value inequality, we have:
\begin{equation*}
\begin{split}
& L_t(h^{\mathcal{T}})-L_t(\widehat{h}_t) \\
& \leq R(h^{\mathcal{T}}) - R(\widehat{h}_t) +(1+\mathcal{L}(h^{\mathcal{T}},\widehat{h}_t))\Delta_t \\
& \leq (1+\mathcal{F}^{\mathcal{T}}(\widehat{h}_t))\Delta_t.
\end{split}
\end{equation*}

The last inequality follows from the fact that $h^{\mathcal{T}}$ has the minimum generalization error in $\mathcal{H}^{\mathcal{T}}$, i.e., $R(h^{\mathcal{T}}) - R(\widehat{h}_t)\leq 0$.

From the arbitrariness of $T$, the theorem is proved.
\end{proof}

\begingroup
\def\thetheorem{\ref{thm:TAL-learning guarantees}}
\begin{theorem}
For any teaching-hypothesis-class $\mathcal{H}^{\mathcal{T}}$, teaching an active learner runs on $\mathcal{H}^{\mathcal{T}}$. Given any $\delta > 0$, with a probability at least $1-\delta$, for any $T\in\mathbb{N}^{+}$, the following holds: 

1) the generalization error holds \[R(\widehat{h}_T)\le R(h^{*})+\left(2+\mathcal{F}^{\mathcal{T}}(\widehat{h}_{T-1})+\mathcal{F}^{\mathcal{T}}(\widehat{h}_T)\right)\Delta_{T-1}+\epsilon;\]
2) if the learning problem has disagreement coefficient $\theta$, the label complexity is at most \[
\tau_T \leq 2 \theta\left(2TR(h^{*})+\big(3+\mathcal{F}^{\mathcal{T}}(\widehat{h}_{T-1}))\big) O(\sqrt{T})+2T\epsilon\right).\]
\end{theorem}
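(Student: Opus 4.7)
\emph{Part 1 (generalization error).} The plan is to bound $R(\widehat{h}_T)-R(h^{\mathcal{T}})$ first and then absorb the gap between $h^{\mathcal{T}}$ and $h^*$ via Assumption~\ref{ass:T-AL}. First I would invoke Theorem~\ref{thm:retain the teacher} at time $T-1$ to guarantee $h^{\mathcal{T}}\in H^{\mathcal{T}}_T$, which legitimizes comparisons against $h^{\mathcal{T}}$ inside the pruning rule. Next, I apply Lemma~\ref{lem:T-AL} at time $T-1$ with $h=\widehat{h}_T$, $h'=h^{\mathcal{T}}$, noting $\mathcal{L}(\widehat{h}_T,h^{\mathcal{T}})=\mathcal{F}^{\mathcal{T}}(\widehat{h}_T)$, to obtain
\[
R(\widehat{h}_T)-R(h^{\mathcal{T}})\le L_{T-1}(\widehat{h}_T)-L_{T-1}(h^{\mathcal{T}})+\bigl(1+\mathcal{F}^{\mathcal{T}}(\widehat{h}_T)\bigr)\Delta_{T-1}.
\]
Since $\widehat{h}_T\in H^{\mathcal{T}}_T$, the pruning rule from Eq.~(\ref{equ:hyp-pruning}) at round $T-1$ gives $L_{T-1}(\widehat{h}_T)\le L_{T-1}(\widehat{h}_{T-1})+(1+\mathcal{F}^{\mathcal{T}}(\widehat{h}_{T-1}))\Delta_{T-1}$, and because $\widehat{h}_{T-1}$ minimizes $L_{T-1}$ over $H^{\mathcal{T}}_{T-1}$ with $h^{\mathcal{T}}\in H^{\mathcal{T}}_{T-1}$ we have $L_{T-1}(\widehat{h}_{T-1})\le L_{T-1}(h^{\mathcal{T}})$. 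Chaining these inequalities yields $R(\widehat{h}_T)-R(h^{\mathcal{T}})\le (2+\mathcal{F}^{\mathcal{T}}(\widehat{h}_{T-1})+\mathcal{F}^{\mathcal{T}}(\widehat{h}_T))\Delta_{T-1}$. Finally, Corollary~\ref{cor:error} gives $R(h^{\mathcal{T}})\le R(h^*)+\epsilon$, completing part~1.

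\emph{Part 2 (label complexity).} The skeleton is to write $\tau_T=\sum_{t=1}^T\mathbb{E}[p_t]$, reduce $p_t$ to a single-hypothesis ball around $h^*$ via the triangle inequality, bound the radius of that ball uniformly over $H^{\mathcal{T}}_t$, and then invoke the disagreement coefficient. Concretely, $p_t\le 2\max_{h\in H^{\mathcal{T}}_t}\max_y|\ell(h(x_t),y)-\ell(h^*(x_t),y)|$ by triangle inequality on the inner absolute value. For any $h\in H^{\mathcal{T}}_t$, the same chain as in part~1 (replacing $\widehat{h}_T$ by $h$) gives
\[
R(h)-R(h^{\mathcal{T}})\le \bigl(2+\mathcal{F}^{\mathcal{T}}(\widehat{h}_{t-1})+\mathcal{F}^{\mathcal{T}}(h)\bigr)\Delta_{t-1},
\]
and since losses lie in $[0,1]$ we have $\mathcal{F}^{\mathcal{T}}(h)\le 1$, so combined with Corollary~\ref{cor:error},
\[
R(h)-R(h^*)\le \bigl(3+\mathcal{F}^{\mathcal{T}}(\widehat{h}_{t-1})\bigr)\Delta_{t-1}+\epsilon.
\]
Therefore $\rho(h,h^*)\le R(h)+R(h^*)\le 2R(h^*)+(3+\mathcal{F}^{\mathcal{T}}(\widehat{h}_{t-1}))\Delta_{t-1}+\epsilon=:r_t$, i.e.\ $H^{\mathcal{T}}_t\subseteq B(h^*,r_t)$. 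Applying the disagreement coefficient inequality Eq.~(\ref{equ:error disagreement coefficient}) then gives $\mathbb{E}[p_t]\le 2\theta r_t$. Summing over $t$ and using $\sum_{t=1}^T\Delta_{t-1}=O(\sqrt{T})$ (with the coefficient $(3+\mathcal{F}^{\mathcal{T}}(\widehat{h}_{T-1}))$ treated as the representative bound across rounds, as in the statement) yields the announced label complexity.

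\emph{Main obstacle.} The delicate step is part~2: the uniform radius $r_t$ depends on $\mathcal{F}^{\mathcal{T}}(h)$ through the chain inequality, and one must use the boundedness of the loss ($\mathcal{F}^{\mathcal{T}}(\cdot)\le 1$) to produce a hypothesis-independent radius so that the disagreement coefficient can be invoked cleanly. A second subtlety is justifying the appearance of $\mathcal{F}^{\mathcal{T}}(\widehat{h}_{T-1})$ rather than a round-dependent $\mathcal{F}^{\mathcal{T}}(\widehat{h}_{t-1})$ inside the sum; this is naturally handled by replacing the round-$t$ feedback by its terminal-round representative (or, more conservatively, by $\max_{t\le T-1}\mathcal{F}^{\mathcal{T}}(\widehat{h}_{t-1})$), which is consistent with the notation in the theorem statement. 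Everything else reduces to routine application of Lemma~\ref{lem:T-AL}, Theorem~\ref{thm:retain the teacher}, the pruning rule, and optimality of $\widehat{h}_{t-1}$ in $H^{\mathcal{T}}_{t-1}$.
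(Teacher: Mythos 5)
Your proposal is correct and follows essentially the same route as the paper: part 1 is the identical chain (Lemma~\ref{lem:T-AL} at time $T-1$ applied to $\{\widehat{h}_T,h^{\mathcal{T}}\}$, the pruning rule, minimality of $\widehat{h}_{T-1}$, then Corollary~\ref{cor:error}), and part 2 is the same ball-plus-disagreement-coefficient argument, with the only cosmetic difference that you center the ball at $h^*$ (folding $\epsilon$ into the radius) while the paper centers it at $h^{\mathcal{T}}$ and converts $R(h^{\mathcal{T}})$ to $R(h^*)+\epsilon$ at the end. You are also, if anything, more explicit than the paper about the step that replaces the round-dependent $\mathcal{F}^{\mathcal{T}}(\widehat{h}_{t-1})$ inside the sum by the terminal $\mathcal{F}^{\mathcal{T}}(\widehat{h}_{T-1})$, which the paper writes as an equality without comment.
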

\addtocounter{theorem}{-1}
\endgroup

\begin{proof}
Start by assuming that the $1-\delta$ probability event of Lemma \ref{lem:T-AL} holds. 

Firstly, we give the bound of $R(\widehat{h}_T)$. Since $H^{\mathcal{T}}_T \subseteq H^{\mathcal{T}}_{T-1}$, there exists $\widehat{h}_T,h^{\mathcal{T}}\in H^{\mathcal{T}}_{T-1}$. To eliminate the importance-weighted empirical error, we consider Eq.~(\ref{equ:LR inequality}) with respect to $\widehat{h}_T,h^{\mathcal{T}}$ at $T-1$-time:
\begin{equation*}
\begin{split}
&   R(\widehat{h}_T) - R(h^{\mathcal{T}})\\
& \leq L_{T-1}(\widehat{h}_T) - L_{T-1}(h^{\mathcal{T}}) +(1+\mathcal{L}(\widehat{h}_T,h^{\mathcal{T}}))\Delta_{T-1} \\
& \leq L_{T-1}(\widehat{h}_{T-1})+ (1+ \mathcal{F}^{\mathcal{T}}(\widehat{h}_{T-1}))\Delta_{T-1} -  L_{T-1}(\widehat{h}_{T-1}) +(1+\mathcal{F}^{\mathcal{T}}(\widehat{h}_{T}))\Delta_{T-1} \\
& \leq \left(2+\mathcal{F}^{\mathcal{T}}(\widehat{h}_{T-1})+\mathcal{F}^{\mathcal{T}}(\widehat{h}_{T})\right)\Delta_{T-1},
\end{split}
\end{equation*}
where the second to last inequality follows from teaching-based hypothesis pruning rule w.r.t. Eq.~(\ref{equ:hyp-pruning}). Thus, for any $T\in\mathbb{N}^{+}$, the bound of generalization error for $\widehat{h}_T$ satisfies the following inequality:
\begin{equation*}
\begin{split}
R(\widehat{h}_T) & \le R(h^{\mathcal{T}})+\left(2+\mathcal{F}^{\mathcal{T}}(\widehat{h}_{T-1})+\mathcal{F}^{\mathcal{T}}(\widehat{h}_{T})\right)\Delta_{T-1}\\
& \le R(h^{*})+\left(2+\mathcal{F}^{\mathcal{T}}(\widehat{h}_{T-1})+\mathcal{F}^{\mathcal{T}}(\widehat{h}_T)\right)\Delta_{T-1}+\epsilon,
\end{split}
\end{equation*}
where the last inequality comes from Corollary~\ref{cor:error}.

Next, we give the upper bound of $\tau_T$. For any $h\in\mathcal{H}^{\mathcal{T}}$ and the teaching hypothesis $h^{\mathcal{T}}$, their disagreement $\rho(h,h^{\mathcal{T}})$ w.r.t Eq.~(\ref{equ:new disagreement}) has the upper bound:
\begin{equation*}
\begin{split}
\rho(h,h^{\mathcal{T}}) &  = \mathop{\mathbb{E}}\limits_{(x,y)\sim \mathcal{D}} \left| \ell(h(x),y)-\ell(h'(x),y)\right| \\
& \leq \mathop{\mathbb{E}}\limits_{(x,y)\sim \mathcal{D}} \left[ \ell(h(x),y)+\ell(h'(x),y)\right]\\
& \leq R(h)+R(h^{\mathcal{T}}).
\end{split}
\end{equation*}

For any $t\in[T]$, if $h\in H^{\mathcal{T}}_t$, using Lemma~\ref{lem:T-AL}, we have the upper bound for $R(h)$:
\begin{equation}\label{equ:rho of ht}
    R(h)\leq R(h^{\mathcal{T}})+\left(2+\mathcal{F}^{\mathcal{T}}(\widehat{h}_{t-1})+\mathcal{F}^{\mathcal{T}}(h)\right)\Delta_{t-1}.
\end{equation}
When $h\in H^{\mathcal{T}}_t$, we use Eq.~(\ref{equ:rho of ht}) to rewrite the upper bound of $\rho(h,h^{\mathcal{T}})$ as follows:
\begin{equation*}
\rho(h,h^{\mathcal{T}}) \leq 2R(h^{\mathcal{T}})+\left(2+\mathcal{F}^{\mathcal{T}}(\widehat{h}_{t-1})+\mathcal{F}^{\mathcal{T}}(h)\right)\Delta_{t-1}.
\end{equation*}
The above inequality shows that there is a common upper bound on the disagreement between any $h$ in $H^{\mathcal{T}}_t$ and the teaching hypothesis $h^{\mathcal{T}}$. Thus, we can construct a ball $B(h^{\mathcal{T}},r_t)$ such that $H^{\mathcal{T}}_t\subseteq B(h^{\mathcal{T}},r_t)$ for any $t$-time, where
\begin{equation}\label{equ:rt}
r_t=2R(h^{\mathcal{T}})+\left(2+\mathcal{F}^{\mathcal{T}}(\widehat{h}_{t-1})+\max_{h\in H^{\mathcal{T}}_t}\mathcal{F}^{\mathcal{T}}(h)\right)\Delta_{t-1}.
\end{equation}

Let $\mathcal{O}_t$ denote all the previous observations up to $t$-time: $\mathcal{O}_t=\left\{(x_1,y_1,p_1,Q_1),...,(x_t,y_t,p_t,Q_t)\right\}$ with $\mathcal{O}_0=\emptyset$. By using the error disagreement coefficient w.r.t Eq.~(\ref{equ:error disagreement coefficient}), the expected value of the query probability $p_t$ is at most:
\begin{equation*}
\begin{split}
&\mathop{\mathbb{E} }\limits_{x_t \sim  \mathcal{D}_{\mathcal{X}}}\left[p_t\mid\mathcal{O}_{t-1}\right]\\
& = \mathop{\mathbb{E} }\limits_{x_t \sim  \mathcal{D}_{\mathcal{X}}} \max_{h,h'\in H^{\mathcal{T}}_t}\max_{y} |\ell(h(x_t),y)- \ell(h'(x_t),y)| \\
& \leq 2 \mathop{\mathbb{E} }\limits_{x_t \sim  \mathcal{D}_{\mathcal{X}}} \max_{h\in H^{\mathcal{T}}_t}\max_{y} |\ell(h(x_t),y)- \ell(h^{\mathcal{T}}(x_t),y)| \\
& \leq 2 \mathop{\mathbb{E} }\limits_{x_t \sim  \mathcal{D}_{\mathcal{X}}} \max_{h\in B(h^{T},r_t)}\max_{y} |\ell(h(x_t),y)- \ell(h^{\mathcal{T}}(x_t),y)| \\
& \leq 2 \theta r_t \\
& = 2 \theta\left( 2R(h^{\mathcal{T}})+(2+\mathcal{F}^{\mathcal{T}}(\widehat{h}_{t-1})+\max_{h\in H^{\mathcal{T}}_t}\mathcal{F}^{\mathcal{T}}(h))\Delta_{t-1}\right),
\end{split}
\end{equation*}
where the first inequality follows from the triangle inequality, the second inequality follows from $H^{\mathcal{T}}_t\subseteq B(h^{\mathcal{T}},r_t)$, and the third inequality follows from the definition of $\theta$.

Summing over $t=1,...,T$, we get the upper bound of the label complexity $\tau_T$:
\begin{equation*}
\begin{split}
\tau_T &=  \sum_{t=1}^{T} \mathop{\mathbb{E} }\limits_{x_t \sim  \mathcal{D}_{\mathcal{X}}}\left[p_t\mid\mathcal{O}_{t-1}\right] \\
&= \sum_{t=1}^{T} 2 \theta\left( 2R(h^{\mathcal{T}})+(2+\mathcal{F}^{\mathcal{T}}(\widehat{h}_{t-1})+\max_{h\in H^{\mathcal{T}}_t}\mathcal{F}^{\mathcal{T}}(h))\Delta_{t-1}\right)\\
& \leq \sum_{t=1}^{T} 2 \theta\left(2R(h^{\mathcal{T}})+(3+\mathcal{F}^{\mathcal{T}}(\widehat{h}_{t-1}))\Delta_{t-1}\right)\\
& = 2 \theta\left(2TR(h^{\mathcal{T}})+(3+\mathcal{F}^{\mathcal{T}}(\widehat{h}_{T-1})) \sum_{t=1}^{T}  \Delta_{t-1}\right)\\
& \leq 2 \theta\left(2TR(h^{\mathcal{T}})+(3+\mathcal{F}^{\mathcal{T}}(\widehat{h}_{T-1})) O(\sqrt{T})\right),
\end{split}
\end{equation*}
where the last inequality uses $\sum_{t=1}^{T} \sqrt{\frac{1}{t}}=O(\sqrt{T})$. Recalling Corollary~\ref{cor:error}, there exists
\begin{equation*}
    \tau_T \leq 2 \theta\left(2TR(h^{*})+\big(3+\mathcal{F}^{\mathcal{T}}(\widehat{h}_{T-1}))\big) O(\sqrt{T})+2T\epsilon\right).
\end{equation*}

\end{proof}

\begingroup
\def\thetheorem{\ref{lem:max_disagreement}}
\begin{lemma}
For any teaching-hypothesis-class $\mathcal{H}^{\mathcal{T}}$, teaching an active learner runs on $\mathcal{H}^{\mathcal{T}}$. If the loss function can be rewritten to form $\ell(h(x),y)=\phi(yh(x))$ and the function $\phi$ is non-increasing and convex, for any candidate hypothesis set $H^{\mathcal{T}}_t$ and for all $x\in\mathcal{X}$, the following equation holds:
\begin{equation}\label{equ:left=right}
\max_{h,h'\in \mathrm{Conv}(H^{\mathcal{T}}_t)}\max_{y} \left|\ell(h(x),y)- \ell(h'(x),y)\right|=
\max_{h,h'\in H^{\mathcal{T}}_t}\max_{y} \left|\ell(h(x),y)- \ell(h'(x),y)\right|,
\end{equation}
where $\mathrm{Conv}(H^{\mathcal{T}}_t)$ is the convex hull of the hypothesis set $H^{\mathcal{T}}_t$.
\end{lemma}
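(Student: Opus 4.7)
\textbf{Proof proposal for Lemma~\ref{lem:max_disagreement}.}

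The plan is to prove the two inequalities underlying Eq.~(\ref{equ:left=right}) separately. The direction $\geq$ is immediate because $H^{\mathcal{T}}_t\subseteq\mathrm{Conv}(H^{\mathcal{T}}_t)$, so every pair in $H^{\mathcal{T}}_t$ is admissible on the left-hand side. All the work goes into the $\leq$ direction, which is where monotonicity of $\phi$ plays the essential role.

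Fix $x$ and fix $y$ (the label set is effectively $\{-1,+1\}$ under the binary-classification setup with $\ell(h(x),y)=\phi(yh(x))$). Let $H^{\mathcal{T}}_t$ be given and set
\[
A_y=\min_{g\in H^{\mathcal{T}}_t} yg(x),\qquad B_y=\max_{g\in H^{\mathcal{T}}_t} yg(x),
\]
each achieved by some hypothesis, call them $g_{\min},g_{\max}\in H^{\mathcal{T}}_t$. Now take an arbitrary $h\in\mathrm{Conv}(H^{\mathcal{T}}_t)$, written as $h=\sum_i \lambda_i h_i$ with $h_i\in H^{\mathcal{T}}_t$, $\lambda_i\geq 0$, $\sum_i\lambda_i=1$. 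Then
\[
y\,h(x)=\sum_i \lambda_i\, y\,h_i(x)\ \in\ [A_y,B_y],
\]
since a convex combination of reals lies in the interval spanned by them. Because $\phi$ is non-increasing, it follows that
\[
\phi(B_y)\ \leq\ \phi(yh(x))\ \leq\ \phi(A_y).
\]
The same holds for any second $h'\in\mathrm{Conv}(H^{\mathcal{T}}_t)$, so
\[
\bigl|\ell(h(x),y)-\ell(h'(x),y)\bigr|=\bigl|\phi(yh(x))-\phi(yh'(x))\bigr|\ \leq\ \phi(A_y)-\phi(B_y).
\]
But $\phi(A_y)-\phi(B_y)=|\ell(g_{\min}(x),y)-\ell(g_{\max}(x),y)|\leq \max_{h_1,h_2\in H^{\mathcal{T}}_t}|\ell(h_1(x),y)-\ell(h_2(x),y)|$.

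Taking the maximum over $y$ on both sides and then the supremum over $h,h'\in\mathrm{Conv}(H^{\mathcal{T}}_t)$ yields the $\leq$ inequality, which combined with the trivial $\geq$ gives the claimed equality. The only subtle point to check is the identification of $\phi(A_y)-\phi(B_y)$ with a genuine pairwise disagreement inside $H^{\mathcal{T}}_t$, which holds because the extrema $A_y,B_y$ are attained by members of $H^{\mathcal{T}}_t$. I do not anticipate any real obstacle: the convexity hypothesis on $\phi$ is not actually used; monotonicity plus the convex-combination argument on the pre-image side of $\phi$ suffice, so the proof should be short and direct.
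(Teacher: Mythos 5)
Your proof is correct and follows essentially the same route as the paper's: the trivial inclusion $H^{\mathcal{T}}_t\subseteq\mathrm{Conv}(H^{\mathcal{T}}_t)$ for one direction, and for the other the observation that a convex combination keeps $yh(x)$ inside $[\min_j yh_j(x),\max_j yh_j(x)]$, so the non-increasing $\phi$ traps $\ell(h(x),y)$ between the minimum and maximum losses, both attained by members of $H^{\mathcal{T}}_t$ itself. Your side remark is also accurate: the paper's own argument likewise uses only monotonicity of $\phi$ here, with convexity only becoming necessary later (e.g.\ in the proof of Theorem~\ref{thm:BTAL-learning guarantees}, where Jensen's inequality is applied to bound $R(h)$ for $h$ in the convex hull).
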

\addtocounter{theorem}{-1}
\endgroup
\begin{proof}
Let $f(x)=\max_{h,h'\in \mathrm{Conv}(H^{\mathcal{T}}_t)}\max_{y} \left|\ell(h(x),y)- \ell(h'(x),y)\right|$ denote the left-hand side of Eq.~(\ref{equ:left=right}), and $g(x)=\max_{h,h'\in H^{\mathcal{T}}_t)}\max_{y} \left|\ell(h(x),y)- \ell(h'(x),y)\right|$ denote the right-hand side of Eq.~(\ref{equ:left=right}). For all $x\in\mathcal{X}$, we prove $f(x)\geq g(x)$, then prove $f(x)\leq g(x)$, and get $f(x)=g(x)$. Since $\mathrm{Conv}(H^{\mathcal{T}}_t) \supseteq H^{\mathcal{T}}_t$, there exists $f(x)\geq g(x)$. We next prove $f(x)\leq g(x)$.

For any $t\in[T]$ and for all hypotheses $h\in\mathrm{Conv}(H^{\mathcal{T}}_{t})$, $h$ can be linear representation by the hypothesis $h_j$ in $H^{\mathcal{T}}_t=\{h_1,h_2,...,h_{|H^{\mathcal{T}}_t|}\}$, that is
\begin{equation*}
    h=\sum_{j=1}^{|H^{\mathcal{T}}_t|} \lambda_j h_j,
\end{equation*}
where $\sum_j^m \lambda_j=1$ with $\lambda_j\in[0,1]$. 

Based on  the additional assumptions of the loss function, $\ell(h(x),y)=\phi(yh(x))$, where $\phi$ is a non-increasing function. Then, for all hypotheses $h\in\mathrm{Conv}(H^{\mathcal{T}}_{t})$ and for any $(x, y) \in \mathcal{X}\times\mathcal{Y}$, $\ell(h(x),y)$ has the following upper bound:
\begin{equation}\label{equ:loss upper bound}
\begin{split}
\ell(h(x),y) &=\ell\left(\sum_{j=1}^{|H^{\mathcal{T}}_t|} \lambda_j h_j(x),y\right) \\ 
&= \phi\left(y\sum_{j=1}^{|H^{\mathcal{T}}_t|} \lambda_j h_j(x)\right) \\ 
& \leq \max_{h_j \in H^{\mathcal{T}}_t} \phi(yh_j(x)) \\
& = \max_{h_j \in H^{\mathcal{T}}_t} \ell(h_j(x),y).
\end{split}
\end{equation}
Similarly, $\ell(h(x),y)$ has the following lower bound:
\begin{equation}\label{equ:loss lower bound}
\begin{split}
\ell(h(x),y) &=\ell\left(\sum_{j=1}^{|H^{\mathcal{T}}_t|} \lambda_j h_j(x),y\right) \\ 
&= \phi\left(y\sum_{j=1}^{|H^{\mathcal{T}}_t|} \lambda_j h_j(x)\right) \\ 
& \geq \min_{h_j \in H^{\mathcal{T}}_t} \phi(yh_j(x)) \\
& = \min_{h_j \in H^{\mathcal{T}}_t} \ell(h_j(x),y).
\end{split}
\end{equation}

With inequalities Eq.~(\ref{equ:loss upper bound})-(\ref{equ:loss lower bound}), for all hypotheses $h \in \mathrm{Conv}(H^{\mathcal{T}}_T)$ over any $(x, y) \in \mathcal{X}\times\mathcal{Y}$, the loss $\ell(h(x),y)$ can be bounded by the loss of hypothesis $h_j\in H^{\mathcal{T}}_t$.

Using the above properties of the loss function w.r.t. Eq.~(\ref{equ:loss upper bound})-(\ref{equ:loss lower bound}), we can derive the upper bound of error disagreement for any hypothesis pair $\{h,h'\}\subseteq \mathrm{Conv}(H^{\mathcal{T}}_t)$ over all $(x, y) \in \mathcal{X}\times\mathcal{Y}$. For any $h,h'\in \mathrm{Conv}(H^{\mathcal{T}}_t)$ and for all $(x, y) \in \mathcal{X}\times\mathcal{Y}$, there exists
\begin{equation*}
\begin{split}
&\ell(h(x),y)- \ell(h'(x),y) \\ 
&\leq \max_{h\in H^{\mathcal{T}}_t}\ell(h(x),y)- \min_{h\in H^{\mathcal{T}}_t} \ell(h(x),y)  \\ 
& \leq\max_{h,h'\in H^{\mathcal{T}}_t} |\ell(h(x),y)- \ell(h'(x),y)| \\
& \leq\max_{h,h'\in H^{\mathcal{T}}_t}\max_{y} |\ell(h(x),y)- \ell(h'(x),y)|.
\end{split}
\end{equation*}

The above inequality shows that the error disagreement in $\mathrm{Conv}(H^{\mathcal{T}}_t)$ over a fixed sample $x$ will not exceed the maximum error disagreement in $H^{\mathcal{T}}_t$ over  $x$.

Take the common upper bound on the left side of the inequality to conclude the proof.
\end{proof}

\begin{lemma}\label{lem:new T-AL}
For any teaching-hypothesis-class $\mathcal{H}^{\mathcal{T}}$, teaching an active learner runs on $\mathcal{H}^{\mathcal{T}}$, where the sequence of candidate hypothesis sets satisfies $\mathrm{Conv}(H^{\mathcal{T}}_{t+1})\subseteq\mathrm{Conv}(H^{\mathcal{T}}_{t})$ with $H^{\mathcal{T}}_1 = \mathcal{H}^{\mathcal{T}}$. Given any $\delta > 0$, with a probability at least $1-\delta$, for any $T\in\mathbb{N}^{+}$ and for all $h,h'\in \mathrm{Conv}(H^{\mathcal{T}}_T)$, the following inequality holds:
\begin{equation}
\left|L_T(h)-L_T(h')-\left(R(h)-R(h')\right)\right|\leq\left(1+\mathcal{L}(h,h')\right)\Delta_T,
\end{equation}
where $\Delta_{T}=\sqrt{(2/T)\log(2T(T+1)|\mathcal{H}^{\mathcal{T}}|^{2}/ \delta)}$.
\end{lemma}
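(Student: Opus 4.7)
The proof plan is to replay the martingale argument of Lemma~\ref{lem:T-AL} verbatim, with Lemma~\ref{lem:max_disagreement} supplying the crucial boundedness input. Fix $T\in\mathbb{N}^{+}$ and any pair $h,h'\in\mathrm{Conv}(H^{\mathcal{T}}_T)$, and define
\[
U_t = \frac{Q_t}{p_t}\bigl[\ell(h(x_t),y_t)-\ell(h'(x_t),y_t)\bigr] - \bigl[R(h)-R(h')\bigr],\qquad t\in[T].
\]
The calculation $\mathbb{E}[U_t\mid U_1,\ldots,U_{t-1}]=0$ only uses $\mathbb{E}[Q_t/p_t\mid\cdot]=1$ and the joint law of $(x_t,y_t)$ under $\mathcal{D}$, so the martingale-difference property carries over without change from the proof of Lemma~\ref{lem:T-AL}.

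The key technical step is to reverify the boundedness $|U_t|\leq 1+\mathcal{L}(h,h')$. By the monotonicity hypothesis $\mathrm{Conv}(H^{\mathcal{T}}_{t+1})\subseteq\mathrm{Conv}(H^{\mathcal{T}}_t)$, we have $h,h'\in\mathrm{Conv}(H^{\mathcal{T}}_t)$ for every $t\leq T$. Lemma~\ref{lem:max_disagreement} then identifies
\[
\max_{h_1,h_2\in\mathrm{Conv}(H^{\mathcal{T}}_t)}\max_y\bigl|\ell(h_1(x_t),y)-\ell(h_2(x_t),y)\bigr| \;=\; p_t,
\]
so $|\ell(h(x_t),y_t)-\ell(h'(x_t),y_t)|\leq p_t$, which bounds the first summand of $U_t$ by $1$. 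The second summand obeys $|R(h)-R(h')|\leq \mathcal{L}(h,h')$ by Jensen's inequality, exactly as before. Rescaling $Z_t = U_t/(1+\mathcal{L}(h,h'))$ yields a $[-1,1]$-bounded martingale-difference sequence, and Azuma's inequality delivers the single-pair tail bound $\delta/(T(T+1)|\mathcal{H}^{\mathcal{T}}|^2)$, which is exactly the target form that the definition of $\Delta_T$ enforces.

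The main obstacle is the union bound. Unlike Lemma~\ref{lem:T-AL}, the set $\mathrm{Conv}(H^{\mathcal{T}}_T)$ is generally uncountable, so pairs cannot be enumerated directly. The plan is to perform the union bound only over vertex pairs in $\mathcal{H}^{\mathcal{T}}\times\mathcal{H}^{\mathcal{T}}$ and over $T\in\mathbb{N}^{+}$, which gives the familiar $|\mathcal{H}^{\mathcal{T}}|^2$ factor and a $1-\delta$ event, and then to transfer this vertex-level concentration to arbitrary convex combinations. Concretely, the sandwich inequalities $\min_j\ell(h_j(x),y)\leq \ell(\sum_j\lambda_j h_j(x),y)\leq \max_j\ell(h_j(x),y)$ derived inside the proof of Lemma~\ref{lem:max_disagreement} from the convexity and monotonicity of $\phi$ show that both $L_T(h)$ and $R(h)$ are pointwise pinched by their vertex values, so the deviation $|L_T(h)-L_T(h')-(R(h)-R(h'))|$ at any convex combination inherits the vertex-level control. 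This transfer step is the most delicate part of the argument, since it must respect the pair-dependent slack $(1+\mathcal{L}(h,h'))\Delta_T$ without introducing any extra factors depending on the cardinality or dimension of the convex hull; a final union over $T\in\mathbb{N}^{+}$ then completes the proof with probability at least $1-\delta$.
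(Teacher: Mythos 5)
Your core argument is exactly the paper's: the paper proves Lemma~\ref{lem:new T-AL} by observing that the martingale-difference construction of Lemma~\ref{lem:T-AL} goes through unchanged once one re-establishes the boundedness step Eq.~(\ref{equ_app:pt}) for pairs in $\mathrm{Conv}(H^{\mathcal{T}}_T)$, which it does via the chain $|\ell(h(x_t),y_t)-\ell(h'(x_t),y_t)|\le \max_{h_1,h_2\in\mathrm{Conv}(H^{\mathcal{T}}_t)}\max_y|\ell(h_1(x_t),y)-\ell(h_2(x_t),y)|=p_t$, the equality being Lemma~\ref{lem:max_disagreement}. Up to and including the Azuma step, your proposal reproduces this faithfully.

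Where you depart from the paper is the union bound, and that is where your proposal has a gap. You correctly flag that $\mathrm{Conv}(H^{\mathcal{T}}_T)$ is uncountable (the paper is silent on this and simply reuses the finite union bound from Lemma~\ref{lem:T-AL}), but your proposed repair does not work as sketched. The sandwich $\min_j\ell(h_j(x),y)\le\ell(\sum_j\lambda_jh_j(x),y)\le\max_j\ell(h_j(x),y)$ pinches each individual loss value, but it does not pinch the centred quantity $L_T(h)-R(h)$ by its vertex counterparts: the maximizing vertex index varies with the sample, so $\sum_k\frac{Q_k}{p_k}\max_j\ell(h_j(x_k),y_k)$ can strictly exceed $\max_jL_T(h_j)$, while $\mathbb{E}\bigl[\min_j\ell(h_j(x),y)\bigr]$ can fall strictly below $\min_jR(h_j)$; the empirical and population averages may thus be pushed toward opposite ends of the interval, and the deviation at a convex combination is not controlled by the vertex-level deviations. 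The pair-dependent slack makes this worse, as you yourself note: $\mathcal{L}(h,h')$ for a pair of convex combinations can be far smaller than $\mathcal{L}(h_i,h_j)$ for any vertex pair (it vanishes as $h\to h'$), so transferring a vertex-level bound would require the allowed slack to shrink, not merely be preserved. A correct repair would need a genuinely different device (e.g., a chaining or covering argument over the hull, or restating the lemma only for the finitely many hypotheses actually generated by Eq.~(\ref{equ:new_hypothesis}), which is all that Theorems~\ref{thm:BTAL-improvement} and~\ref{thm:BTAL-learning guarantees} use); as written, the transfer step is asserted rather than proved, and the asserted mechanism is insufficient.
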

\begin{proof}
Lemma~\ref{lem:new T-AL} complements the scenario of Lemma~\ref{lem:T-AL}, where the satisfying condition is weakened from $H^{\mathcal{T}}_{t+1}\subseteq H^{\mathcal{T}}_{t}$ to $\mathrm{Conv}(H^{\mathcal{T}}_{t+1})\subseteq\mathrm{Conv}(H^{\mathcal{T}}_{t})$. Recalling the process of proving Lemma~\ref{lem:T-AL}, to prove Lemma~\ref{lem:new T-AL}, we only need to show that Eq.~(\ref{equ_app:pt}) still holds.

For all $t\leq T$ and for any  $h,h'\in{\rm Conv}(H^{\mathcal{T}}_T)$, we have:
\begin{equation*}
\begin{split}
& |\ell(h(x_t),y_t)-\ell(h'(x_t),y_t)| \\ 
& \leq  \max_y \left|\ell(h(x_t),y)-\ell(h'(x_t),y)\right| \\ 
&  \leq \max_{h_1,h_2\in {\rm Conv}(H^{\mathcal{T}}_T)}\max_y  \left|\ell(h_1(x_t),y)-\ell(h_2(x_t),y)\right| \\
&  \leq \max_{h_1,h_2\in {\rm Conv}(H^{\mathcal{T}}_t)}\max_y  \left|\ell(h_1(x_t),y)-\ell(h_2(x_t),y)\right| \\
&  = \max_{h_1,h_2\in H^{\mathcal{T}}_t}\max_y  \left|\ell(h_1(x_t),y)-\ell(h_2(x_t),y)\right| \\ & = p_t,
\end{split}
\end{equation*}
where the second to last inequality follows that maximum error disagreement at a certain fixed sample $x$ will not increase w.r.t. Lemma~\ref{lem:max_disagreement}. 
\end{proof}

\begingroup
\def\thetheorem{\ref{thm:BTAL-improvement}}
\begin{theorem}
For any teaching-hypothesis-class $\mathcal{H}^{\mathcal{T}}$, teaching an active learner runs on $\mathcal{H}^{\mathcal{T}}$, where the sequence of candidate hypothesis sets satisfies $\mathrm{Conv}(H^{\mathcal{T}}_{t+1})\subseteq\mathrm{Conv}(H^{\mathcal{T}}_{t})$ with $H^{\mathcal{T}}_1 = \mathcal{H}^{\mathcal{T}}$. For any $t\in\mathbb{N}^{+}$, given any $\delta > 0$, with a probability at least $1-\delta$, for any $\tilde{h}\in \widetilde{H}'_t$, the following inequality holds:
\begin{equation*}
R(h^{\mathcal{T}})-R(\tilde{h})\geq L_{t}(h^{\mathcal{T}})-L_{t}(\tilde{h})-\left(1+\mathcal{F}^{\mathcal{T}}(\tilde{h})\right)\Delta_{t}.
\end{equation*}
\end{theorem}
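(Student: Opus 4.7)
The plan is to recognize Theorem~\ref{thm:BTAL-improvement} as essentially a one-sided version of Lemma~\ref{lem:T-AL} applied to the pair $(h^{\mathcal{T}}, \tilde{h})$, once we have extended that concentration inequality from $H^{\mathcal{T}}_T$ to its convex hull $\mathrm{Conv}(H^{\mathcal{T}}_T)$. The obstacle is precisely this extension: the new hypothesis $\tilde{h}\in \widetilde{H}'_t$ is constructed as a convex combination of elements in $H'_t\subseteq H^{\mathcal{T}}_t$, so it does not literally sit in $H^{\mathcal{T}}_t$, and Lemma~\ref{lem:T-AL} cannot be invoked off the shelf.

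First, I would establish a convex-hull version of Lemma~\ref{lem:T-AL}: for any $h,h'\in \mathrm{Conv}(H^{\mathcal{T}}_T)$, the inequality $|L_T(h)-L_T(h')-(R(h)-R(h'))|\leq (1+\mathcal{L}(h,h'))\Delta_T$ still holds with high probability. Re-inspecting the proof of Lemma~\ref{lem:T-AL}, the only step that uses membership in $H^{\mathcal{T}}_T$ is the boundedness claim $|\ell(h(x_t),y_t)-\ell(h'(x_t),y_t)|\le p_t$, needed to apply Azuma's inequality to the martingale difference sequence $U_t$. Here Lemma~\ref{lem:max_disagreement} is exactly what I need: under the assumption that $\ell(h(x),y)=\phi(yh(x))$ with $\phi$ non-increasing and convex, the maximum error disagreement on any fixed $x$ is preserved when passing to the convex hull, so for $h,h'\in \mathrm{Conv}(H^{\mathcal{T}}_t)\subseteq \mathrm{Conv}(H^{\mathcal{T}}_T)$ we still have $|\ell(h(x_t),y_t)-\ell(h'(x_t),y_t)|\le p_t$. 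The rest of the Azuma argument then carries through verbatim, yielding the extended concentration bound via a union bound over $T$ and over hypothesis pairs.

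Next I would verify that both $h^{\mathcal{T}}$ and $\tilde{h}$ belong to $\mathrm{Conv}(H^{\mathcal{T}}_t)$ so the extended lemma is applicable. For $h^{\mathcal{T}}$, Theorem~\ref{thm:retain the teacher} guarantees (on a high-probability event) that $h^{\mathcal{T}}\in H^{\mathcal{T}}_t\subseteq \mathrm{Conv}(H^{\mathcal{T}}_t)$. For $\tilde{h}\in \widetilde{H}'_t$, the construction in Eq.~(\ref{equ:new_hypothesis}) places $\tilde{h}\in \mathrm{Conv}(H'_t)\subseteq \mathrm{Conv}(H^{\mathcal{T}}_t)$. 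Applying the extended lemma at time $t$ with $h=h^{\mathcal{T}}$ and $h'=\tilde{h}$ gives
\begin{equation*}
\bigl|L_t(h^{\mathcal{T}})-L_t(\tilde{h})-\bigl(R(h^{\mathcal{T}})-R(\tilde{h})\bigr)\bigr| \leq \bigl(1+\mathcal{L}(h^{\mathcal{T}},\tilde{h})\bigr)\Delta_t.
\end{equation*}
Dropping the absolute value on the lower side and using the definition $\mathcal{F}^{\mathcal{T}}(\tilde{h})=\mathcal{L}(h^{\mathcal{T}},\tilde{h})$, I obtain the claimed bound $R(h^{\mathcal{T}})-R(\tilde{h})\geq L_t(h^{\mathcal{T}})-L_t(\tilde{h})-(1+\mathcal{F}^{\mathcal{T}}(\tilde{h}))\Delta_t$.

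The truly substantive step is the first one, so I would isolate it as an auxiliary lemma (essentially the convex-hull analogue of Lemma~\ref{lem:T-AL}) to keep the argument modular; beyond invoking Lemma~\ref{lem:max_disagreement} and Theorem~\ref{thm:retain the teacher}, the remainder is just a rearrangement of the one-sided concentration inequality. The high-probability event is the intersection of the events on which Lemma~\ref{lem:T-AL} and its convex-hull extension hold, which both have mass at least $1-\delta$ by construction via the union bound used in their proofs.
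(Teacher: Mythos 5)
Your proposal is correct and follows essentially the same route as the paper: the paper likewise proves a convex-hull extension of Lemma~\ref{lem:T-AL} (its Lemma~\ref{lem:new T-AL}, whose only new ingredient is exactly the boundedness step via Lemma~\ref{lem:max_disagreement} that you identify), then applies it to the pair $(h^{\mathcal{T}},\tilde{h})$ and drops the absolute value. Your explicit check that $h^{\mathcal{T}}$ lies in $\mathrm{Conv}(H^{\mathcal{T}}_t)$ is a small point of extra care that the paper's proof leaves implicit.
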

\addtocounter{theorem}{-1}
\endgroup
\begin{proof}
Start by assuming that the $1-\delta$ probability event of Lemma \ref{lem:new T-AL} holds.

Let $t = T \in\mathbb{N}^{+}$. For all new hypotheses $\tilde{h}\in \widetilde{H}'_t$, there exists $\tilde{h}\in {\rm Conv}(H^{\mathcal{T}}_t)$ by the definition of $\tilde{h}$ w.r.t. Eq.~(\ref{equ:new_hypothesis}). At $t$-time, we use Lemma~\ref{lem:new T-AL} w.r.t. $h^{\mathcal{T}},\tilde{h}$ to obtain the following inequality:
\begin{equation*}
R(h^{\mathcal{T}})-R(\tilde{h})\geq L_{t}(h^{\mathcal{T}})-L_{t}(\tilde{h})-(1+\mathcal{F}^{\mathcal{T}}(\tilde{h}))\Delta_{t}.
\end{equation*}

From the arbitrariness of $T$, the theorem is proved.
\end{proof}

\begingroup
\def\thetheorem{\ref{thm:BTAL-learning guarantees}}
\begin{theorem}
For any teaching-hypothesis-class $\mathcal{H}^{\mathcal{T}}$, teaching an active learner runs on $\mathcal{H}^{\mathcal{T}}$. If the self-improvement of teaching is applied, given any $\delta > 0$, with a probability at least $1-\delta$, for any $T\in\mathbb{N}^{+}$, the following holds: 1) for any $t\in[T]$, holds $h^{\mathcal{T}}_t\in H^{\mathcal{T}}_t$; 

2) the generalization error holds
\begin{equation*}
R(\widehat{h}_T) \leq R(h^{*})+\left(2+\mathcal{F}^{\mathcal{T}}_{T-1}(\widehat{h}_{T-1})+\mathcal{F}^{\mathcal{T}}_{T-1}(\widehat{h}_T)\right)\Delta_{T-1} + \epsilon_{T-1};
\end{equation*}
3) if the learning problem has disagreement coefficient $\theta$, the label complexity is at most
\[
\tau_T  \leq 2 \theta\left(2TR(h^{*})+\big(3+\mathcal{F}^{\mathcal{T}}_{T-1}(\widehat{h}_{T-1})\big) O(\sqrt{T})+2T\epsilon_{T-1}\right).\]
\end{theorem}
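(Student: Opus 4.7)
The plan is to mirror the proof of Theorem \ref{thm:TAL-learning guarantees}, with the crucial upgrade that Lemma \ref{lem:T-AL} is replaced by its convex-hull version Lemma \ref{lem:new T-AL}. Because the self-improvement step generates new hypotheses inside $\mathrm{Conv}(H^{\mathcal{T}}_t)$ and augments $H'_t$ with them, we automatically obtain $\mathrm{Conv}(H^{\mathcal{T}}_{t+1})\subseteq\mathrm{Conv}(H^{\mathcal{T}}_t)$, so the premise of Lemma \ref{lem:new T-AL} is satisfied. I will then propagate the three conclusions along a time-varying teaching hypothesis sequence $\{h^{\mathcal{T}}_t\}$, using the fact (Corollary \ref{cor:self-teacher}) that $R(h^{\mathcal{T}}_t)\le R(h^{*})+\epsilon_t$ with a monotonically improving $\epsilon_t$.

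For statement (1), I would argue by induction on $t$. At time $t$, $h^{\mathcal{T}}_t$ is by construction the minimizer of $R$ over $\bigcup_{k=1}^t H^{\mathcal{T}}_k$, hence in particular over $H^{\mathcal{T}}_t$, so $R(h^{\mathcal{T}}_t)-R(\widehat{h}_t)\le 0$. Applying Lemma \ref{lem:new T-AL} to the pair $(h^{\mathcal{T}}_t,\widehat{h}_t)\in H^{\mathcal{T}}_t\subseteq\mathrm{Conv}(H^{\mathcal{T}}_t)$ gives
\[
L_t(h^{\mathcal{T}}_t)-L_t(\widehat{h}_t)\le\bigl(1+\mathcal{F}^{\mathcal{T}}_t(\widehat{h}_t)\bigr)\Delta_t,
\]
so $h^{\mathcal{T}}_t$ survives the pruning rule in Eq.~(\ref{equ:hyp-pruning}), i.e.\ it lies in $H'_t\subseteq H^{\mathcal{T}}_{t+1}$. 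Combined with the self-improvement rule (which only replaces $h^{\mathcal{T}}_t$ by a $\tilde h$ that is itself in $H^{\mathcal{T}}_{t+1}$), this closes the induction.

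For statement (2), the argument parallels the first part of the proof of Theorem \ref{thm:TAL-learning guarantees}. Both $\widehat{h}_T$ and $h^{\mathcal{T}}_{T-1}$ lie in $H^{\mathcal{T}}_{T-1}\subseteq\mathrm{Conv}(H^{\mathcal{T}}_{T-1})$, so Lemma \ref{lem:new T-AL} yields
\[
R(\widehat{h}_T)-R(h^{\mathcal{T}}_{T-1})\le L_{T-1}(\widehat{h}_T)-L_{T-1}(h^{\mathcal{T}}_{T-1})+\bigl(1+\mathcal{F}^{\mathcal{T}}_{T-1}(\widehat{h}_T)\bigr)\Delta_{T-1}.
\]
I then upper-bound $L_{T-1}(\widehat{h}_T)$ by $L_{T-1}(\widehat{h}_{T-1})+(1+\mathcal{F}^{\mathcal{T}}_{T-1}(\widehat{h}_{T-1}))\Delta_{T-1}$ using the pruning rule, use the optimality $L_{T-1}(\widehat{h}_{T-1})\le L_{T-1}(h^{\mathcal{T}}_{T-1})$, and finally substitute $R(h^{\mathcal{T}}_{T-1})\le R(h^{*})+\epsilon_{T-1}$ from Corollary \ref{cor:self-teacher}. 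For statement (3), I follow the label-complexity computation in the proof of Theorem \ref{thm:TAL-learning guarantees}, replacing $h^{\mathcal{T}}$ by $h^{\mathcal{T}}_t$ in the ball $B(h^{\mathcal{T}}_t,r_t)$: for any $h\in H^{\mathcal{T}}_t$, the new-disagreement bound $\rho(h,h^{\mathcal{T}}_t)\le R(h)+R(h^{\mathcal{T}}_t)$ together with the per-time generalization bound of part (2) gives
\[
r_t=2R(h^{\mathcal{T}}_t)+\bigl(2+\mathcal{F}^{\mathcal{T}}_{t-1}(\widehat{h}_{t-1})+\max_{h\in H^{\mathcal{T}}_t}\mathcal{F}^{\mathcal{T}}_{t-1}(h)\bigr)\Delta_{t-1}.
\]
Invoking the (assumed stable) disagreement coefficient $\theta$, summing $\mathbb{E}[p_t\mid\mathcal{O}_{t-1}]\le 2\theta r_t$ over $t=1,\dots,T$, using $\sum_t\Delta_{t-1}=O(\sqrt T)$, monotonicity $R(h^{\mathcal{T}}_t)\le R(h^{*})+\epsilon_{T-1}$, and the crude bound $\max_h \mathcal{F}^{\mathcal{T}}_{t-1}(h)\le 1$ yields the stated label complexity.

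The main obstacle is not the algebra but two bookkeeping subtleties. First, one must verify that Lemma \ref{lem:new T-AL} genuinely applies at every step: the pruning-plus-augmentation procedure must preserve $\mathrm{Conv}(H^{\mathcal{T}}_{t+1})\subseteq\mathrm{Conv}(H^{\mathcal{T}}_t)$, which hinges on $\widetilde H'_t\subseteq\mathrm{Conv}(H'_t)$ and on the fact that pruning only removes hypotheses. Second, the time-varying teaching hypothesis $h^{\mathcal{T}}_t$ means that the disagreement feedback $\mathcal{F}^{\mathcal{T}}_t(\cdot)$ driving the slack at time $t$ is different from the feedback used at time $t-1$; one must be careful to apply the inductively correct $h^{\mathcal{T}}_{t-1}$ inside the pruning rule when upper-bounding $L_{T-1}(\widehat{h}_T)$, and to use $R(h^{\mathcal{T}}_{T-1})\le R(h^{*})+\epsilon_{T-1}$ with the matching index $T-1$ to recover the final $\epsilon_{T-1}$ term rather than $\epsilon_T$ or the original $\epsilon$.
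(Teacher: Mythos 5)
Your parts (1) and (3) track the paper's argument closely and are sound: the induction for $h^{\mathcal{T}}_t\in H^{\mathcal{T}}_t$ (survive pruning via Lemma~\ref{lem:new T-AL} and the optimality of $h^{\mathcal{T}}_t$ over $H^{\mathcal{T}}_t$, then case-split on whether self-improvement replaces the teacher) is exactly what the paper does, and the label-complexity computation is the same adaptation of Theorem~\ref{thm:TAL-learning guarantees} that the paper itself omits as ``similar.''

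There is, however, a genuine gap in your part (2). You assert that $\widehat{h}_T\in H^{\mathcal{T}}_{T-1}$ and then bound $L_{T-1}(\widehat{h}_T)\le L_{T-1}(\widehat{h}_{T-1})+\bigl(1+\mathcal{F}^{\mathcal{T}}_{T-1}(\widehat{h}_{T-1})\bigr)\Delta_{T-1}$ ``using the pruning rule.'' But $\widehat{h}_T$ is the $L_T$-minimizer over $H^{\mathcal{T}}_T=H'_{T-1}\cup\widetilde{H}'_{T-1}$, and if it happens to lie in $\widetilde{H}'_{T-1}$ — a hypothesis freshly generated from $\mathrm{Conv}(H'_{T-1})$ \emph{after} the pruning step — then it was never tested against the pruning inequality at time $T-1$ and is not even an element of $H^{\mathcal{T}}_{T-1}$, so that upper bound on $L_{T-1}(\widehat{h}_T)$ is unavailable. (Lemma~\ref{lem:new T-AL} still applies to it, since it lies in the convex hull, but the empirical-error comparison does not.) The paper closes exactly this hole with a separate case: writing $\tilde h=\sum_j\lambda_j h_j$ with $h_j\in H'_{T-1}$ and using $\ell(h(x),y)=\phi(yh(x))$ with $\phi$ convex and non-increasing, it shows $R(\tilde h)\le\max_{h_j\in H'_{T-1}}R(h_j)$, and then transfers the bound already established for members of $H'_{T-1}$ to the new hypotheses. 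Your proof needs this second case (or an equivalent argument) to cover $\widehat{h}_T\in\widetilde{H}'_{T-1}$; without it the generalization bound is only proved when $\widehat{h}_T$ survives from the previous round.
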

\addtocounter{theorem}{-1}
\endgroup
\begin{proof}
Start by assuming that the $1-\delta$ probability event of Lemma~\ref{lem:new T-AL} holds. 

We first show that $h^{\mathcal{T}}_t\in  H^{\mathcal{T}}_t$  for any $t\in[T]$ by mathematical induction. It obviously applies to $t=1$. Now suppose it holds for $t=k$, that is, $h^{\mathcal{T}}_k\in H^{\mathcal{T}}_k$, let us prove that it is also true for $t=k+1$. By Lemma~\ref{lem:new T-AL}, there exists
\begin{equation*}
\begin{split}
& L_k(h^{\mathcal{T}}_k)-L_k(\widehat{h}_k) \\
& \leq R(h^{\mathcal{T}}_k) - R(\widehat{h}_k) +(1+\mathcal{L}(h^{\mathcal{T}}_k,\widehat{h}_k))\Delta_k \\
& \leq (1+ \mathcal{F}^{\mathcal{T}}_{k}(\widehat{h}_k))\Delta_k.
\end{split}
\end{equation*}
Therefore, we have $L_k(h^{\mathcal{T}}_k) \leq L_k(\widehat{h}_k)+(1+\mathcal{F}^{\mathcal{T}}_{k}(\widehat{h}_k))\Delta_k$, which shows that the teaching hypothesis $h^{\mathcal{T}}_k$ satisfies the hypothesis pruning rule, i.e., $h^{\mathcal{T}}_k\in H'_{k}$. If the self-improvement of teaching strategy does not find a better teaching hypothesis, then $h^{\mathcal{T}}_{k+1} = h^{\mathcal{T}}_{k} \in H'_{k}$. If the self-improvement of teaching strategy finds a better teaching hypothesis, then $h^{\mathcal{T}}_{k+1} \in \widetilde{H}'_{k}$. In both cases, there is always holds $h^{\mathcal{T}}_{k+1} \in H'_{k} \cup \widetilde{H}'_{k} = H^{\mathcal{T}}_{k+1}$. Thus $h^{\mathcal{T}}_t\in H^{\mathcal{T}}_{t}$ holds for any $t\in[T]$ by the mathematical induction.


Next, we give the bound of $R(\widehat{h}_T)$. Since $\widehat{h}_T \in H^{\mathcal{T}}_{T} =
H'_{T-1} \cup \widetilde{H}'_{T-1}$, we consider $h\in H'_{T-1}$ and $h\in \widetilde{H}'_{T-1}$ separately.

Assuming that $\widehat{h}_T \in H'_{T-1}$, we give an upper bound on the generalization error for any hypothesis $h$ in $H'_{T-1}$. Since $h\in H'_{T-1} \subseteq H^{\mathcal{T}}_{T-1}$, by Lemma~\ref{lem:new T-AL}, we have:
\begin{equation*}
\begin{split}
&   R(h) - R(h^{\mathcal{T}}_{T-1})\\
& \leq L_{T-1}(h) - L_{T-1}(h^{\mathcal{T}}_{T-1}) +(1+\mathcal{L}(h,h^{\mathcal{T}}_{T-1}))\Delta_{T-1} \\
& \leq L_{T-1}(\widehat{h}_{T-1})+ (1+\mathcal{F}^{\mathcal{T}}_{T-1}(\widehat{h}_{T-1}))\Delta_{T-1} -  L_{T-1}(\widehat{h}_{T-1}) +(1+\mathcal{F}^{\mathcal{T}}_{T-1}(\widehat{h}_{T}))\Delta_{T-1} \\
& \leq \left(2+\mathcal{F}^{\mathcal{T}}_{T-1}(\widehat{h}_{T-1})+\mathcal{F}^{\mathcal{T}}_{T-1}(\widehat{h}_{T})\right)\Delta_{T-1},
\end{split}
\end{equation*}
where $\mathcal{F}^{\mathcal{T}}_t(\cdot):= \mathcal{L}(h^{\mathcal{T}}_t,\cdot)$ denotes the disagreement feedback with latest teaching hypothesis $h^{\mathcal{T}}_t$ at $t$-time.

Assuming that $h\in \widetilde{H}'_{T-1}$, the second inequality above is no longer true because $h$ does not necessarily satisfy the hypothesis pruning rule. According to the self-improvement of teaching strategy w.r.t. Section~\ref{subsec:Self-improvement of Teaching}, we can express $h$ as $h=\sum_{j=1}^{|H'_{T-1}|} \lambda_j h_j$, where $h_j \in H'_{T-1}$ and $\sum_j^m \lambda_j=1$ with $\lambda_j\in[0,1]$. Based on the additional assumptions of the loss function, the following inequality portrays the upper bound of $R(h)$.
\begin{equation*}
\begin{split}
R(h) &=\mathop{\mathbb{E}}\limits_{(x,y) \sim  \mathcal{D}}[\ell(\sum_{j=1}^{|H'_{T-1}|} \lambda_j h_j(x),y)]  \\ 
&= \mathop{\mathbb{E}}\limits_{(x,y) \sim  \mathcal{D}}[\phi(y\sum_{j=1}^{|H'_{T-1}|} \lambda_j h_j(x))] \\ 
& \leq\sum_{j=1}^{|H'_{T-1}|} \lambda_j \mathop{\mathbb{E}}\limits_{(x,y)\sim  \mathcal{D}}[\phi(yh_j(x))] \\
& \leq \max_{h_j \in H'_{T-1}} \mathop{\mathbb{E}}\limits_{(x,y)\sim  \mathcal{D}}[\ell(h_j(x),y)]\\
& = \max_{h_j \in H'_{T-1}} R(h_j).
\end{split}
\end{equation*}
The above inequality shows that the generalization error of the hypothesis in $\widetilde{H}'_{T}$ will not be greater than the generalization error of the hypothesis in $H'_T$. 

Thus, for any $T\in\mathbb{N}^{+}$, the bound of generalization error for $\widehat{h}_T$ satisfies the following inequality:
\begin{equation*}
\begin{split}
 R(\widehat{h}_T)& \leq R(h^{\mathcal{T}}_{T-1})+\left(2+\mathcal{F}^{\mathcal{T}}_{T-1}(\widehat{h}_{T-1})+\mathcal{F}^{\mathcal{T}}_{T-1}(\widehat{h}_{T})\right)\Delta_{T-1}\\
 & \leq R(h^{*})+\left(2+\mathcal{F}^{\mathcal{T}}_{T-1}(\widehat{h}_{T-1})+\mathcal{F}^{\mathcal{T}}_{T-1}(\widehat{h}_T)\right)\Delta_{T-1} + \epsilon_{T-1},
\end{split}
\end{equation*}
where the last inequality comes from the definition of $\epsilon_{T-1}$ and Corollary~\ref{cor:self-teacher}.

The proof of label complexity is similar to Theorem~\ref{thm:TAL-learning guarantees}, which is omitted here.
\end{proof}

\bibliography{example_paper}

\end{document}